\newcolumntype{?}{!{\vrule width 1pt}}
\title{\papertitle}
\author{%
  Richard Nock \\
    Google Research \\
    \texttt{richardnock@google.com}
    \And
    Yishay Mansour \\
    Tel Aviv University\\
    Google Research \\
    \texttt{mansour@google.com}
}
\begin{document}

\maketitle

\begin{abstract}
  Boosting is a highly successful ML-born optimization setting in which one is required to computationally efficiently learn arbitrarily good models based on the access to a weak learner oracle, providing classifiers performing at least slightly differently from random guessing. A key difference with gradient-based optimization is that boosting's original model does not requires access to first order information about a loss, yet the decades long history of boosting has quickly evolved it into a first order optimization setting -- sometimes even wrongfully \textit{defining} it as such. Owing to recent progress extending gradient-based optimization to use only a loss' zeroth ($0^{th}$) order information to learn, this begs the question: what loss functions can be efficiently optimized with boosting and what is the information really needed for boosting to meet the \textit{original} boosting blueprint's requirements?

We provide a constructive formal answer essentially showing that \textit{any} loss function can be optimized with boosting and thus boosting can achieve a feat not yet known to be possible in the classical $0^{th}$ order setting, since loss functions are not required to be be convex, nor differentiable or Lipschitz -- and in fact not required to be continuous either. Some tools we use are rooted in quantum calculus, the mathematical field -- not to be confounded with quantum computation -- that studies calculus without passing to the limit, and thus without using first order information.

\end{abstract}

\section{Introduction}
\label{sec:intro}

In ML, zeroth order optimization has been devised as an alternative to techniques that would otherwise require access to $\geq 1$-order information about the loss to minimize, such as gradient descent (stochastic or not, constrained or not, etc., see Section \ref{sec:related}). Such approaches replace the access to a so-called \textit{oracle} providing derivatives for the loss at hand, operations that can be consuming or not available in exact form in the ML world, by the access to a cheaper function value oracle, providing loss values at queried points.

Zeroth order optimization has seen a considerable boost in ML over the past years, over many settings and algorithms, yet, there is one foundational ML setting and related algorithms that, to our knowledge, have not yet been the subject of investigations: boosting \cite{kTO,kvAI}.
Such a question is very relevant: boosting has quickly evolved as a technique requiring first-order information about the loss optimized \cite[Section 10.3]{bLTF}, \cite[Section 7.2.2]{mrtFO} \cite{wvSO}. It is also not uncommon to find boosting reduced to this first-order setting \cite{bcrAG}. However, originally, the boosting model did not mandate the access to any first-order information about the loss, rather requiring access to a weak learner providing classifiers at least slightly different from random guessing \cite{kvAI}. In the context of zeroth-order optimization gaining traction in ML, it becomes crucial to understand not just whether differentiability is necessary for boosting, but more generally what are loss functions that can be boosted with a weak learner and \textit{in fine} where boosting stands with respect to recent formal progress on lifting gradient descent to zeroth-order optimisation.

\textbf{In this paper}, we settle the question: we design a formal boosting algorithm for any loss function whose set of discontinuities has zero Lebesgue measure. With traditional floating point encoding (e.g. float64), any stored loss function would \textit{de facto} meet this condition; mathematically speaking, we encompass losses that are not necessarily convex, nor differentiable or Lipschitz. This is a key difference with classical zeroth-order optimization results where the algorithms are zeroth-order \textit{but} their proof of convergence makes various assumptions about the loss at hand, such as convexity, differentiability (once or twice), Lipschitzness, etc. . Our proof technique builds on a simple boosting technique for convex functions that relies on an order-one Taylor expansion to bound the progress between iterations \cite{nwLO}. Using tools from quantum calculus\footnote{Calculus "without limits" \cite{kcQC} (thus without using derivatives), not to be confounded with calculus on quantum devices.}, we replace this progress using $v$-derivatives and a quantity related to a generalisation of the Bregman information \cite{bmdgCW}. The boosting rate involves the classical weak learning assumption's advantage over random guessing and a new parameter bounding the ratio of the expected weights (squared) over a generalized notion of curvature involving $v$-derivatives. Our algorithm, which learns a linear model, introduces notable generalisations compared to the AdaBoost / gradient boosting lineages, chief among which the computation of acceptable \textit{offsets} for the $v$-derivatives used to compute boosting weights, offsets being zero for classical gradient boosting. To preserve readability and save space, all proofs and additional information are postponed to an \supplementLong.
\section{Related work}
\label{sec:related}


Over the past years, ML has seen a substantial push to get the cheapest optimisation routines, in general batch \cite{clmyAZ}, online \cite{hmmrZO}, distributed \cite{aptDZ}, adversarial \cite{cwzOT,clxllhcZZ} or bandits settings \cite{aptEH} or more specific settings like projection-free \cite{ghCS,htcAS,szkTG} or saddle-point optimisation \cite{fvpEA,mcmsrZO}. We summarize several dozen recent references in Table \ref{tab:features} in terms of assumptions for the analysis about the loss optimized, provided in \supplement, Section \ref{sec-sup-sum}. \textit{Zeroth-order} optimization reduces the information available to the learner to the "cheapest" one which consists in (loss) function values, usually via a so-called function value \textit{oracle}. However, as Table \ref{tab:features} shows, the loss itself is always assumed to have some form of "niceness" to study the algorithms' convergence, such as differentiability, Lipschitzness, convexity, etc. . Another quite remarkable phenomenon is that throughout all their diverse settings and frameworks, not a single one of them addresses boosting. Boosting is however a natural candidate for such investigations, for two reasons. First, the most widely used boosting algorithms are first-order information hungry \cite{bLTF,mrtFO,wvSO}: they require access to derivatives to compute examples' weights and classifiers' leveraging coefficients. Second and perhaps most importantly, unlike other optimization techniques like gradient descent, the original boosting model \textit{does not} mandate the access to a first-order information oracle to learn, but rather to a weak learning oracle which supplies classifiers performing slightly differently from random guessing \cite{kTO,kvAI}. Only few approaches exist to get to "cheaper'' algorithms relying on less assumptions about the loss at hand, and to our knowledge do not have boosting-compliant convergence proofs, as for example when alleviating convexity \cite{cefNC,ppIN} or access to gradients of the loss \cite{wrTC}. Such questions are however important given the early negative results on boosting convex potentials with first-order information \cite{lsRC} and the role of the classifiers in the negative results \cite{mnwRC}.

Finally, we note that a rich literature has developed in mathematics as well for derivative-free optimisation \cite{lmwDF}, yet methods would also often rely on assumptions included in the three above (\textit{e.g.} \cite{nsRG}).
It must be noted however that derivative-free optimisation has been
implemented in computers for more than seven decades
\cite{fmNS}.

\section{Definitions and notations}
\label{sec:defs}

The following shorthands are used: $[n]
\defeq \{1, 2, ..., n\}$ for $n \in \mathbb{N}_*$, $z \cdot [a, b]
\defeq [\min\{za,zb\}, \max\{za,zb\}]$ for $z \in \mathbb{R}, a \leq  b \in \mathbb{R}$. 
In the batch supervised
learning setting, one is given a training set of $m$ examples
$S \defeq \{({\ve{x}}_i, y_i), i \in [m]\}$, where ${\ve{x}}_i
\in {\mathcal{X}}$ is an observation
(${\mathcal{X}}$ is called the domain: often,  ${\mathcal{X}}\subseteq {\mathbb{R}}^d$) and $y_i
\in \mathcal{Y} \defeq \{-1,1\}$ is a label, or class. 
We study the empirical convergence of boosting, which requires fast convergence on training. Such a setting is standard in zeroth order optimization \cite{nsRG}. Also, investigating generalization would entail specific design choices about the loss at hand and thus would restrict the scope of our result (see \textit{e.g.} \cite{bmRA}). 
The objective
is to learn a \textit{classifier}, \textit{i.e.} a function $h
: \mathcal{X} \rightarrow \mathbb{R}$ which belongs to a given
set $\mathcal{H}$. The goodness of fit of some $h$ on $S$ is
evaluated from a given function $F:\mathbb{R} \rightarrow \mathbb{R}$
called a loss function, whose expectation on training is sought to be minimized:
\negspace
\begin{eqnarray*}
F(S, h)& \defeq & \E_{i \sim [m]} [F(y_ih(\ve{x}_i))] .
\end{eqnarray*}
The set of most popular losses comprises convex functions: the exponential loss
($\expsur(z) \defeq \exp(-z)$), the logistic loss ($\logsur(z) \defeq
\log(1+\exp(-z))$), the square loss ($\sqsur(z) \defeq (1-z)^2$), the
Hinge loss ($\hingesur(z) \defeq \max\{0, 1-z\}$). These are surrogate losses because they all define
upperbounds of the 0/1-loss ($\zosur(z) \defeq 1_{z\leq 0}$, "$1$''
being the indicator variable).

Our ML setting is that of boosting \cite{kvAI}. It consists in having primary access to a weak learner $\weaklearner$ that when called, provides so-called weak hypotheses, weak because barely anything is assumed in terms of classification performance relatively to the sample over which they were trained. Our goal is to devise a so-called "boosting" algorithm that can take any loss $F$
\textit{as input} and training sample $S$ and a target loss value $F_*$ and after some $T$ calls to the weak learner crafts a classifier $H_T$ satisfying $F(S, H_T) \leq F_*$, where $T$ depends on various parameters of the ML problem. Our boosting architecture is a linear model: $H_T \defeq \sum_t \alpha_t h_t$ where each $h_t$ is an output from the weak learner and leveraging coefficients $\alpha_t$ have to be computed during boosting. Notice that this is substantially more general than the classical boosting formulation where the loss would be fixed or belong to a restricted subset of functions.

\section{$v$-derivatives and Bregman secant distortions}
\label{sec:bregsec}

Unless otherwise stated, in this Section, $F$ is a function defined over $\mathbb{R}$.
\begin{definition}\label{defVDER}\cite{kcQC}
For any $z,v \in \mathbb{R}$, we let $\diffloc{v}{F}(z) \defeq (F(z+v) - F(z))/v$ denote the $v$-derivative of $F$ in $z$.
\end{definition}
\bignegspace
\begin{figure}[t]
\begin{center}
\begin{tabular}{cc} 
\hspace{-0.3cm}\includegraphics[trim=120bp 70bp 140bp
100bp,clip,width=0.50\linewidth]{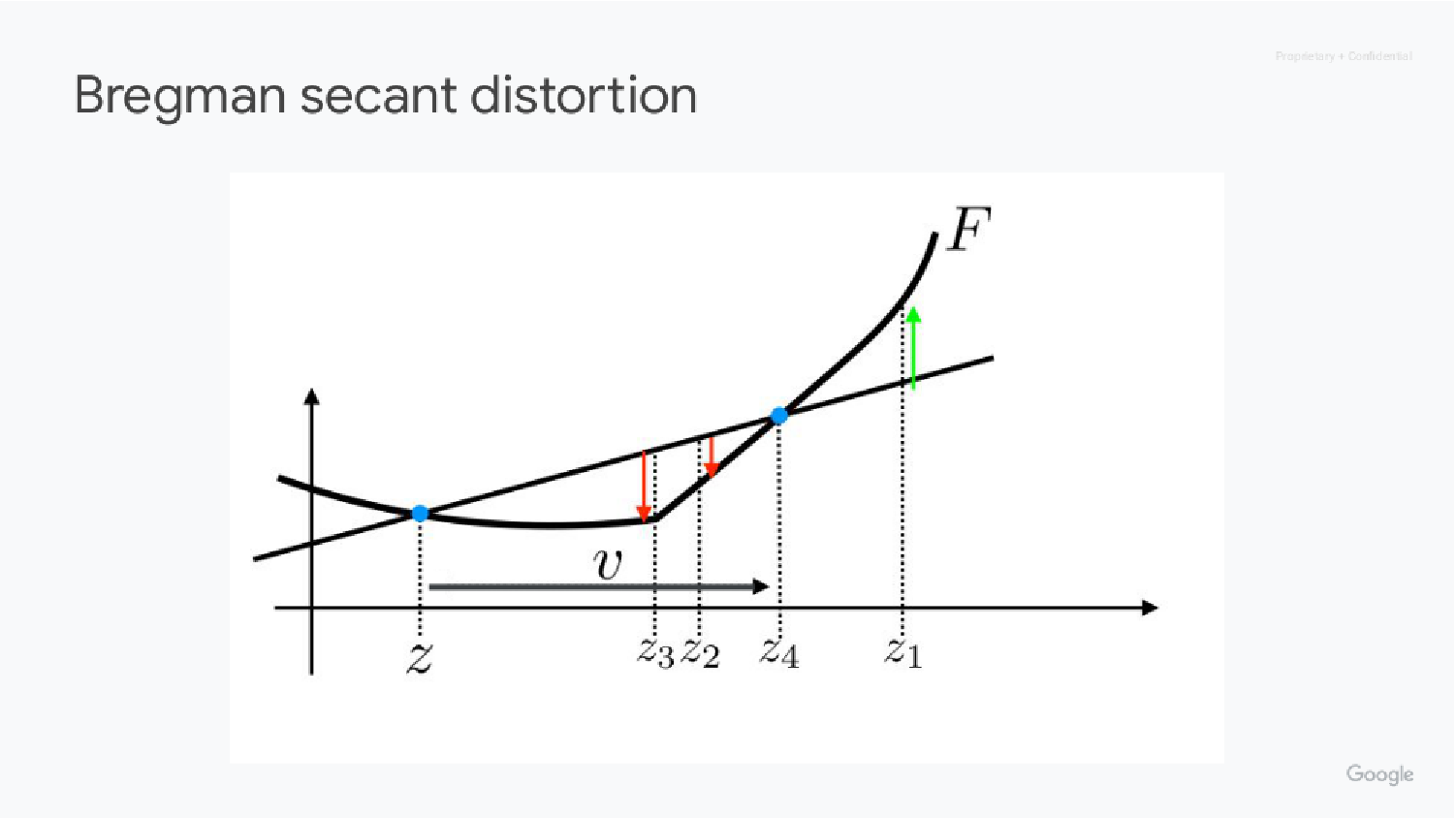} \hspace{-0.3cm} & \hspace{-0.3cm}
\includegraphics[trim=100bp 50bp 160bp
120bp,clip,width=0.50\linewidth]{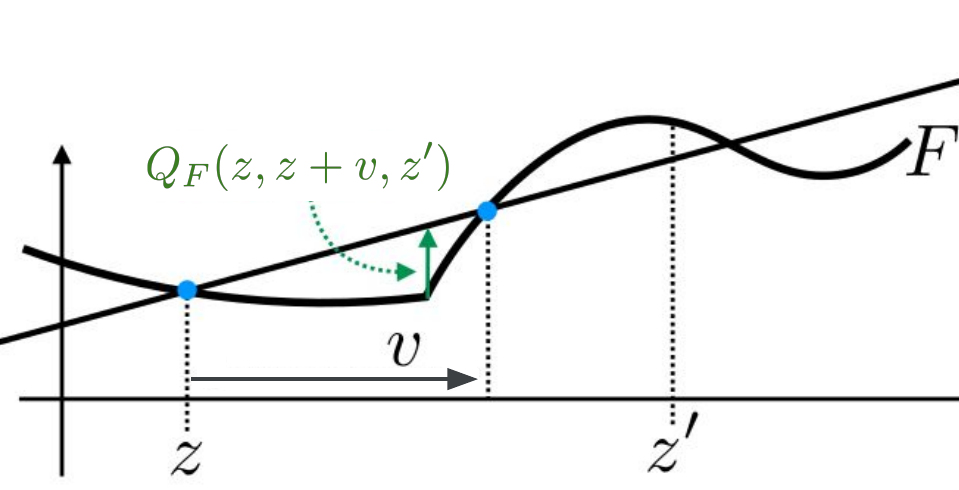}
\end{tabular}
\bignegspace
\bignegspace
\bignegspace
\end{center}
\caption{\textit{Left}: value of $\bregmansec{F}{v}(
  z'\|z)$ for convex $F$, $v \defeq z_4 - z$ and various $z'$ (colors), for which the Bregman Secant distortion is positive
  ($z'=z_1$, green), negative ($z'=z_2$, red), minimal ($z'=z_3$) or null
  ($z'=z_4, z$). \textit{Right}: depiction of $Q_{F}(z,z+v, z')$ for non-convex $F$
  (Definition \ref{defBI}).}
  \label{f-SD}
\bignegspace
\bignegspace
\end{figure}
This expression, which gives the classical derivative when the \textit{offset}
$v\rightarrow 0$, is called the \textit{$h$-derivative} in quantum calculus
\cite[Chapter 1]{kcQC}. We replaced the notation for the risk of
confusion with classifiers. Notice that the $v$-derivative is just the 
slope of the secant that passes through points $(z, F(z))$ and $(z+v,
F(z+v))$ (Figure \ref{f-SD}).
Higher order $v$-derivatives can be defined with the same offset used several times
\cite{kcQC}. Here, we shall need a more general definition that
accommodates for variable offsets.
    \begin{definition}\label{defsecZ}
Let $v_1, v_2, ..., v_n \in \mathbb{R}$ and $\mathcal{V} \defeq \{v_1, v_2, ..., v_n\}$ and $z \in \mathbb{R}$. The $\mathcal{V}$-derivative
$\diffloc{\mathcal{V}}{F}$ is:
\begin{eqnarray*}
  \diffloc{\mathcal{V}}{F}(z) & \defeq & \left\{
                                         \begin{array}{rcl}
                                           F(z) & \mbox{ if } & \mathcal{V} = \emptyset\\
                                           \diffloc{v_1}{F}(z) & \mbox{
                                           if } & \mathcal{V}
                                                  =\{v_1\}\\
                                           \diffloc{\{v_n\}}{(\diffloc{\mathcal{V}\backslash
                                         \{v_n\} }{F})}(z) &
                                                            \multicolumn{2}{l}{\mbox{ otherwise}}
                                           \end{array}
                                         \right. .
\end{eqnarray*}
If $v_i = v, \forall i \in [n]$ then we write $\difflocorder{n}{v}{F}(z) \defeq \diffloc{\mathcal{V}}{F}(z)$.
\end{definition}
\bignegspace
In the \supplement, Lemma \ref{lemCombSec} computes the unravelled expression of $\diffloc{\mathcal{V}}{F}(z)$, showing that the order
of the elements in $\mathcal{V}$ does not matter; $n$ is called the order of the $\mathcal{V}$-derivative.

We can now define a generalization of Bregman divergences called \textit{Bregman Secant
  distortions}. 
\begin{definition}
For any $z, z', v\in \mathbb{R}$, the Bregman Secant distortion $\bregmansec{F}{v}(
  z'\|z)$ with generator $F$ and offset $v$ is:
\begin{eqnarray*}
\bregmansec{F}{v}(
  z'\|z) & \defeq & F(z') - F(z) - 
  (z'-z)\diffloc{v}{F}(z).
  \end{eqnarray*}
\end{definition}
\bignegspace
Even if
$F$ is convex, the distortion is not necessarily positive, though it
is lowerbounded (Figure \ref{f-SD}). There is an intimate relationship between the Bregman Secant distortions and Bregman divergences. We shall use a definition slightly more general than the original one when $F$ is differentiable \cite[eq. (1.4)]{bTR}, introduced in information geometry \cite[Section 3.4]{anMO} and recently reintroduced in ML \cite{bmnLW}.
\begin{definition}
The Bregman divergence with generator $F$ (scalar, convex) between $z'$ and $z$ is $D_F(z'\|z) \defeq F(z') + F^\star(z) - z'z$, where $F^\star(z) \defeq \sup_t tz - F(t)$ is the convex conjugate of $F$.
\end{definition}
\bignegspace
We state the link between $\bregmansec{F}{v}$ and $D_F$ (proof omitted).
\begin{lemma}
Suppose $F$ strictly convex differentiable. Then $\lim_{v\rightarrow 0} \bregmansec{F}{v}(z'\|z) = D_F(z'\| F'(z))$.
\end{lemma}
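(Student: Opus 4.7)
The plan is straightforward and rests on two ingredients: the behavior of the $v$-derivative as $v \to 0$ and the Legendre--Fenchel identity for strictly convex differentiable functions.

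First I would pass to the limit inside $\bregmansec{F}{v}(z'\|z) = F(z') - F(z) - (z'-z)\diffloc{v}{F}(z)$. Since $F$ is differentiable at $z$, the definition of the derivative yields $\lim_{v\to 0} \diffloc{v}{F}(z) = \lim_{v\to 0} (F(z+v)-F(z))/v = F'(z)$, and all other terms are independent of $v$, so
\begin{eqnarray*}
\lim_{v\to 0} \bregmansec{F}{v}(z'\|z) & = & F(z') - F(z) - (z'-z) F'(z).
\end{eqnarray*}

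Next I would unfold the target $D_F(z'\|F'(z)) = F(z') + F^\star(F'(z)) - z' F'(z)$ and match the two expressions; this reduces the claim to
\begin{eqnarray*}
F^\star(F'(z)) & = & z F'(z) - F(z).
\end{eqnarray*}
This is the classical Legendre identity. Because $F$ is strictly convex and differentiable, the map $t\mapsto t F'(z) - F(t)$ is concave and has a unique critical point, given by the first-order condition $F'(t) = F'(z)$; strict convexity forces $t = z$, so the supremum defining $F^\star(F'(z)) = \sup_t (t F'(z) - F(t))$ is attained exactly at $t = z$, which yields the identity. Substituting back gives $\lim_{v\to 0}\bregmansec{F}{v}(z'\|z) = D_F(z'\|F'(z))$, as claimed.

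The only subtle point is the Legendre identity step, and it is only mildly subtle: strict convexity is needed to guarantee that $z$ is the unique maximizer (so that the first-order condition is both necessary and sufficient), while differentiability is needed to write down the first-order condition at all. Everything else is a direct computation, so I do not expect a real obstacle.
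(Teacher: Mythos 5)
Your proof is correct. The paper omits its own proof of this lemma, but your argument — passing to the limit $\diffloc{v}{F}(z)\to F'(z)$ to recover the classical Bregman form, then using the Legendre identity $F^\star(F'(z)) = zF'(z) - F(z)$ (justified by strict convexity and differentiability) to match it with $D_F(z'\|F'(z))$ — is exactly the standard computation the authors evidently had in mind.
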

\bignegspace
Relaxed forms of Bregman divergences have been introduced in information geometry \cite{nnTB}.
  \begin{definition}\label{defBI}
For any $a, b, \alpha\in \mathbb{R}$, denote for short $\mathbb{I}_{a,b} \defeq
[\min\{a, b\}, \max\{a, b\}]$ and $(uv)_\alpha \defeq \alpha u + (1-\alpha)v$. The \textit{Optimal Bregman
Information} (OBI) of $F$ defined by triple $(a,b,c) \in \mathbb{R}^3$ is:
\begin{eqnarray}
  Q_{F}(a, b, c) \defeq \max_{\alpha: (ab)_\alpha \in \mathbb{I}_{a,c}}  \{(F(a)F(b))_\alpha - F((ab)_\alpha)\}. \label{eqBI}
\end{eqnarray}
\end{definition}
As represented in Figure \ref{f-SD} (right), the OBI is obtained by drawing the line passing through $(a,F(a))$ and $(b,F(b))$ and then, in the interval $\mathbb{I}_{a,c}$, look for the maximal difference between the line and $F$. We note that $Q_{F}$ is non negative because $a \in \mathbb{I}_{a,c}$ and for the choice $\alpha = 1$, the RHS in \eqref{eqBI} is 0. We also note that when $F$ is convex, the RHS is indeed the maximal Bregman information of two points in \cite[Definition 2]{bmdgCW}, where maximality is obtained over the probability measure. The following Lemma follows from the definition of the Bregman secant divergence and the OBI. An inspection of the functions in Figure \ref{f-SD} provides a graphical proof.
\begin{lemma}\label{lemGENR}
For any $F$, 
\begin{eqnarray}
\forall z, v, z'\in \mathbb{R}, \bregmansec{F}{v}(z'\|z) & \geq & -Q_{F}(z,z+v, z'). \label{bregi1}
\end{eqnarray}
and if $F$ is convex,
\begin{eqnarray}
\forall z, v\in \mathbb{R}, \forall z'\not\in \mathbb{I}_{z,z+v}, \bregmansec{F}{v}(
  z'\|z) \geq 0,\nonumber\\
\forall z, v, z'\in \mathbb{R}, \bregmansec{F}{v}(
  z'\|z) \geq -Q_{F}(z,z+v,z+v). \label{bregi2}
\end{eqnarray}
\end{lemma}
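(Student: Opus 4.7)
The key observation is to recognize $\bregmansec{F}{v}(z'\|z)$ as the signed vertical gap between the function and the secant line through $(z,F(z))$ and $(z+v,F(z+v))$, and to recognize the OBI as the maximum of that gap (in the opposite direction) over a prescribed interval. Once these two identifications are made, all three inequalities reduce to simple interval / convexity arguments.

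First I would write
\[
L(x) \defeq F(z) + (x-z)\diffloc{v}{F}(z) = F(z) + (x-z)\cdot\frac{F(z+v)-F(z)}{v},
\]
so that directly from the definition, $\bregmansec{F}{v}(z'\|z) = F(z') - L(z')$. Next I would reparameterize the OBI: setting $x = (z(z+v))_\alpha = \alpha z + (1-\alpha)(z+v) = z + (1-\alpha)v$, one checks $L(x) = \alpha F(z) + (1-\alpha)F(z+v) = (F(z)F(z+v))_\alpha$, hence
\[
Q_F(z,z+v,c) \;=\; \max_{x \in \mathbb{I}_{z,c}} \bigl\{ L(x) - F(x)\bigr\},
\]
as $\alpha$ ranges over $\mathbb{R}$ bijectively with $x$ ranging over $\mathbb{R}$, subject to $x \in \mathbb{I}_{z,c}$.

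Armed with this, inequality \eqref{bregi1} is immediate: $z' \in \mathbb{I}_{z,z'}$, so
\[
L(z') - F(z') \;\leq\; \max_{x \in \mathbb{I}_{z,z'}} \bigl\{L(x) - F(x)\bigr\} \;=\; Q_F(z,z+v,z'),
\]
which rearranges to $\bregmansec{F}{v}(z'\|z) \geq -Q_F(z,z+v,z')$.

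For the two inequalities in \eqref{bregi2}, I would invoke convexity of $F$. Since $L$ is the secant through the two points $(z,F(z))$, $(z+v,F(z+v))$ on the graph of $F$, convexity forces $F(x) \leq L(x)$ on $\mathbb{I}_{z,z+v}$ and $F(x) \geq L(x)$ off $\mathbb{I}_{z,z+v}$. The first claim of \eqref{bregi2} is then exactly the off-interval statement $F(z') \geq L(z')$. For the second claim I would split on whether $z' \in \mathbb{I}_{z,z+v}$: if yes, then the same reparameterization gives $L(z')-F(z') \leq \max_{x\in\mathbb{I}_{z,z+v}}\{L(x)-F(x)\} = Q_F(z,z+v,z+v)$; if no, then by the part just proved $\bregmansec{F}{v}(z'\|z) \geq 0 \geq -Q_F(z,z+v,z+v)$, since $Q_F$ is always nonnegative (as noted right after Definition \ref{defBI}).

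The only step requiring any care is the reparameterization identifying the OBI with the maximum secant-vs-function gap over the correct interval; once that geometric identification is made explicit, the three inequalities are one-line consequences of ``a point lies in an interval'' or of the elementary secant characterization of convexity, matching the graphical intuition of Figure \ref{f-SD}.
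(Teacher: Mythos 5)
Your proof is correct and follows the same line the paper has in mind: the paper declares the lemma to "follow from the definition" and offers Figure \ref{f-SD} as a graphical proof, and your reparameterization $x = z + (1-\alpha)v$ (valid since $v \neq 0$ is implicit in the $v$-derivative) is exactly the rigorous rendering of that picture, identifying $Q_F(z,z+v,c)$ with $\max_{x \in \mathbb{I}_{z,c}}\{L(x)-F(x)\}$ and $\bregmansec{F}{v}(z'\|z)$ with $F(z')-L(z')$. The case split for the third inequality and the appeal to nonnegativity of $Q_F$ are exactly right, so this is the argument the paper leaves to the reader, made explicit.
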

\bignegspace
We shall abbreviate the two possible forms
of OBI in the RHS of \eqref{bregi1}, \eqref{bregi2} as:
\begin{eqnarray}
Q^*_F(z,z',v) \defeq \left\{
\begin{array}{cl}
Q_{F}(z,z+v,z+v) & \mbox{ if $F$ convex}\\
Q_{F}(z,z+v, z') & \mbox{ otherwise}
\end{array}
\right. .\label{defQSTAR}
\end{eqnarray}
\bignegspace

\section{Boosting using only queries on the loss}
\label{sec:boost} 

We make the assumption that predictions of so-called "weak classifiers" are finite and non-zero on training without loss of generality (otherwise a simple tweak ensures it without breaking the weak learning framework, see \supplement, Section \ref{remove-nonzero}). Excluding 0 ensures our algorithm does not make use of derivatives.
\begin{assumption}\label{assum1finite}
$\forall t>0, \forall i\in [m]$, $|h_t(\ve{x}_i)| \in (0,+\infty)$ (we thus let $M_t \defeq \max_i |h_t(\ve{x}_i)|$).
\end{assumption}
\bignegspace
For short, we define two \textit{edge} quantities for $i \in [m]$ and $t=1, 2, ...$, 
\bignegspace
\begin{eqnarray}
e_{ti} \defeq \alpha_{t} \cdot  y_i h_{t}(\ve{x}_i), \quad \tilde{e}_{ti} \defeq y_i H_{t}(\ve{x}_i)\label{defEDGE12},
\end{eqnarray}
where $\alpha_t$ is a leveraging coefficient for the weak classifiers in an ensemble $H_T(.) \defeq \sum_{t\in [T]} \alpha_{t} h_{t}(.)$. We observe
\begin{eqnarray*}
\tilde{e}_{ti} & = & \tilde{e}_{(t-1)i} + e_{ti}.
\end{eqnarray*}
\bignegspace
\bignegspace

\subsection{Algorithm: \secantboost}

\subsubsection{General steps}

\begin{algorithm*}[t]
\caption{\secantboost($S, T$) // {\color{red}red boxes} pinpoint substantial differences with "classical" boosting}\label{sboost}
\begin{algorithmic}
  \STATE  \textbf{Input} sample ${S} = \{(\bm{x}_i, y_i), i
  = 1, 2, ..., m\}$, number of iterations $T$, initial $(h_0,v_0)$ (constant classification and offset).
\STATE  \hypertarget{s1}{Step 1} : let $H_0 \leftarrow 1\cdot h_0$ and \fcolorbox{red}{white}{$\ve{w}_1 = - \diffloc{v_{0}}{F}(h_0) \cdot
\ve{1}$};  \hfill // $h_0, v_0\neq 0$ chosen s. t. $\diffloc{v_{0}}{F}(h_0)
\neq 0$
\STATE  Step 2 : \textbf{for} $t = 1, 2, ..., T$
\STATE  \hspace{1.1cm} \hypertarget{s21}{Step 2.1} : let $h_t \leftarrow
\weaklearner({S}_t, |\bm{w}_t|)$\; \hfill //weak learner call,  \fcolorbox{red}{white}{${S}_t \defeq \{(\ve{x}_i, y_i \cdot \mathrm{sign}(w_{ti}))\}$}
\STATE  \hspace{1.1cm} Step 2.2 : let $\eta_t   \leftarrow
(1/m) \cdot \sum_{i}
{w_{ti} y_{i} h_t(\bm{x}_i)}$\; \hfill //unnormalized edge
\STATE  \hspace{1.1cm} \hypertarget{s23}{Step 2.3} :\resizebox{0.8\textwidth}{!}{\fcolorbox{red}{white}{\begin{tabular}{|l|l|}\hline
                                   If bound on $\overline{W}_{2,t}$ available (Section \ref{sec-alphaW2}) & otherwise | general procedure\\ \hline
                                   \begin{minipage}{6.8cm}
    pick $\varepsilon_t >0, \pi_t \in
    (0,1)$ and
    {\footnotesize\vspace{-0.2cm}
  \begin{eqnarray}
\alpha_t \in \frac{\eta_t}{2(1+\varepsilon_t)M_t^2 \overline{W}_{2,t}} \cdot \left[1-\pi_t,
          1+\pi_t\right]; \label{picka}
  \end{eqnarray}}
\end{minipage} & \begin{minipage}{5.8cm}
  $\alpha_t\leftarrow$\findalpha($S, \ve{w}_t, h_t$)\\
  // $\overline{W}_{2,t}>0, \varepsilon_t >0, \pi_t \in
  (0,1)$\\
  // Theorem \ref{thALPHAW2}
  \end{minipage} \\\hline
\end{tabular}}}
\STATE  \hspace{1.1cm} \hypertarget{s24}{Step 2.4} : let $H_{t} \leftarrow H_{t-1} + \alpha_t \cdot h_t$
\; \hfill //classifier update
\STATE  \hspace{1.1cm} \fcolorbox{red}{white}{\hypertarget{s25}{Step 2.5}} : \textbf{if} $\mathbb{I}_{ti}(\varepsilon_{t} \cdot \alpha_t^2 M_{t}^2  \overline{W}_{2,t})\neq \emptyset, \forall i \in [m]$ \textbf{then} \hfill //new offsets
\STATE  \hspace{3.5cm} \textbf{for} $i = 1, 2, ..., m$, let
\bignegspace
\begin{eqnarray*}
\mbox{$v_{ti} \leftarrow \offsetoracle(t, i, \varepsilon_{t} \cdot \alpha_t^2 M_{t}^2  \overline{W}_{2,t})$}\:\:;
\end{eqnarray*}
\bignegspace
\bignegspace
\bignegspace
\STATE  \hspace{3cm} \textbf{else} \textbf{return} $H_t$;
\STATE  \hspace{1.1cm} \hypertarget{s26}{Step 2.6} : \textbf{for} $i = 1, 2, ..., m$, let \hfill //weight update
\bignegspace
\begin{eqnarray}
\mbox{\fcolorbox{red}{white}{$w_{(t+1)i} \leftarrow - \diffloc{v_{ti}}{F}(y_i
                                             H_{t}(\ve{x}_i))$}}\:\:; \label{pickun}
\end{eqnarray}
\bignegspace
\bignegspace
\STATE  \hspace{1.1cm} \fcolorbox{red}{white}{\hypertarget{s27}{Step 2.7}} : \textbf{if} $\ve{w}_{t+1} = \ve{0}$ \textbf{then} break;
\STATE \textbf{Return} $H_T$.
\end{algorithmic}
\end{algorithm*}

Without further ado, Algorithm \secantboost~presents our approach to boosting without
using derivatives information. The key differences with traditional boosting algorithms are {\color{red} red} color framed. We summarize its key steps.\\
\noindent\hyperlink{s1}{\textbf{Step 1}} This is the initialization step. Traditionally in boosting, one would pick $h_0 = 0$. Note that $\bm{w}_1$ is not necessarily positive. $v_0$ is the initial offset (Section \ref{sec:bregsec}).\\
\noindent\hyperlink{s21}{\textbf{Step 2.1}} This step calls the weak learner, as in traditional boosting, using variable "weights" on examples (the coordinate-wise absolute value of $\bm{w}_t$, denoted $|\bm{w}_t|$). The key difference with traditional boosting is that examples labels can switch between iterations as well, which explains that the training sample, $S_t$, is indexed by the iteration number.\\
\noindent\hyperlink{s23}{\textbf{Step 2.3}} This step computes the leveraging coefficient $\alpha_t$ of the weak classifier $h_t$. It involves a quantity, $\overline{W}_{2,t}$, which we define as any strictly positive real satisfying
  \begin{eqnarray}
    \expect_{i\sim [m]}\left[ \diffloc{\{e_{ti},v_{(t-1)i}\}}{F}(\tilde{e}_{(t-1)i}) \cdot \left(\frac{h_{t}(\ve{x}_i)}{M_{t}}\right)^2 \right] \leq \overline{W}_{2,t}. \label{boundW2}
  \end{eqnarray}
  For boosting rate's sake, we should find $\overline{W}_{2,t}$ as small as possible. We refer to \eqref{defEDGE12} for the $e_., \tilde{e}_.$ notations; $v_.$ is the current (set of) offset(s) (Section \ref{sec:bregsec} for their definition). The second-order $\mathcal{V}$-derivative in the LHS plays the same role as the second-order derivative in classical boosting rates, see for example \cite[Appendix, eq. 29]{nwLO}. As offsets $\rightarrow 0$, it converges to a second-order derivative; otherwise, they still share some properties, such as the sign for convex functions. 
  \begin{lemma}\label{lemW2bound}
Suppose $F$ convex. For any $a\in \mathbb{R}, b, c\in \mathbb{R}_*$, $\diffloc{\{b, c\}}{F}(a) \geq 0$.
\end{lemma}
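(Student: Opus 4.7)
The plan is to unravel the second-order $\mathcal{V}$-derivative explicitly and then reduce to the classical monotonicity of secant slopes of a convex function. Applying Definition \ref{defsecZ} twice gives
\begin{eqnarray*}
\diffloc{\{b,c\}}{F}(a) & = & \frac{\diffloc{b}{F}(a+c) - \diffloc{b}{F}(a)}{c} \\
 & = & \frac{F(a+b+c) - F(a+c) - F(a+b) + F(a)}{bc},
\end{eqnarray*}
a symmetric ``mixed second difference'' in $b,c$; alternatively one could cite the unravelled formula of Lemma \ref{lemCombSec}.

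Next, I would record the auxiliary fact that for convex $F$ and any fixed $b \in \mathbb{R}_*$, the secant slope $z \mapsto \diffloc{b}{F}(z)$ is non-decreasing in $z$. For $b > 0$ this is the textbook monotonicity of the slope of the chord $(z,F(z))\to(z+b,F(z+b))$; for $b < 0$, the identity $\diffloc{b}{F}(z) = (F(z)-F(z-|b|))/|b|$ expresses $\diffloc{b}{F}(z)$ as the slope of the chord from $z-|b|$ to $z$, which is again non-decreasing in $z$ by convexity. Thus in either sign case, $\diffloc{b}{F}$ is monotone non-decreasing on $\mathbb{R}$.

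Finally I would conclude by a one-line sign analysis on $c$. If $c > 0$, then $\diffloc{b}{F}(a+c) \geq \diffloc{b}{F}(a)$, so numerator and denominator in $(\diffloc{b}{F}(a+c) - \diffloc{b}{F}(a))/c$ are both $\geq 0$; if $c < 0$, then $a+c < a$ gives $\diffloc{b}{F}(a+c) \leq \diffloc{b}{F}(a)$, and both numerator and denominator are $\leq 0$. Either way the ratio is non-negative, establishing $\diffloc{\{b,c\}}{F}(a) \geq 0$. There is no real obstacle here; the only care point is making explicit that the secant-slope monotonicity holds for negative offsets as well, so that the argument does not silently assume $b,c>0$.
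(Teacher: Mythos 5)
Your proof is correct, and it takes a somewhat different route than the paper's. The paper rewrites $\diffloc{\{b,c\}}{F}(a) = \tfrac{2}{bc}\bigl(\mu_2 - \mu_1\bigr)$ where $\mu_1$ and $\mu_2$ are the (common-abscissa) midpoints of the two secants $[(a+c,F(a+c)),(a+b,F(a+b))]$ and $[(a,F(a)),(a+b+c,F(a+b+c))]$; it then runs a four-case check on $\mathrm{sign}(b),\mathrm{sign}(c)$ to argue that whichever secant has the wider abscissa span sits above the other (by convexity), and that this vertical ordering always compensates the sign of $bc$. You instead factor the computation: you first isolate the monotonicity of the secant-slope map $z\mapsto\diffloc{b}{F}(z)$ for fixed $b\neq 0$ (carefully handling $b<0$ via the chord from $z-|b|$ to $z$), and then conclude with a two-case sign analysis on $c$. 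Both arguments are really exercises in the three-chord/slope-monotonicity characterization of convexity, but your decomposition is more modular and shaves the case count; the paper's version is more symmetric in $b,c$ and supports the geometric picture in its Figure~\ref{f-secsec} (secants with nested abscissae, one above the other), which the authors also exploit to explain the limiting behavior toward the second derivative. Your monotonicity lemma is the standard "sliding chord" fact and is correct for both signs of $b$, so there is no gap.
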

\bignegspace
(Proof in \supplement, Section \ref{proof_lemW2bound}) We can also see a link with weights variation since, modulo a slight abuse of notation, we have $\diffloc{\{e_{ti},v_{(t-1)i}\}}{F}(\tilde{e}_{(t-1)i}) =  \diffloc{e_{ti}}{w_{ti}}$. A substantial difference with traditional boosting algorithms is that we have two ways to pick the leveraging coefficient $\alpha_t$; the first one can be used when a convenient $\overline{W}_{2,t}$ is directly accessible from the loss. Otherwise, there is a simple algorithm that provides parameters (including $\overline{W}_{2,t}$) such that \eqref{boundW2} is satisfied. Section \ref{sec-alphaW2} details those two possibilities and their implementation. In the more favorable case (the former one), $\alpha_t$ can be chosen in an interval, furthermore defined by flexible parameters $\epsilon_t > 0, \pi_t \in (0,1)$. Note that fixing beforehand these parameters is not mandatory: we can also pick \textit{any}
\begin{eqnarray}
\alpha_t & \in & \eta_t \cdot \left(0, \frac{1}{M_t^2 \overline{W}_{2,t}}\right),\label{genALPHA}
\end{eqnarray}
and then compute choices for the corresponding $\epsilon_t$ and $\pi_t$. $\epsilon_t$ is important for the algorithm and both parameters are important for the analysis of the boosting rate. From the boosting standpoint, a smaller $\epsilon_t$ yields a larger $\alpha_t$ and a smaller $\pi_t$ reduces the interval of values in which we can pick $\alpha_t$; both cases tend to favor better convergence rates as seen in Theorem \ref{thBOOSTCH}. \\
\noindent\hyperlink{s24}{\textbf{Step 2.4}} is just the crafting of the final model.\\
\noindent\hyperlink{s25}{\textbf{Step 2.5}} is new to boosting, the use of a so-called offset oracle, detailed in Section \ref{subsec-offo}.\\
\noindent\hyperlink{s26}{\textbf{Step 2.6}} The weight update does not rely on a first-order oracle as in traditional boosting, but uses only loss values through $v$-derivatives. The finiteness of $F$ implies the finiteness of weights.\\
\noindent\hyperlink{s27}{\textbf{Step 2.7}} Early stopping happens if all weights are null. While this would never happen with traditional (\textit{e.g.} strictly convex) losses, some losses that are unusual in the context of boosting can lead to early stopping. A discussion on early stopping and how to avoid it is in Section \ref{sec:disc}. 

\subsubsection{The offset oracle, \offsetoracle}\label{subsec-offo}

Let us introduce notation
  \begin{eqnarray}
\mathbb{I}_{ti}(z) \defeq \left\{ v  :
                          Q^*_F(\tilde{e}_{ti},\tilde{e}_{(t-1)i},v)
                          \leq z\right\}, \forall i\in [m], \forall z>0\label{defIT}.
  \end{eqnarray}
  (see Figure \ref{f-Iti-const} below to visualize $\mathbb{I}_{ti}(z)$ for a non-convex $F$) The offset oracle is used in Step 2.5, which is new to boosting. It requests the offsets to carry out weight update in \eqref{pickun} to an \textit{offset oracle}, which achieves the following, for iteration $\# t$, example $\# i$, limit OBI $z$:
  \begin{eqnarray}
\mbox{$\offsetoracle(t, i, z)$ returns some $v \in \mathbb{I}_{ti}(z)$} \label{constoo}
    \end{eqnarray}
Note that the offset oracle has the freedom to pick the offset in a whole set. Section \ref{sec-offset} investigates implementations of the offset oracle, so let us make a few essentially graphical remarks here. \offsetoracle~does not need to build the whole $\mathbb{I}_{ti}(z)$ to return some $v \in \mathbb{I}_{ti}(z)$ for Step 2.5 in \secantboost. In the construction steps of Figure \ref{f-Iti-const}, as soon as $\mathcal{O} \neq \emptyset$, one element of $\mathcal{O}$ can be returned. Figure \ref{f-BII-1} presents more examples of $\mathbb{I}_{ti}(z)$. One can remark that the sign of the offset $v_{ti}$ in Step 2.5 of \secantboost~is the same as the sign of $\tilde{e}_{(t-1)i} - \tilde{e}_{ti} = - y_i
\alpha_{t} h_{t}(\ve{x}_i)$. Hence, unless $F$ is derivable or all edges $y_i h_{t}(\ve{x}_i)$ are of the same sign ($\forall i$), the set of offsets returned in Step 2.5 always contain at least two different offsets, one non-negative and one non-positive (Figure \ref{f-BII-1}, (a-b)).

\subsection{Convergence of \secantboost}\label{sec-conv-boost}

The offset oracle has a technical importance for boosting: $\mathbb{I}_{ti}(z)$ is the set of offsets that limit an OBI for a training example (Definition \ref{defBI}). The importance for boosting comes from Lemma \ref{lemGENR}: upperbounding an OBI implies lowerbounding a Bregman Secant divergence, which will also guarantee a sufficient slack between two successive boosting iterations. This is embedded in a blueprint of a proof technique to show boosting-compliant convergence which is not new, see \textit{e.g.} \cite{nwLO}. We now detail this convergence.

Remark that the expected edge $\eta_t$ in Step 2.2 of \secantboost~is
not normalized. We define a
normalized version of this edge as: 
\begin{eqnarray}
[-1,1] \ni \tilde{\eta}_t & \defeq & \sum_i \frac{|w_{ti}|}{W_t} \cdot \tilde{y}_{ti}
                                     \cdot
                                     \frac{h_t(\ve{x}_i)}{M_t}, \label{deftildeeta}
\end{eqnarray}
with $\tilde{y}_{ti} \defeq y_i \cdot \mathrm{sign}(w_{ti})$,  $W_t \defeq \sum_i |w_{ti}| = \sum_i |\diffloc{v_{(t-1)i}}{F}(\tilde{e}_{(t-1)i})|$. Remark that the labels are corrected by the weight sign and thus may switch between iterations. In the particular case where the loss is non-increasing (such as with traditional convex surrogates), the labels do not switch. We need also a quantity which is, in absolute value, the expected weight:
\begin{eqnarray}
\overline{W}_{1,t} \defeq  \left|\expect_{i\sim [m]}\left[\diffloc{v_{(t-1)i}}{F}(\tilde{e}_{(t-1)i})\right]\right| & & \quad(\mbox{we indeed observe $\overline{W}_{1,t} = |\expect_{i\sim [m]}\left[w_{ti}\right]|$}) \label{eqdefw1}.
  \end{eqnarray}
In classical boosting for convex decreasing losses\footnote{This is an important class of losses since it encompasses the convex surrogates of symmetric proper losses \cite{nmSL,rwID}}, weights are non-negative and converge to a minimum (typically 0) as examples get the right class with increasing confidence. Thus, $\overline{W}_{1,t}$ can be an indicator of when classification becomes "good enough" to stop boosting. In our more general setting, it shall be used in a similar indicator. We are now in a
position to show a first result about \secantboost.
\begin{theorem}\label{thBOOSTCH}
Suppose assumption \ref{assum1finite} holds. Let $F_0 \defeq F(S, h_0)$ in \secantboost~and $z^*$ any real such
that $F(z^*) \leq F_0$. Then we are guaranteed that classifier $H_T$
output by \secantboost~satisfies $F(S, H_T) \leq F(z^*)$ when the
number of boosting iterations $T$ yields: 
\begin{eqnarray}
\sum_{t=1}^T \frac{\overline{W}^2_{1,t} (1-\pi_{t}^2)}{\overline{W}_{2,t} (1+\varepsilon_{t}) }\cdot \tilde{\eta}^2_t& \geq & 4(F_0 - F(z^*)), \label{beq17m}
\end{eqnarray}
where parameters $\varepsilon_t, \pi_t$ appear in Step 2.3 of \secantboost.
\end{theorem}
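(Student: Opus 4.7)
The plan is per-iteration progress followed by telescoping.

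First, I would apply the Bregman Secant identity at each training example with offset $v_{(t-1)i}$, evaluated at $z=\tilde{e}_{(t-1)i}$, $z'=\tilde{e}_{ti}$. Using the weight update $w_{ti}=-\diffloc{v_{(t-1)i}}{F}(\tilde{e}_{(t-1)i})$ from \eqref{pickun} and averaging over $i\in[m]$, this produces
\begin{equation*}
F(S,H_t) - F(S,H_{t-1}) = -\alpha_t \eta_t + \mathbb{E}_i\!\left[\bregmansec{F}{v_{(t-1)i}}(\tilde{e}_{ti}\,\|\,\tilde{e}_{(t-1)i})\right].
\end{equation*}
The core task is then to cap the Bregman Secant expectation by $(1+\varepsilon_t)\alpha_t^2 M_t^2 \overline{W}_{2,t}$. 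The approach I would take is to split it into a ``curvature'' contribution handled by the second-order $\mathcal{V}$-derivative $\diffloc{\{e_{ti},v_{(t-1)i}\}}{F}(\tilde{e}_{(t-1)i})$, whose weighted expectation is $\leq \alpha_t^2 M_t^2 \overline{W}_{2,t}$ by \eqref{boundW2} after factoring $e_{ti}^2 = \alpha_t^2 M_t^2 (h_t(\ve{x}_i)/M_t)^2$, plus an OBI residual governed by Lemma~\ref{lemGENR} and the offset-oracle guarantee $Q^*_F \leq \varepsilon_t\alpha_t^2 M_t^2 \overline{W}_{2,t}$ from Step~2.5. The convex case uses \eqref{bregi2} (where $Q^*_F$ is independent of $z'$), while the general case uses \eqref{bregi1} with third argument $\tilde{e}_{ti}$.

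Second, I would substitute the leveraging coefficient $\alpha_t = (1+\pi)\eta_t/(2(1+\varepsilon_t)M_t^2 \overline{W}_{2,t})$ with $\pi\in[-\pi_t,\pi_t]$ into $-\alpha_t\eta_t + (1+\varepsilon_t)\alpha_t^2 M_t^2 \overline{W}_{2,t}$. This is a routine quadratic in $\pi$ whose worst value over the allowed interval is attained at $\pi=\pm\pi_t$, producing per-iteration progress
\begin{equation*}
F(S,H_{t-1}) - F(S,H_t) \geq \frac{(1-\pi_t^2)\,\eta_t^2}{4(1+\varepsilon_t)M_t^2 \overline{W}_{2,t}}.
\end{equation*}
To match the theorem's $\overline{W}_{1,t}^2 \tilde{\eta}_t^2$ form, I would write $\eta_t = (W_t/m)M_t\tilde{\eta}_t$ and observe $W_t/m = \mathbb{E}_i |w_{ti}| \geq |\mathbb{E}_i w_{ti}| = \overline{W}_{1,t}$, so $\eta_t^2 \geq \overline{W}_{1,t}^2 M_t^2 \tilde{\eta}_t^2$. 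Telescoping $F_0 - F(S,H_T) = \sum_{t=1}^T [F(S,H_{t-1})-F(S,H_t)]$ and comparing with hypothesis \eqref{beq17m} then yields $F(S,H_T) \leq F(z^*)$.

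The hard part is the Bregman Secant cap in the first step: there is no single algebraic identity decomposing $\bregmansec{F}{v}(z'\|z)$ cleanly into a ``second-order $\mathcal{V}$-derivative'' piece plus an ``OBI'' remainder, so the bound must combine the explicit four-point representation of $\diffloc{\{e_{ti},v_{(t-1)i}\}}{F}(\tilde{e}_{(t-1)i})$ with a case split on convexity via \eqref{bregi2} or \eqref{bregi1}. Some care is also needed to align iteration indices, since the offset appearing in the Bregman Secant at iteration $t$ is $v_{(t-1)i}$ (set by Step~2.5 at iteration $t-1$), while the theorem's bound uses $\varepsilon_t,\alpha_t,M_t,\overline{W}_{2,t}$ at the current iteration.
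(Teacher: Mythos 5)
Your proposal is correct and follows essentially the same route as the paper's proof: the per-iteration Bregman Secant decomposition, Lemma \ref{lemComp1} to peel off the second-order $\mathcal{V}$-derivative (bounded by $\overline{W}_{2,t}$ via \eqref{boundW2}), Lemma \ref{lemGENR} plus the offset oracle to control the remaining OBI, the quadratic in $\alpha_t$ giving per-iteration progress $\eta_t^2(1-\pi_t^2)/(4(1+\varepsilon_t)M_t^2\overline{W}_{2,t})$, and then telescoping with $\eta_t^2/M_t^2 \geq \overline{W}_{1,t}^2\tilde{\eta}_t^2$; the paper just phrases the first step as the lower bound $\bregmansec{F}{v}(\tilde{e}_{(t-1)i}\|\tilde{e}_{ti}) \geq -Q^*_F(\cdots)$ and then applies Lemma \ref{lemComp1} to recover the linear (weight) term, which is algebraically identical to your starting identity followed by the flip. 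One small slip to fix when writing this out: after the Lemma \ref{lemComp1} flip, the Bregman Secant to which \eqref{bregi1} is applied is $\bregmansec{F}{v}(\tilde{e}_{(t-1)i}\|\tilde{e}_{ti})$, so the third argument of $Q_F$ is $\tilde{e}_{(t-1)i}$ rather than $\tilde{e}_{ti}$ (this is exactly what makes it match the form $Q^*_F(\tilde{e}_{ti},\tilde{e}_{(t-1)i},v)$ in \eqref{defIT} that the oracle controls).
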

(proof in \supplement, Section \ref{proof_thBOOSTCH}) We observe the tradeoff between the freedom in picking parameters and
convergence guarantee as exposed by \eqref{beq17m}: to get more freedom in picking the leveraging coefficient $\alpha_t$, we
typically need $\pi_t$ large (Step 2.3) and to get more freedom in picking the
offset $v_t\neq 0$, we typically need $\varepsilon_t$ large (Step
2.5). However, allowing more freedom in such ways reduces the LHS and thus impairs the
guarantee in \eqref{beq17m}. Therefore, there is a subtle balance
between "freedom" of choice and convergence. This balance becomes more clear as boosting compliance formally enters convergence requirement.

\paragraph{Boosting-compliant convergence} We characterize convergence in the boosting framework, which shall include the traditional weak learning assumption.
\begin{assumption}\label{assum3wla} (\textbf{$\upgamma$-Weak Learning Assumption}, $\upgamma$-WLA)
We assume the following on the weak learner: $\exists \upgamma > 0$
such that $\forall t>0$, $|\tilde{\eta}_t| \geq \upgamma$.
\end{assumption}
As is usually the case in boosting, the weights are normalized in the weak learning assumption \eqref{deftildeeta}. So the minimization "potential" of the loss does not depend on the absolute scale of weight. This is not surprising because the loss is "nice" in classical boosting: a large $\upgamma$ guarantees most examples' edges moving to the right of the $x$-axis after the classifier update which, because the loss is strictly decreasing (exponential loss, logistic loss, etc.), is sufficient to yield a smaller expected loss. In our case it is not true anymore as for example there could be a local bump in the loss that would have it increase after the update. This is not even a pathological example: one may imagine that instead of a single bump the loss jiggles a lot locally. How can we keep boosting operating in such cases ? A sufficient condition takes the form of a second assumption that also integrates weights, ensuring that the \textit{variation} of weights is locally not too large compared to (unnormalized) weights, which is akin to comparing local first- and second-order variations of the loss in the differentiable case. We encapsulate this notion in what we call a weight regularity assumption. 
\begin{assumption}\label{assum2cr} (\textbf{$\rho$-Weight Regularity Assumption}, $\rho$-WRA) Let $\rho_t \defeq \overline{W}^2_{1,t} / \overline{W}_{2,t}$. We assume there exists $\rho > 0$ such that $\forall t\geq 1$, $\rho_t > \rho$.
\end{assumption}
In Figure \ref{f-wra} we present a(n overly) simplified depiction of the cases where $\overline{W}_{2,t}$ is large for "not nice" losses, and two workarounds on how to keep it small enough for the WRA to hold. Keep in mind that $\overline{W}_{1,t}$ is an expected local variation of the loss \eqref{eqdefw1}, \eqref{defEDGE12}, so as it goes to zero, boosting converges to a local minimum and it is reasonable to expect that the WRA breaks. Otherwise, there are two strategies that keep $\overline{W}_{2,t}$ relatively small enough for WRA to hold: either we pick small enough offsets, which essentially works for most losses but make us converge in general to a local minimum (this is in essence our experimental choice) \textit{or} we optimize the offset oracle so that it sometimes "passes" local jiggling (Figure \ref{f-wra} (d)). While this eventually requires to tune the weak learner jointly with the offset oracle and fine-tune that latter algorithm on a loss-dependent basis, such a strategy can be used to eventually pass local minima of the loss. To do so, "larger" offsets directly translate into corresponding requests for larger magnitude classification for the next weak classifier, for the related examples.
\setlength\tabcolsep{0pt}
\begin{figure}[t]
\begin{center}
  \begin{tabular}{c?ccc}
    "nice" loss & \multicolumn{3}{c}{"not nice" loss}\\ \hline
  \includegraphics[trim=70bp 500bp 640bp 120bp,clip,width=0.25\linewidth]{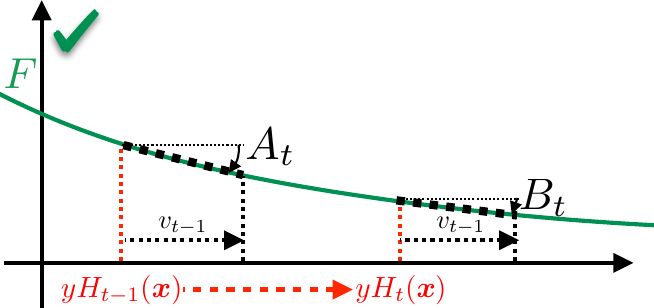} & \includegraphics[trim=70bp 500bp 640bp 120bp,clip,width=0.25\linewidth]{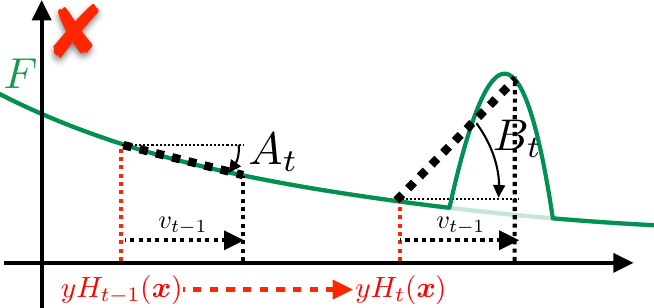} & \includegraphics[trim=70bp 500bp 640bp 120bp,clip,width=0.25\linewidth]{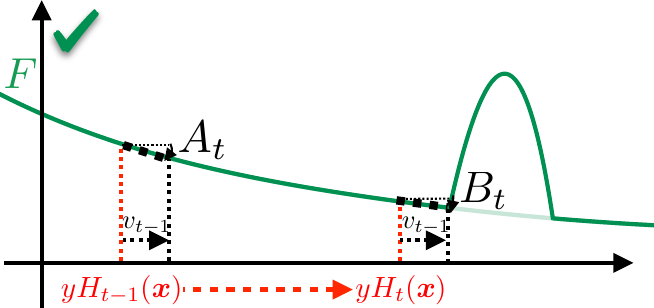} & \includegraphics[trim=70bp 500bp 640bp 120bp,clip,width=0.25\linewidth]{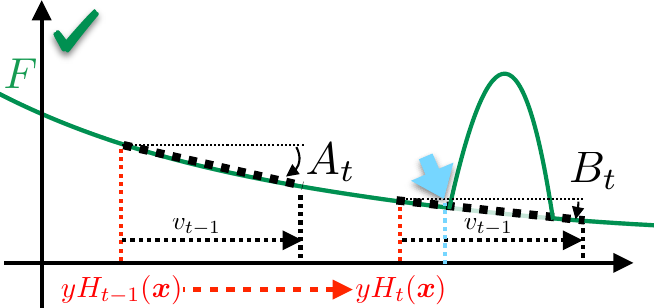} \\
  (a) $\overline{W}_{2,t}$ small & (b) $\overline{W}_{2,t}$ can be large & (c) workaround 1 & (d) workaround 2\\
\end{tabular}
\end{center}
\bignegspace
\bignegspace
\caption{Simplified depiction of $\overline{W}_{2,t}$ "regimes" (Assumption \ref{assum2cr}). We only plot the components of the $v$-derivative part in \eqref{boundW2}: removing index $i$ for readability, we get $\diffloc{\{e_{t},v_{t-1}\}}{F}(\tilde{e}_{t-1}) = (B_t - A_t) / (y H_t(\ve{x})-y H_{t-1}(\ve{x}))$ with $A_t \defeq \diffloc{v_{t-1}}{F}(y H_{t-1}(\ve{x})) = - w_t$ and $B_t \defeq \diffloc{v_{t-1}}{F}(y H_{t}(\ve{x})) $ ($= - w_{t+1}$ iff $v_{t-1} = v_{t}$). If the loss is "nice" like the exponential or logistic losses, we always have a small $\overline{W}_{2,t}$ (a). Place a bump in the loss (b-d) and the risk happens that $\overline{W}_{2,t}$ is too large for the WRA to hold. Workarounds include two strategies: picking small enough offsets (b) or fit offsets large enough to pass the bump (c). The blue arrow in (d) is discussed in Section \ref{sec:disc}.}
  \label{f-wra}
\bignegspace
\bignegspace
\end{figure}
We are now in a position to state a simple corollary to Theorem \ref{thBOOSTCH}.
\begin{corollary}\label{corBOOSTRATE}
Suppose assumptions \ref{assum1finite}, \ref{assum2cr} and \ref{assum3wla} hold. Let $F_0 \defeq F(S, h_0)$ in \secantboost~and $z$ any real such
that $F(z) \leq F_0$. If \secantboost~is run for a number $T$ of iterations satisfying
\begin{eqnarray}
T & \geq & \frac{4(F_0 - F(z))}{\upgamma^2 \rho} \cdot \frac{1+\max_{t \in [T]} \varepsilon_t}{1-\max_{t \in [T]} \pi^2_t},\label{corCONV}
\end{eqnarray}
then $F(S, H_T) \leq F(z)$.
\end{corollary}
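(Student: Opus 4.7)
The plan is to derive the corollary as a direct quantitative substitution into Theorem \ref{thBOOSTCH}. The theorem guarantees $F(S, H_T) \leq F(z)$ whenever
\begin{eqnarray*}
\sum_{t=1}^T \frac{\overline{W}^2_{1,t} (1-\pi_{t}^2)}{\overline{W}_{2,t} (1+\varepsilon_{t}) }\cdot \tilde{\eta}^2_t & \geq & 4(F_0 - F(z)),
\end{eqnarray*}
so it suffices to lower-bound each summand on the left-hand side using the $\upgamma$-WLA and $\rho$-WRA, and then compare with $T$ times this uniform lower bound.

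First I would apply Assumption \ref{assum3wla} to replace $\tilde{\eta}_t^2$ by $\upgamma^2$ from below (this is valid because $\tilde{\eta}_t^2 \geq \upgamma^2 > 0$ uniformly in $t$). Next I would apply Assumption \ref{assum2cr} to replace $\overline{W}^2_{1,t}/\overline{W}_{2,t} = \rho_t$ by $\rho$ from below. Finally, for the $\varepsilon_t,\pi_t$ factor, I would bound it uniformly in $t$ by replacing $1-\pi_t^2$ by $1-\max_{t\in[T]}\pi_t^2$ and $1+\varepsilon_t$ by $1+\max_{t\in[T]}\varepsilon_t$, which is valid since both quantities are positive (we have $\pi_t \in (0,1)$ so $1-\pi_t^2 > 0$, and $\varepsilon_t > 0$).

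Chaining these three bounds yields
\begin{eqnarray*}
\sum_{t=1}^T \frac{\overline{W}^2_{1,t} (1-\pi_{t}^2)}{\overline{W}_{2,t} (1+\varepsilon_{t}) }\cdot \tilde{\eta}^2_t & \geq & T \cdot \upgamma^2 \rho \cdot \frac{1-\max_{t\in[T]}\pi_t^2}{1+\max_{t\in[T]}\varepsilon_t}.
\end{eqnarray*}
Solving the requirement that this quantity be at least $4(F_0 - F(z))$ for $T$ gives exactly the bound \eqref{corCONV}, and the conclusion $F(S,H_T) \leq F(z)$ follows directly from Theorem \ref{thBOOSTCH}.

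There is no real obstacle here: the corollary is essentially a bookkeeping step that converts the per-iteration geometric progress quantified by Theorem \ref{thBOOSTCH} into an explicit iteration count under the two standard-looking assumptions. The only mild subtlety is making sure the signs and positivity conditions on $\pi_t,\varepsilon_t,\overline{W}_{2,t}$ make the uniform replacement of the worst-case terms legitimate, which is guaranteed by the constraints stated in Step 2.3 of \secantboost.
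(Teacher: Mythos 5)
Your proof is correct and matches the only reasonable route: term-by-term lower bound the summands in \eqref{beq17m} using the $\upgamma$-WLA (giving $\tilde{\eta}_t^2\geq \upgamma^2$), the $\rho$-WRA (giving $\overline{W}_{1,t}^2/\overline{W}_{2,t}>\rho$), and uniform worst-case bounds on $\varepsilon_t,\pi_t$, then solve for $T$. The paper gives no separate proof (it calls this a "simple corollary" to Theorem \ref{thBOOSTCH}), and your bookkeeping is exactly what is intended.
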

We remark that the dependency in $\upgamma$ is optimal \cite{aghmBS}.

\subsection{Finding $\overline{W}_{2,t}$}\label{sec-alphaW2}

There is lots of freedom in the choice of $\alpha_t$ in Step 2.3 of \secantboost, and even more if we look at \eqref{genALPHA}. This, however, requires access to some bound $\overline{W}_{2,t}$. In the general case, the quantity it upperbounds in \eqref{boundW2} also depends on $\alpha_t$ because $e_{ti} \defeq \alpha_t \cdot y_i h_t(\ve{x}_i)$. So unless we can obtain such a "simple" $\overline{W}_{2,t}$ that does \textit{not} depend on $\alpha_t$, \eqref{picka} -- and \eqref{genALPHA} -- provide a \textit{system} to solve for $\alpha_t$. \\
\noindent \textbf{$\overline{W}_{2,t}$ via properties of $F$} Classical assumptions on loss functions for zeroth-order optimization can provide simple expressions for $\overline{W}_{2,t}$ (Table \ref{tab:features}). Consider smoothness: we say that $F$ is $\beta$-smooth if it is derivable and its derivative satisfies the Lipschitz condition $|F'(z') - F'(z)| \leq \beta |z' - z|, \forall z, z'$ \cite{bCOA}. Notice that this implies the condition on the $v$-derivative of the derivative: $|\diffloc{v}{F'}(z)| \leq \beta, \forall z, v$. This also provides a straightforward useful expression for $\overline{W}_{2,t}$.
\begin{lemma}\label{lemSmooth}
Suppose that the loss $F$ is $\beta$-smooth. Then we can fix $\overline{W}_{2,t} = 2\beta$.
\end{lemma}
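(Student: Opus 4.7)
The lemma asks us to verify that, when $F$ is $\beta$-smooth, the strictly positive constant $\overline{W}_{2,t} = 2\beta$ satisfies the defining inequality \eqref{boundW2}. Since $(h_t(\ve{x}_i)/M_t)^2 \leq 1$ by Assumption \ref{assum1finite}, it suffices to establish a uniform upper bound on the second-order $\mathcal{V}$-derivative appearing in the integrand: once we show $\diffloc{\{v_1,v_2\}}{F}(z) \leq 2\beta$ for all $z \in \mathbb{R}$ and all non-zero offsets $v_1, v_2$, bounding the expectation in \eqref{boundW2} is immediate (the factor $(h_t/M_t)^2 \in [0,1]$ can only shrink positive contributions and makes negative contributions non-positive).

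The plan is to reduce the second-order $\mathcal{V}$-derivative to an integral of first-order $v$-derivatives of $F'$. Since $\beta$-smoothness implies differentiability of $F$, the fundamental theorem of calculus (with the substitution $s = tv_1$, valid regardless of the sign of $v_1$ under the signed-integral convention) gives $\diffloc{v_1}{F}(z) = \int_0^1 F'(z+tv_1)\,dt$. Applying the outer $v$-derivative in the second variable, swapping the finite difference with the integral, and rewriting the inner difference of $F'$ using its own $v$-derivative yields
\[
\diffloc{\{v_1,v_2\}}{F}(z) \;=\; \int_0^1 \diffloc{v_2}{F'}(z+tv_1)\,dt.
\]

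As the paper observes just before the lemma, $\beta$-smoothness is equivalent to $|\diffloc{v}{F'}(z)| \leq \beta$ for all $z$ and $v \neq 0$. Combining this with the representation above gives the pointwise bound $|\diffloc{\{v_1,v_2\}}{F}(z)| \leq \beta$. Plugging this into \eqref{boundW2} and using $(h_t(\ve{x}_i)/M_t)^2 \in [0,1]$ bounds the expectation by $\beta$, hence \emph{a fortiori} by $2\beta$, which proves the claim.

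No step in this plan is a serious obstacle. The only care-point is checking that the integral representation of $\diffloc{v_1}{F}$ remains valid when $v_1 < 0$; this reduces to the standard signed substitution in the Riemann integral. The factor of $2$ in $\overline{W}_{2,t} = 2\beta$ is slack — the argument as laid out actually delivers $\beta$ — and is presumably kept either for convenience or to absorb other loose bounds elsewhere in the analysis.
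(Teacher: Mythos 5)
Your proof is correct, and it takes a genuinely different route from the paper. The paper's proof does not use the integral representation: instead it expands $\diffloc{\{b,c\}}{F}(a)$ into three pieces by inserting and removing $F'(a+c)$ and $F'(a)$, bounds two of them with the descent-lemma inequality $|F(x)-F(y)-F'(y)(x-y)|\leq \tfrac{\beta}{2}|x-y|^2$ and the third by Lipschitzness of $F'$, obtains $|\diffloc{\{b,c\}}{F}(a)|\leq \beta+\beta\,b^2/|bc|$, repeats with $b$ and $c$ swapped, and then uses the power-mean inequality on the minimum of the two bounds to reach $2\beta$. Your argument --- differentiate through the identity $\diffloc{v_1}{F}(z)=\int_0^1 F'(z+t v_1)\,dt$ and apply the outer finite difference under the integral sign --- is both cleaner and sharper: it establishes $|\diffloc{\{v_1,v_2\}}{F}(z)|\leq\beta$ uniformly, whereas the paper's triangle-inequality decomposition only reaches $\beta$ in the degenerate limit $b\to0$ or $c\to0$ (as the paper's closing remark acknowledges). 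In particular your proof shows that the constant $2$ in the lemma is not needed; one could take $\overline{W}_{2,t}=\beta$. The paper's approach buys a little extra robustness in that it never invokes the fundamental theorem of calculus (working purely with pointwise Taylor bounds and a permutation symmetry of Lemma~\ref{lemCombSec}), but in the $\beta$-smooth setting $F'$ is Lipschitz hence continuous, so the FTC step you rely on is unobjectionable and your representation is valid for either sign of $v_1$ under the signed-integral convention, exactly as you note.
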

(Proof in \supplement, Section \ref{proof_lemSmooth}) What the Lemma shows is that a bound on the $v$-derivative of the derivative implies a bound on order-2 $\mathcal{V}$-derivatives (in the quantity that $\overline{W}_{2,t}$ bounds \eqref{boundW2}). Such a condition on $v$-derivatives is thus weaker than a condition on derivatives, and it is strictly weaker if we impose a strictly positive lowerbound on the offset's absolute value, which would be sufficient to characterize the boosting convergence of \secantboost.\\
\noindent \textbf{A general algorithm for $\overline{W}_{2,t}$} If we cannot make any assumption on $F$, there is a simple way to \textit{first} obtain $\alpha_t$ and then $\overline{W}_{2,t}$, from which all other parameters of Step 2.3 can be computed.
\begin{algorithm}[t]
\caption{\findalpha($S, \ve{w}, h$)}\label{findalpha}
\begin{algorithmic}
  \STATE  \textbf{Input} sample ${S} = \{(\bm{x}_i, y_i), i
  = 1, 2, ..., m\}$, $\ve{w} \in \mathbb{R}^m$, $h: \mathcal{X} \rightarrow \mathbb{R}$.
\STATE  Step 1 : find any $a > 0$ such that
\begin{eqnarray}
\frac{\left|\eta(\ve{w}, h) - \eta(\tilde{\ve{w}}(\mathrm{sign}(\eta(\ve{w}, h)) \cdot a), h)\right|}{|\eta(\ve{w}, h)|} & < & 1 .\label{eqfinda}
\end{eqnarray}
\STATE \textbf{Return} $\mathrm{sign}(\eta(\ve{w}, h)) \cdot a$.
\end{algorithmic}
\end{algorithm}
We first need a few definitions. We first generalize the edge notation appearing in Step 2.2:
\begin{eqnarray*}
\eta(\ve{w}, h) & \defeq & \expect_{i\sim [m]}\left[w_i y_i h(\ve{x}_i)\right],
\end{eqnarray*}
so that $\eta_t \defeq \eta(\ve{w}_t, h_t)$. Remind the weight update, $w_{ti} \defeq - \diffloc{v_{(t-1)i}}{F}(y_i H_{t-1}(\ve{x}_i))$. We define a "partial" weight update, 
\begin{eqnarray}
\tilde{w}_{ti}(\alpha) & \defeq & - \diffloc{v_{(t-1)i}}{F}(\alpha y_i h_t(\ve{x}_i) + y_i H_{t-1}(\ve{x}_i)) \label{defpartialW}
\end{eqnarray}
(if we were to replace $v_{(t-1)i}$ by $v_{ti}$ and let $\alpha \defeq \alpha_t$, then $\tilde{w}_{ti}(\alpha)$ would be $w_{(t+1)i}$, hence the partial weight update). Algorithm \ref{findalpha} presents the simple procedure to find $\alpha_t$. Notice that we use $\tilde{\ve{w}}$ with sole dependency on the prospective leveraging coefficient; we omit for clarity the dependences in the current ensemble ($H_.$), weak classifier ($h_.$) and offsets ($v_{.i}$) needed to compute \eqref{defpartialW}. 
\begin{theorem}\label{thALPHAW2}
  Suppose Assumptions \ref{assum1finite} and \ref{assum3wla} hold and $F$ is continuous at all abscissae $\{\tilde{e}_{(t-1)i} \defeq y_i H_{t-1}(\ve{x}_i), i \in [m]\}$. Then there are always solutions to Step 1 of \findalpha~and if we let $\alpha_t\leftarrow$\findalpha($S, \ve{w}_t, h_t$) and then compute
\begin{eqnarray*}
\overline{W}_{2,t} \defeq \left|\expect_{i\sim [m]}\left[\frac{h^2_{t}(\ve{x}_i)}{M^2_{t}} \cdot \diffloc{\{\alpha_t y_i h_t(\ve{x})_i),v_{(t-1)i}\}}{F}(\tilde{e}_{(t-1)i})\right]\right|,
\end{eqnarray*}
then $\overline{W}_{2,t}$ satisfies \eqref{boundW2} and $\alpha_t$ satisfies \eqref{picka} for some $\varepsilon_t >0, \pi_t \in (0,1)$.
\end{theorem}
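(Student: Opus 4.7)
My approach rests on a single algebraic identity rewriting the order-two $\mathcal{V}$-derivative in \eqref{boundW2} as a finite difference of weights. Unfolding Definition \ref{defsecZ} gives the cross-difference
\begin{eqnarray*}
\diffloc{\{e_{ti},v_{(t-1)i}\}}{F}(\tilde{e}_{(t-1)i}) & = & \frac{F(\tilde{e}_{ti}+v_{(t-1)i})-F(\tilde{e}_{(t-1)i}+v_{(t-1)i})-F(\tilde{e}_{ti})+F(\tilde{e}_{(t-1)i})}{e_{ti}\,v_{(t-1)i}},
\end{eqnarray*}
whose numerator equals $v_{(t-1)i}(w_{ti}-\tilde{w}_{ti}(\alpha_t))$ by \eqref{pickun} and \eqref{defpartialW}. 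The whole expression thus collapses to $(w_{ti}-\tilde{w}_{ti}(\alpha_t))/e_{ti}$. Multiplying by $h_t^2(\ve{x}_i)/M_t^2$, cancelling one $h_t$ against $e_{ti}=\alpha_t y_i h_t(\ve{x}_i)$, using $y_i^2=1$ and taking the empirical expectation yields the master formula
\begin{eqnarray*}
\expect_{i\sim[m]}\left[\frac{h_t^2(\ve{x}_i)}{M_t^2}\diffloc{\{e_{ti},v_{(t-1)i}\}}{F}(\tilde{e}_{(t-1)i})\right] & = & \frac{\eta(\ve{w}_t,h_t)-\eta(\tilde{\ve{w}}(\alpha_t),h_t)}{\alpha_t M_t^2}.
\end{eqnarray*}

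\textbf{Existence of $a$ in Step 1 of \findalpha.} With $s\defeq\mathrm{sign}(\eta(\ve{w}_t,h_t))$, continuity of $F$ at the abscissae $\tilde{e}_{(t-1)i}$ (and at the already-fixed shifted points $\tilde{e}_{(t-1)i}+v_{(t-1)i}$) yields $\tilde{w}_{ti}(sa)\to w_{ti}$ as $a\to 0^+$, hence $\eta(\tilde{\ve{w}}(sa),h_t)\to\eta(\ve{w}_t,h_t)$. The $\upgamma$-WLA together with Assumption \ref{assum1finite} (and $W_t>0$, else Step 2.7 of \secantboost~would have triggered) gives $|\eta(\ve{w}_t,h_t)|=(W_tM_t/m)|\tilde{\eta}_t|\geq(W_tM_t/m)\upgamma>0$, so for every sufficiently small $a>0$ the ratio in \eqref{eqfinda} drops strictly below one.

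\textbf{Verifying \eqref{boundW2} and \eqref{picka}.} The theorem's $\overline{W}_{2,t}$ is exactly the absolute value of the LHS of the master formula, so \eqref{boundW2} follows from $z\leq|z|$. Let $c\defeq|\alpha_t|M_t^2\overline{W}_{2,t}/|\eta_t|$; the master formula combined with Step 1 of \findalpha~gives $c=|\eta_t-\eta(\tilde{\ve{w}}(\alpha_t),h_t)|/|\eta_t|<1$. Because $\alpha_t=sa$ and $\eta_t$ share the same sign, membership of $\alpha_t$ in the interval of \eqref{picka} is equivalent to $\lambda\defeq 2(1+\varepsilon_t)c\in[1-\pi_t,1+\pi_t]$. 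Setting the uniform parameters $\varepsilon_t\defeq(1-c)/(2c)>0$ and $\pi_t\defeq(1+c)/2\in(0,1)$ produces $\lambda=1+c\in[1-\pi_t,1+\pi_t]$, closing \eqref{picka}.

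\textbf{Main obstacle.} The only genuinely nontrivial step is spotting the master algebraic identity; with it, everything else is elementary bookkeeping plus a one-line parameter choice. The most delicate auxiliary point is the continuity argument: the hypothesis names continuity at the abscissae $\tilde{e}_{(t-1)i}$, but continuity at the shifted points $\tilde{e}_{(t-1)i}+v_{(t-1)i}$ is also needed for the $v$-derivative to vary continuously with $a$ as $a\to 0^+$, and this must be inherited from how the offset oracle chose $v_{(t-1)i}$ in Step 2.5 at iteration $t-1$.
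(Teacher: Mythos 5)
Your proof is correct and follows essentially the same route as the paper's: the paper also unfolds the order-two $\mathcal{V}$-derivative into a weight-difference form (your ``master formula'' corresponds to \eqref{simpW2}), reduces the condition \eqref{picka} to the interval membership $|\alpha_t| \in (0, |\eta_t|/(M_t^2\overline{W}_{2,t}))$, translates that into the ratio test of Step 1 of \findalpha, and invokes continuity plus the WLA for existence of solutions. Your explicit choices $\varepsilon_t = (1-c)/(2c)$, $\pi_t = (1+c)/2$ are a nice concrete instantiation where the paper merely observes that such parameters exist; both arguments implicitly require $c > 0$ (equivalently $\overline{W}_{2,t} > 0$), an edge case the paper's implementation \findalphaimpl~handles separately but its proof does not discuss. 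You also correctly flag a genuine imprecision in the stated hypothesis: $\tilde{w}_{ti}(\alpha) \to w_{ti}$ as $\alpha \to 0$ requires continuity of $F$ not only at $\tilde{e}_{(t-1)i}$ but also at the shifted abscissae $\tilde{e}_{(t-1)i}+v_{(t-1)i}$, a point the paper's proof silently elides (and which can be repaired either by constraining the offset oracle's choices at iteration $t-1$ or by the Lebesgue-measure-zero perturbation trick the paper mentions in the discussion immediately following the theorem).
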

The proof, in Section \ref{proof_thALPHAW2}, proceeds by reducing condition \eqref{genALPHA} to \eqref{eqfinda}. The Weak Learning Assumption \eqref{assum3wla} is important for the denominator in the LHS of \eqref{eqfinda} to be non zero. The continuity assumption \textit{at all abscissae} is important to have $\lim_{a \rightarrow 0} \eta(\tilde{\ve{w}}_t(a), h_t) = \eta_t$, which ensures the existence of solutions to \eqref{eqfinda}, also easy to find, \textit{e.g.} by a simple dichotomic search starting from an initial guess for $a$. Note the necessity of being continuous only at abscissae defined by the training sample, which is finite in size. Hence, if this condition is not satisfied but discontinuities of $F$ are of Lebesgue measure 0, it is easy to add an infinitesimal constant to the current weak classifier, ensuring the conditions of Theorem \ref{thALPHAW2} and keeping the boosting rates.

\subsection{Implementation of the offset oracle}\label{sec-offset}

  \setlength\tabcolsep{0pt}

\begin{figure*}[t]
\begin{center}
\begin{tabular}{cccc}\Xhline{2pt}
  \includegraphics[trim=130bp 50bp 160bp 170bp,clip,width=0.25\linewidth]{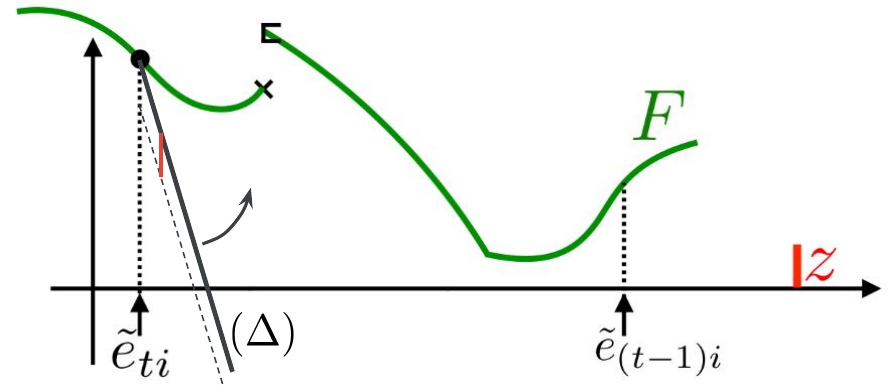} & \includegraphics[trim=130bp 50bp 160bp 170bp,clip,width=0.25\linewidth]{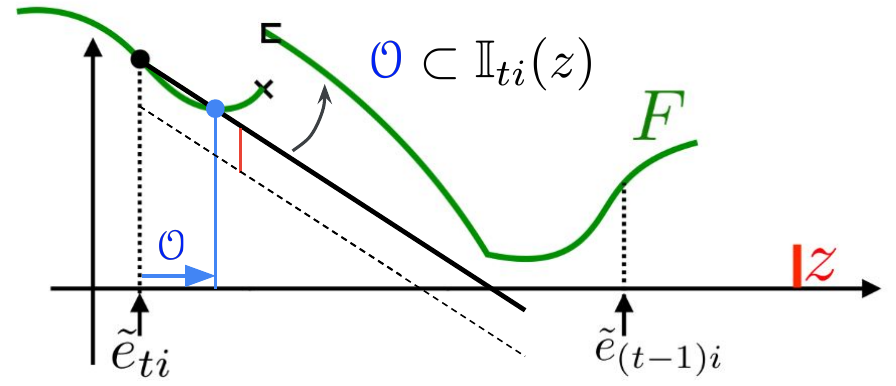} & \includegraphics[trim=130bp 50bp 160bp 170bp,clip,width=0.25\linewidth]{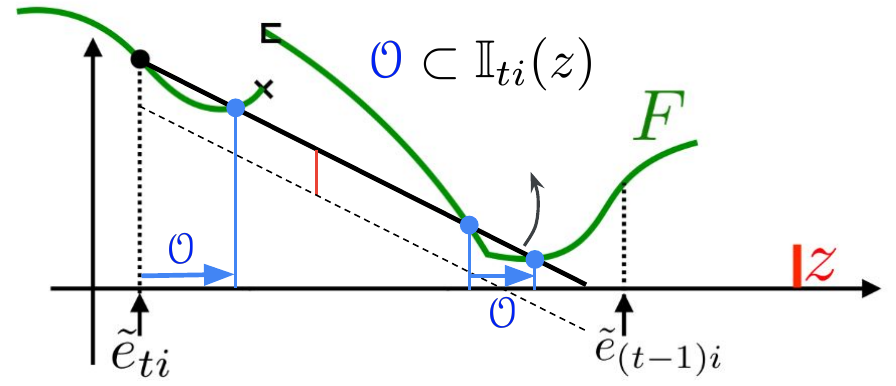} & \includegraphics[trim=130bp 50bp 160bp 170bp,clip,width=0.25\linewidth]{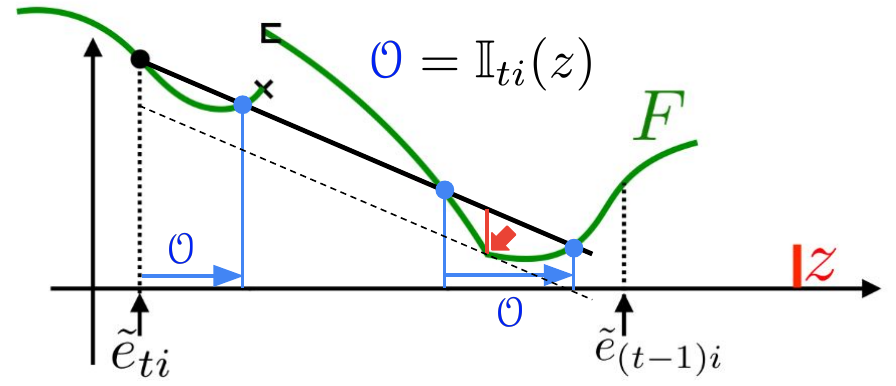} \\
  (a) & (b) & (c) & (d) \\\Xhline{2pt}
\end{tabular}
\end{center}
\bignegspace
\bignegspace
\caption{A simple way to build $\mathbb{I}_{ti}(z)$ for a discontinuous loss $F$ ($\tilde{e}_{ti}<\tilde{e}_{(t-1)i}$ and $z$ are represented), {\color{blue} $\mathcal{O}$} being the set of solutions as it is built. We rotate two half-lines, one passing through $(\tilde{e}_{ti},F(\tilde{e}_{ti}))$ (thick line, $(\Delta)$) and a parallel one translated by $-z$ (dashed line) (a). As soon as $(\Delta)$ crosses $F$ on any point $(z',F(z'))$ with $z\neq \tilde{e}_{ti}$ while the dashed line stays below $F$, we obtain a candidate offset $v$ for \offsetoracle, namely $v = z' - \tilde{e}_{ti}$. In (b), we obtain an interval of values. We keep on rotating $(\Delta)$, eventually making appear several intervals for the choice of $v$ if $F$ is not convex (c). Finally, when we reach an angle such that the maximal difference between $(\Delta)$ and $F$ in $[\tilde{e}_{ti},\tilde{e}_{(t-1)i}]$ is $z$ ($z$ can be located at an intersection between $F$ and the dashed line), we stop and obtain the full $\mathbb{I}_{ti}(z)$ (d).}
  \label{f-Iti-const}
\bignegspace
\bignegspace
\end{figure*}

\begin{figure}[t]
\begin{center}
\begin{tabular}{ccc} \Xhline{2pt}
\includegraphics[trim=20bp 460bp 580bp 110bp,clip,width=0.33\linewidth]{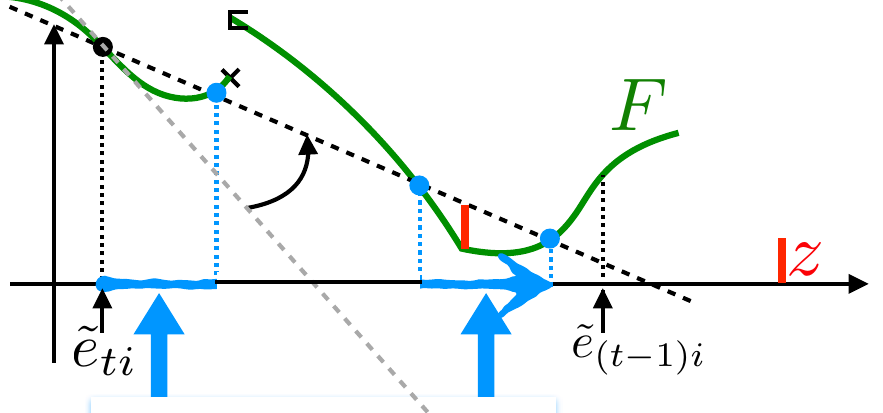} & \includegraphics[trim=20bp 460bp 580bp 110bp,clip,width=0.33\linewidth]{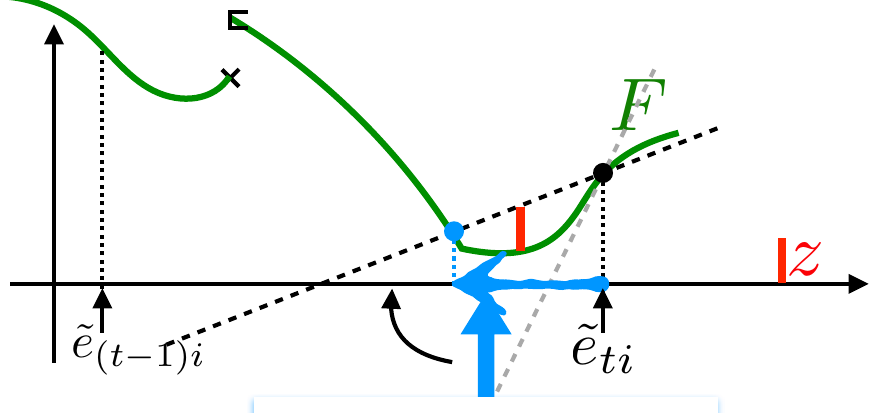} & \includegraphics[trim=20bp 460bp 580bp
110bp,clip,width=0.33\linewidth]{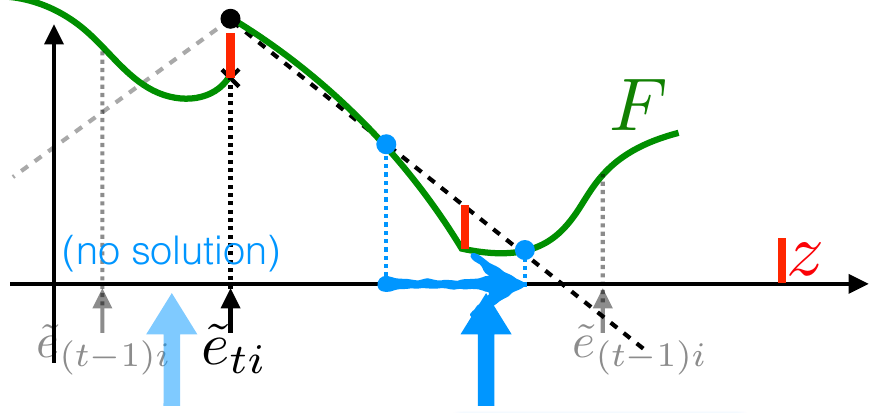}\\
  (a) & (b) & (c)\\\Xhline{2pt}
\end{tabular}
\end{center}
\bignegspace
\bignegspace
\caption{More examples of ensembles $\mathbb{I}_{ti}(z)$ (in {\color{blue} blue}) for the $F$ in Figure \ref{f-Iti-const}. (a): $\mathbb{I}_{ti}(z)$ is the
  union of two intervals with all candidate offsets non negative. (b): it is a single interval with non-positive offsets. (c): at a discontinuity, if $z$ is smaller than the discontinuity, we have no direct solution for $\mathbb{I}_{ti}(z)$ for at least one positioning of
  the edges, but a simple trick bypasses the difficulty (see text).}
  \label{f-BII-1}
\bignegspace
\bignegspace
\end{figure}

Figure \ref{f-Iti-const} explains how to build graphically $\mathbb{I}_{ti}(z)$ for a general $F$. While it is not hard to implement a general procedure following the blueprint (\textit{i.e.} accepting the loss function as input), it would be far from achieving computational optimality: a much better choice consists in specializing it to the (set of) loss(es) at hand via hardcoding specific optimization features of the desired loss(es). This would not prevent "loss oddities" to get absolutely trivial oracles (see \supplement, Section \ref{sec-offset-app}).

\section{Discussion}
\label{sec:disc}

For an efficient implementation, boosting requires specific design choices to make sure the weak learning assumption stands for as long as necessary; experimentally, it is thus a good idea to adapt the weak learner to build more complex models as iterations increase (\textit{e.g.} learning deeper trees), keeping Assumption \ref{assum3wla} valid with its advantage over random guessing parameter $\upgamma>0$. In our more general setting, our algorithm \secantboost~pinpoints two more locations that can make use of specific design choices to keep assumptions stand for a larger number of iterations.

The first is related to handling local minima. When Assumption \ref{assum2cr} breaks, it means we are close to a local optimum of the loss. One possible way of escaping those local minima is to adapt the offset oracle to output larger offsets (Step 2.5) that get weights computed outside the domain of the local minimum. Such offsets can be used to inform the weak learner of the specific examples that then need to receive larger magnitude in classification, something we have already discussed in Section \ref{sec:boost}. There is also more: the sign of the weight indicates the polarity of the next edge ($e_{t.}$, \eqref{defEDGE12}) needed to decrease the loss \textit{in the interval spanned by the last offset}. To simplify, suppose a substantial fraction of examples have an edge $\tilde{e}_{t.}$ in the vicinity of the blue dotted line in Figure \ref{f-wra} (d) so that the loss value is indicated by the big arrow and suppose their current offset $=v_{t-1}$ so that their weight (positive) signals that to minimize further the loss, the weak learner's next weak classifier has to have a positive edge over these examples. Such is the polarity constraint which essentially comes to satisfy the WLA, but there is a magnitude constraint that comes from the WRA: indeed, if the positive edge is too small so that the loss ends up in the "bump" region, then there is a risk that the WRA breaks because the loss around the bump is quite flat, so the numerator of $\rho_t$ in Assumption \ref{assum2cr} can be small. Passing the bump implies escaping the local minimum at which the loss would otherwise be trapped. Section \ref{sec-offset} has presented a general blueprint for the offset oracle but more specific implementation designs can be used; some are discussed in the \supplement, Section \ref{sec-offset-app}.

The second is related to handling losses that take on constant values over parts of their domain. To prevent early stopping in Step 2.7 of \secantboost, one needs $\bm{w}_{t+1} \neq \ve{0}$. The update rule of $\ve{w}_t$ imposes that the loss must then have non-zero \textit{variation} for some examples between two successive edges \eqref{defEDGE12}.  If the loss $F$ is constant, then clearly the algorithm obviously stops without learning anything. If $F$ is piecewise-constant, this constrain the design of the weak learner to make sure that some examples receive a different loss with the new model update $H_.$. As explained in \supplement, Section \ref{sec-piecewise}, this can be efficiently addressed by specific designs on \findalpha.

In the same way as there is no "1 size fits all" weak learner for all domains in traditional boosting, we expect specific design choices to be instrumental in better handling specific losses in our more general setting. Our theory points two locations further work can focus on.
\section{Conclusion}
\label{sec:conc}

Boosting has rapidly moved to an optimization setting involving first-order information about the loss optimized, rejoining, in terms of information needed, that of the hugely popular (stochastic) gradient descent. But this was not a formal requirement of the initial setting and in this paper, we show that essentially any loss function can be boosted without this requirement. From this standpoint, our results put boosting in a slightly more favorable light than recent development on zeroth-order optimization since, to get boosting-compliant convergence, we do not need the loss to meet any of the assumptions that those analyses usually rely on. Of course, recent advances in zeroth-order optimization have also achieved substantial design tricks for the implementation of such algorithms, something that undoubtedly needs to be adressed in our case, such as for the efficient optimization of the offset oracle. We leave this as an open problem but provide in \supplement~some toy \textit{experiments} that a straightforward implementation achieves, hinting that \secantboost~can indeed optimize very ``exotic'' losses.


\bibliographystyle{plain}
\bibliography{bibgen}


\clearpage
\pagebreak
\appendix

\begin{center}
\Huge{Appendix}
\end{center}

To
differentiate with the numberings in the main file, the numbering of
Theorems, etc. is letter-based (A, B, ...).

\section*{Table of contents}

\noindent \textbf{A quick summary of recent zeroth-order optimization approachess} \hrulefill Pg \pageref{sec-sup-sum}\\

\noindent \textbf{Supplementary material on proofs} \hrulefill Pg
\pageref{sec-sup-pro}\\

\noindent $\hookrightarrow$ Helper results \hrulefill Pg \pageref{cheatsheet}\\
\noindent $\hookrightarrow$ Removing the $\neq 0$ part in Assumption \ref{assum1finite} \hrulefill Pg \pageref{remove-nonzero}\\
\noindent $\hookrightarrow$ Proof of Lemma \ref{lemW2bound} \hrulefill Pg \pageref{proof_lemW2bound}\\
\noindent $\hookrightarrow$ Proof of Theorem \ref{thBOOSTCH} \hrulefill Pg \pageref{proof_thBOOSTCH}\\
\noindent $\hookrightarrow$ Proof of Lemma \ref{lemSmooth} \hrulefill Pg \pageref{proof_lemSmooth}\\
\noindent $\hookrightarrow$ Proof of Theorem \ref{thALPHAW2} \hrulefill Pg \pageref{proof_thALPHAW2}\\
\noindent $\hookrightarrow$ Implementation of the offset oracle\hrulefill Pg \pageref{sec-offset-app}\\
\noindent $\hookrightarrow$ Proof of Lemma \ref{lemOBIconv} \hrulefill Pg \pageref{proof_lemOBIconv}\\
\noindent $\hookrightarrow$ Handling discontinuities in the offset oracle to prevent stopping in Step 2.5 of \secantboost \hrulefill Pg \pageref{sec-handling}\\
\noindent $\hookrightarrow$ A boosting pattern that can "survive" above differentiability \hrulefill Pg \pageref{sec-survive}\\
\noindent $\hookrightarrow$ The case of piecewise constant losses for \findalpha \hrulefill Pg \pageref{sec-piecewise}\\

\noindent \textbf{Supplementary material on algorithms, implementation tricks and a toy experiment} \hrulefill Pg \pageref{sec-sup-exp}\\

\newpage

\section{A quick summary of recent zeroth-order optimization approaches}\label{sec-sup-sum}

  \setlength\tabcolsep{5pt}
\begin{table*}
  \centering
  \begin{tabular}{r?c|c|c|c|c?c?l}\Xhline{2pt}
    & \multicolumn{5}{c?}{$F$} & $\nabla F$ &  main \\
    reference  & conv. & diff. & Lip. & smooth & Lb & diff. & ML topic \\ \hline
    \cite{aptEH} & \tickYesBad & \tickYesBad & \tickYesBad & \tickYesBad & & & online ML \\
    \cite{aptDZ} & \tickYesBad & & \tickYesBad & & & & distributed ML\\
    \cite{acptAG} & \tickYesBad & & \tickYesBad & & & & online ML \\
    \cite{clmyAZ} & \tickYesBad & \tickYesBad & & \tickYesBad & & \tickYesBad & alt. GD\\
    \cite{crSS} & & \tickYesBad & & \tickYesBad & & \tickYesBad & alt. GD\\
    \cite{clxllhcZZ} & & \tickYesBad & \tickYesBad & & & & alt. GD\\
    \cite{cxlFG} & & & \tickYesBad & & \tickYesBad & & alt. GD\\
    \cite{ctlIS} & \tickYesBad & \tickYesBad & \tickYesBad & \tickYesBad & &  & alt. GD\\
    \cite{cwzOT} & \tickYesBad & \tickYesBad & & \tickYesBad & & & alt. GD\\
    \cite{fvpEA} &  & \tickYesBad & \tickYesBad & \tickYesBad & & & saddle pt opt\\
    \cite{ghCS} &  & \tickYesBad & & \tickYesBad & & & alt. FW\\
    \cite{htcAS} &  & \tickYesBad & & \tickYesBad & & & alt. FW\\
    \cite{vzwygZO} & \tickYesBad & \tickYesBad & & & & & alt. GD\\
    \cite{hmmrZO} & &  & \tickYesBad &  & & & online ML\\
    \cite{ihgzNN} & & \tickYesBad & & \tickYesBad & & & deep ML\\
    \cite{lcllxZO} & \tickYesBad & \tickYesBad & & \tickYesBad & & & alt. GD \\
    \cite{lzjGF} & & & \tickYesBad & & & & saddle pt opt\\
    \cite{mcmsrZO} & \tickYesBad & \tickYesBad & \tickYesBad & \tickYesBad & & & saddle pt opt\\
    \cite{maSPB} & & \tickYesBad & & \tickYesBad & & \tickYesBad & distributed ML\\
    \cite{rmrvSZ} & & \tickYesBad &  & \tickYesBad & & & alt. GD\\
    \cite{qsyZO} & & \tickYesBad & \tickYesBad & \tickYesBad & & & federated ML\\
    \cite{rtlES} & & \tickYesBad & \tickYesBad & \tickYesBad & & \tickYesBad& saddle pt opt\\
    \cite{szkTG} & & \tickYesBad & & \tickYesBad &  & & alt. FW\\
    \cite{sggGF} & & \tickYesBad &\tickYesBad & \tickYesBad &  & & alt. GD\\
    \cite{zgFG} & & \tickYesBad& \tickYesBad& \tickYesBad& & \tickYesBad& saddle pt opt\\
    \cite{zxgZO} & & \tickYesBad & \tickYesBad & \tickYesBad & & \tickYesBad& saddle pt opt\\\Xhline{2pt}
\end{tabular}
\caption{Summary of formal assumptions about loss $F$ used to prove algorithms' convergence in recent papers on zeroth order optimization, in different ML settings (see text for details). We use "smoothness" as a portmanteau for various conditions on the $\geq 1$ order differentiability condition of $F$. "conv." = convex, "diff." = differentiable, "Lip." = Lipschitz,  "Lb" = lower-bounded, "alt. GD" = general alternative to gradient descent (stochastic or not), "alt. FW" = idem for Frank-Wolfe. Our paper relies on no such assumptions.}
    \label{tab:features}
  \end{table*}
  \setlength\tabcolsep{0pt}

  Table \ref{tab:features} summarizes a few dozens of recent approaches that can be related to zeroth-order optimization in various topics of ML. Note that no such approaches focus on boosting.

\section{Supplementary material on proofs}\label{sec-sup-pro}

\subsection{Helper results}\label{cheatsheet}

We now show that the order of the elements of
$\mathcal{V}$ does not matter to compute the $\mathcal{V}$-derivative
as in Definition \ref{defsecZ}. For any $\ve{\sigma} \in
\{0,1\}^{n}$, we let $1_{\ve{\sigma}} \defeq \sum_i \sigma_i$.

\begin{lemma}\label{lemCombSec}
  For any $z \in \mathbb{R}$, any $n\in \mathbb{N}_*$ and any
  $\mathcal{V}\defeq \{v_1, v_2, ..., v_n\} \subset \mathbb{R}$,
\begin{eqnarray}
\diffloc{\mathcal{V}}{F}(z) & = &\frac{\sum_{\ve{\sigma} \in \{0,1\}^{n}}
                                (-1)^{n-1_{\sigma}} F (z + \sum_{i=1}^{n}\sigma_i v_i)}{\prod_{i=1}^{n}v_i}.\label{combSec}
\end{eqnarray}
Hence, $\diffloc{\mathcal{V}}{F}$ is invariant to permutations of the
elements of $\mathcal{V}$.
\end{lemma}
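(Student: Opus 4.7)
My plan is a straightforward induction on $n = |\mathcal{V}|$, exploiting the recursive definition of $\diffloc{\mathcal{V}}{F}$ in Definition \ref{defsecZ}. The base case $n=1$ is immediate: the two vectors $\ve{\sigma} \in \{0,1\}^1$ yield exactly $F(z+v_1) - F(z)$ in the numerator (with signs $+1$ and $-1$ respectively) and $v_1$ in the denominator, matching the definition of $\diffloc{v_1}{F}(z)$.

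For the inductive step, I would write $\mathcal{V} = \mathcal{V}' \cup \{v_n\}$ with $\mathcal{V}' = \{v_1, \ldots, v_{n-1}\}$ and unfold
\[
\diffloc{\mathcal{V}}{F}(z) \;=\; \frac{\diffloc{\mathcal{V}'}{F}(z+v_n) - \diffloc{\mathcal{V}'}{F}(z)}{v_n}.
\]
Applying the inductive hypothesis to both terms in the numerator turns each into a signed sum indexed by $\ve{\sigma}' \in \{0,1\}^{n-1}$. The first term, evaluated at $z+v_n$, contributes exactly those $\ve{\sigma} \in \{0,1\}^n$ with $\sigma_n = 1$; the second term, multiplied by the outer $-1$, contributes those with $\sigma_n = 0$. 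I then just need to verify the signs match: for $\sigma_n = 1$ we have $1_{\ve{\sigma}} = 1_{\ve{\sigma}'} + 1$, so $(-1)^{n-1-1_{\ve{\sigma}'}} = (-1)^{n-1_{\ve{\sigma}}}$, and for $\sigma_n = 0$ the extra minus sign from the subtraction converts $(-1)^{(n-1)-1_{\ve{\sigma}'}}$ into $(-1)^{n-1_{\ve{\sigma}'}} = (-1)^{n-1_{\ve{\sigma}}}$. The denominator collects cleanly into $v_n \cdot \prod_{i=1}^{n-1} v_i = \prod_{i=1}^n v_i$, closing the induction.

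The permutation invariance claim then follows for free: the right-hand side of \eqref{combSec} is a sum over all subsets of $\mathcal{V}$ weighted by a term depending only on $\sum_i \sigma_i v_i$ (symmetric in the $v_i$), divided by $\prod_i v_i$ (also symmetric). Hence the combinatorial expression is invariant under any permutation of $v_1, \ldots, v_n$, which together with the identity \eqref{combSec} gives the invariance of $\diffloc{\mathcal{V}}{F}$ itself.

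I do not foresee any genuine obstacle: the only place care is required is the sign bookkeeping in the inductive step, where one has to check that the extra minus sign coming from the subtraction and the change in parity of $1_{\ve{\sigma}}$ when flipping $\sigma_n$ conspire correctly. This is a short verification rather than a true difficulty. A cleaner alternative (avoiding recursion entirely) would be to observe that the divided-difference operator $D_v: F \mapsto (F(\cdot + v) - F(\cdot))/v$ can be written as $(T_v - I)/v$ where $T_v$ is the translation operator, and then expand the product $\prod_{i=1}^n (T_{v_i} - I)/v_i$ binomially; the resulting sum over subsets is exactly \eqref{combSec}, and the commutativity of translations makes the permutation invariance manifest without any bookkeeping at all.
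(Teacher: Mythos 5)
Your proof is correct and follows essentially the same route as the paper: induction on $n$ using the recursive definition, splitting the $\{0,1\}^n$ sum according to the last coordinate, and checking the sign bookkeeping. The operator-theoretic reformulation $\prod_i (T_{v_i}-I)/v_i$ you mention at the end is a nice conceptual shortcut (not used in the paper), but your main argument is the paper's.
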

\begin{proof}
  We show the result by induction on the size of $\mathcal{V}$, first noting that
  \begin{eqnarray}
\diffloc{\{v_1\}}{F}(z) = \diffloc{v_1}{F}(z) & \defeq & \frac{F(z+v_1)-F(z)}{v_1} = \frac{1}{\prod_{i=1}^{1}v_i}\cdot \sum_{\sigma \in \{0,1\}}
                                (-1)^{1-1_{\sigma}} F (z + \sigma v_1).
  \end{eqnarray}
  We then assume that \eqref{combSec} holds for $\mathcal{V}_n \defeq
  \{v_1, v_2, ..., v_n\}$ and show the
  result for $\mathcal{V}_{n+1} \defeq \mathcal{V}_n \cup
  \{v_{n+1}\}$, writing (induction hypothesis used in the second identity):
  \begin{eqnarray}
    \lefteqn{\diffloc{\mathcal{V}_{n+1}}{F}(z)}\nonumber\\
    & \defeq &
                                           \frac{\diffloc{\mathcal{V}_n}{F}(z+v_{n+1})
                                           - \diffloc{\mathcal{V}_n}{F}(z)
                                           }{v_{n+1}}\nonumber\\
    & = & \frac{\sum_{\ve{\sigma} \in \{0,1\}^{n}}
                                (-1)^{n-1_{\ve{\sigma}}} F (z + \sum_{i=1}^n \sigma_i v_i + v_{n+1}) - \sum_{\ve{\sigma} \in \{0,1\}^{n}}
                                (-1)^{n-1_{\ve{\sigma}}} F (z + \sum_{i=1}^n\sigma_i v_i)}{\prod_{i=1}^{n+1}v_i}\nonumber\\
    & = & \frac{\sum_{\ve{\sigma} \in \{0,1\}^{n}}
                                (-1)^{n-1_{\ve{\sigma}}} F (z + \sum_{i=1}^n\sigma_i v_i + v_{n+1}) + \sum_{\ve{\sigma} \in \{0,1\}^{n}}
                                (-1)^{n-1_{\ve{\sigma}}+1} F (z + \sum_{i=1}^n\sigma_i v_i)}{\prod_{i=1}^{n+1}v_i}\nonumber\\
                                & = & \frac{
                                      \left\{
                                      \begin{array}{l}
                                        \sum_{\ve{\sigma'} \in \{0,1\}^{n+1} : \sigma'_{n+1} = 1}
                                (-1)^{n-(1_{\ve{\sigma'}}-1)} F (z + \sum_{i=1}^{n+1}\sigma'_i v_i) \\
                                        + \sum_{\ve{\sigma'} \in \{0,1\}^{n+1} : \sigma'_{n+1} = 0}
                                (-1)^{n+1-1_{\ve{\sigma'}}} F (z + \sum_{i=1}^{n+1}\sigma'_i v_i) 
                                      \end{array}
    \right.
                                           }{v^{n+1}}\nonumber\\
                                 & = & \frac{\sum_{\ve{\sigma'} \in \{0,1\}^{n+1}}
                                (-1)^{n+1-1_{\ve{\sigma'}}} F (z + \sum_{i=1}^{n+1}\sigma'_i v_i)}{\prod_{i=1}^{n+1}v_i},
  \end{eqnarray}
  as claimed.
  \end{proof}
We also have the following simple Lemma, which is a direct
consequence of Lemma \ref{lemCombSec}.
\begin{lemma}\label{lemComp1}
  For all $z, \in \mathbb{R}, v, z' \in \mathbb{R}_*$, we have
  \begin{eqnarray}
\diffloc{v}{F}(z+z') & = & \diffloc{v}{F}(z)  + z'\cdot \diffloc{\{z',v\}}{F}(z).
    \end{eqnarray}
\end{lemma}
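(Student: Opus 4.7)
The plan is to verify the identity by expanding both sides using the explicit combinatorial formula for $\mathcal{V}$-derivatives from Lemma \ref{lemCombSec}, and recognizing the resulting expression as a simple telescoping rearrangement. Since the identity is an equality between two specific finite combinations of values of $F$, no analytic input is needed; the proof is purely algebraic, and the hypotheses $v, z' \in \mathbb{R}_*$ are only there to ensure the denominators are non-zero.

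First, I would unfold the second-order term on the right. Applying Lemma \ref{lemCombSec} with $n = 2$ and $\mathcal{V} = \{z', v\}$ yields
\begin{eqnarray*}
\diffloc{\{z',v\}}{F}(z) & = & \frac{F(z+z'+v) - F(z+z') - F(z+v) + F(z)}{z' \cdot v}.
\end{eqnarray*}
Multiplying through by $z'$ gives $z' \cdot \diffloc{\{z',v\}}{F}(z) = (F(z+z'+v) - F(z+z') - F(z+v) + F(z)) / v$.

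Next, I would add $\diffloc{v}{F}(z) = (F(z+v) - F(z))/v$ to this expression. The $F(z+v)$ and $F(z)$ terms cancel with their opposite-sign counterparts, leaving
\begin{eqnarray*}
\diffloc{v}{F}(z) + z' \cdot \diffloc{\{z',v\}}{F}(z) & = & \frac{F(z+z'+v) - F(z+z')}{v},
\end{eqnarray*}
which is exactly $\diffloc{v}{F}((z+z'))$ by Definition \ref{defVDER}, completing the proof.

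There is really no obstacle here beyond bookkeeping. The only thing to be careful about is the sign convention $(-1)^{n - 1_\sigma}$ in Lemma \ref{lemCombSec} when $n=2$: the two mixed indices $(1,0)$ and $(0,1)$ contribute with negative sign while the "endpoint" terms $(0,0)$ and $(1,1)$ carry positive sign, which is precisely what makes the cancellation work out when $\diffloc{v}{F}(z)$ is added back in. Hence the identity is essentially a two-dimensional discrete analogue of $F'(z+z') = F'(z) + z' F''(z) + o(z')$, valid without any differentiability assumption because everything is expressed via finite differences.
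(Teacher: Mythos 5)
Your proof is correct and follows essentially the same route as the paper's: both invoke Lemma \ref{lemCombSec} and then rearrange. The paper uses only the permutation-invariance consequence of that Lemma (swapping $\{z',v\}$ to $\{v,z'\}$ and unfolding one layer of Definition \ref{defsecZ} to get $\diffloc{\{z',v\}}{F}(z) = (\diffloc{v}{F}(z+z')-\diffloc{v}{F}(z))/z'$), whereas you expand the full four-term combinatorial sum and cancel explicitly; the two computations are identical in substance.
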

\begin{proof}
It comes from Lemma \ref{lemCombSec} that $\diffloc{\{z',v\}}{F}(z) =
\diffloc{\{v, z'\}}{F}(z) =
(\diffloc{v}{F}(z+z')-\diffloc{v}{F}(z))/z'$ (and we reorder terms).
  \end{proof}

\subsection{Removing the $\neq 0$ part in Assumption \ref{assum1finite}}\label{remove-nonzero}

Because everything needs to be encoded, finiteness is not really an assumption. However, the non-zero assumption may be seen as limiting (unless we are happy to use first-order information about the loss (Section \ref{sec:boost}). There is a simple trick to remove it. Suppose $h_t$ zeroes on some training examples. The training sample being finite, there exists an open neighborhood $\mathbb{I}$ in 0 such that $h'_t \defeq h_t + \delta$ does not zero anymore on training examples, for any $\delta \in \mathbb{I}$. This changes the advantage $\upgamma$ in the WLA (Definition \ref{assum3wla}) to some $\upgamma'$ satisfying (we assume $\delta>0$ wlog)
\begin{eqnarray*}
  \upgamma' & \geq & \frac{\upgamma M_t}{M_t + \delta} - \frac{\delta}{M_t + \delta}\\
  & \geq & \upgamma - \frac{\delta}{M_t} \cdot (1+\upgamma),
\end{eqnarray*}
from which it is enough to pick $\delta \leq \varepsilon \upgamma M_t/(1+\upgamma)$ to guarantee advantage $\gamma' \geq (1-\varepsilon) \gamma$. If $\varepsilon$ is a constant, this translates in a number of boosting iterations in Corollary \ref{corBOOSTRATE} affected by a constant factor that we can choose as close to 1 as desired.
  
\subsection{Proof of Lemma \ref{lemW2bound}}\label{proof_lemW2bound}

\setlength\tabcolsep{0pt}

\begin{figure}[t]
  \begin{center}
    \begin{tabular}{cc}
      \includegraphics[trim=110bp 90bp 110bp 70bp,clip,width=0.4\linewidth]{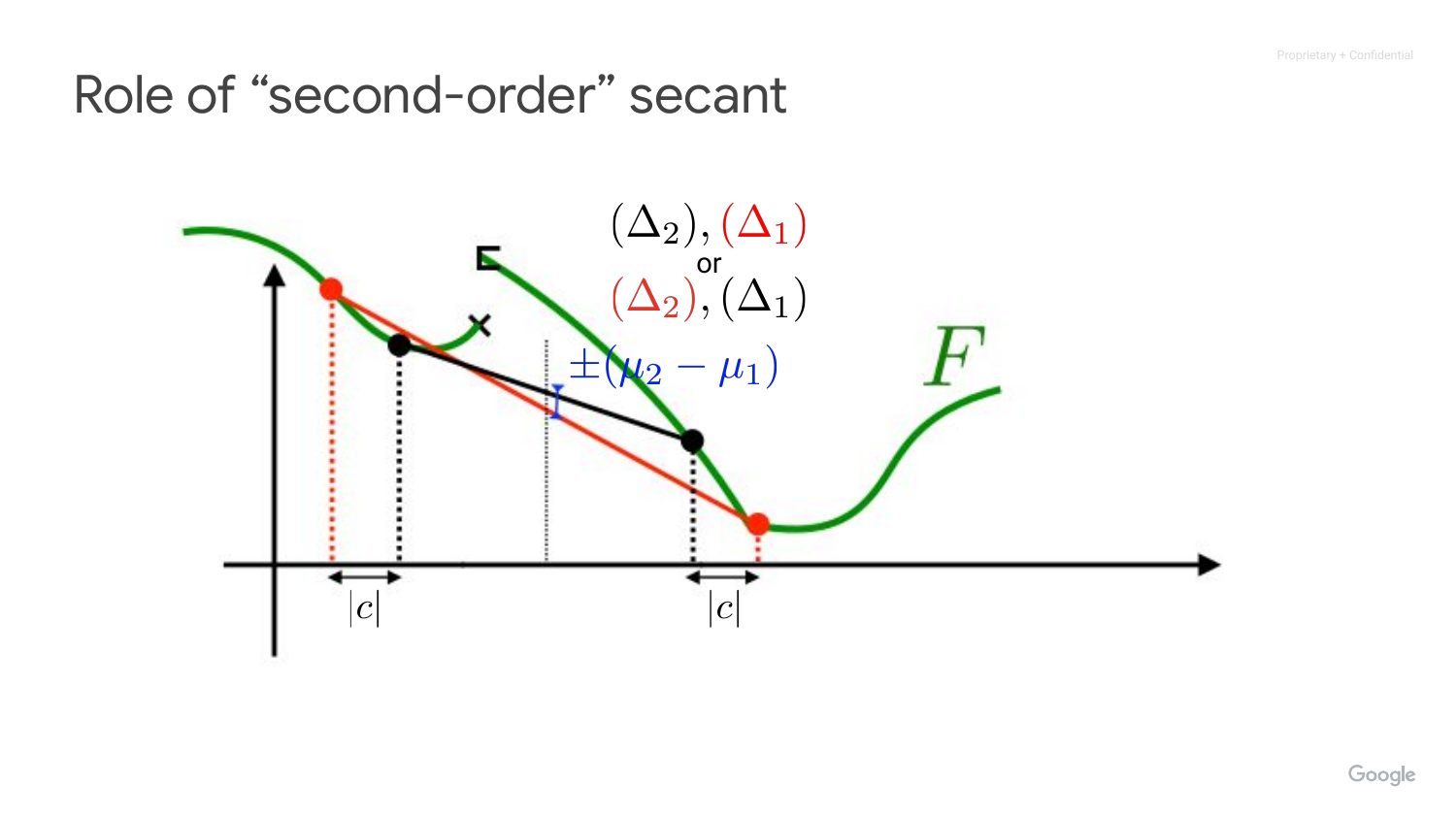} & \includegraphics[trim=110bp 90bp 110bp 70bp,clip,width=0.4\linewidth]{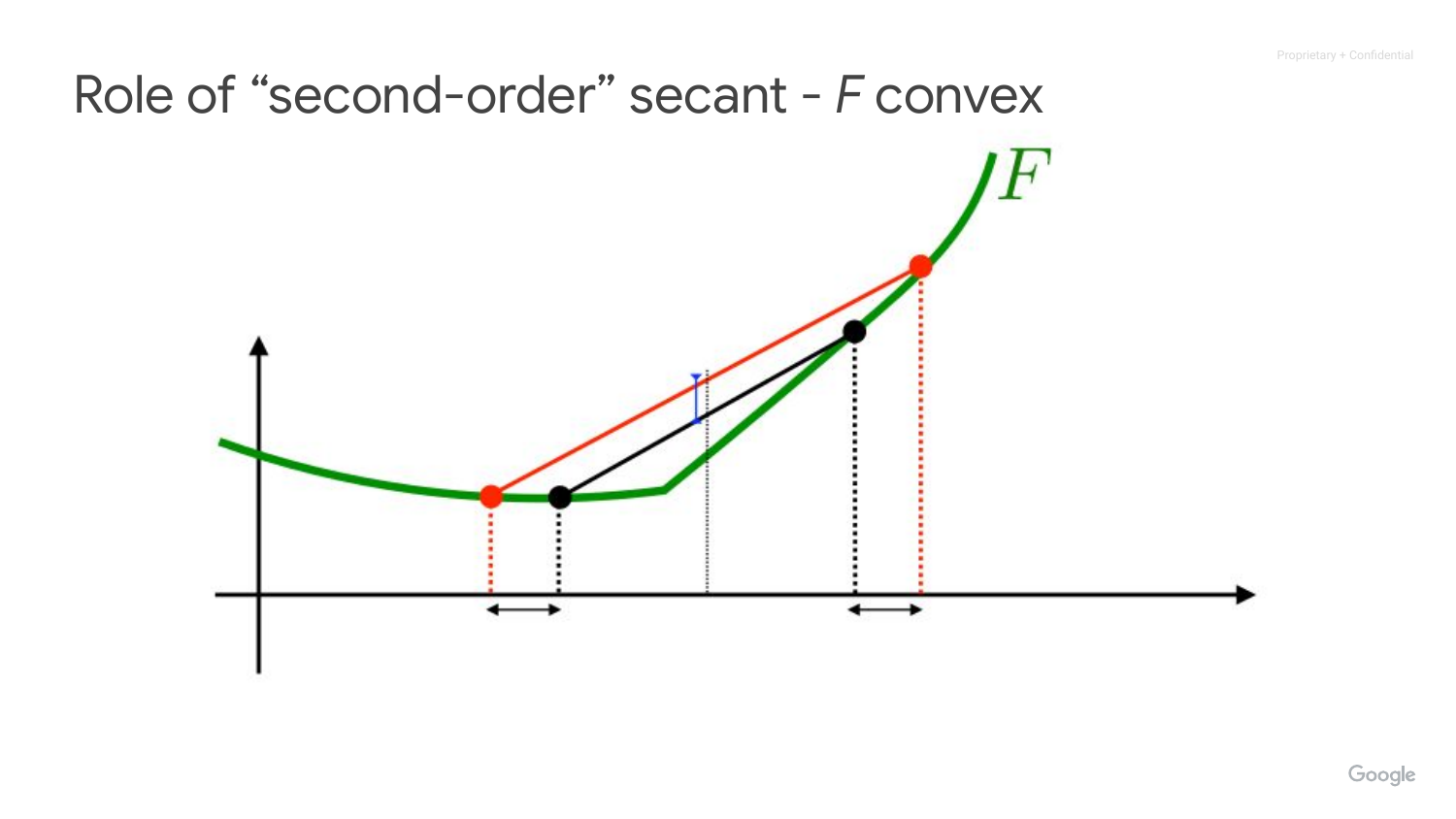}
    \end{tabular}
  \end{center}
  \caption{\textit{Left}: representation of the difference of averages in \eqref{eqDIFFAVG}. Each of the secants $(\Delta_1)$ and $(\Delta_2)$ can take either the red or black segment. Which one is which depends on the signs of $c$ and $b$, but the general configuration is always the same. Note that if $F$ is convex, one necessarily sits above the other, which is the crux of the proof of Lemma \ref{lemW2bound}. For the sake of illustration, suppose we can analytically have $b,c \rightarrow 0$. As $c$ converges to 0 but $b$ remains $>0$, $\diffloc{\{b, c\}}{F}(a)$ becomes proportional to the variation of the average secant midpoint; the then-convergence of $b$ to 0 makes $\diffloc{\{b, c\}}{F}(a)$ converge to the second-order derivative of $F$ at $a$. \textit{Right}: in the special case where $F$ is convex, one of the secants always sits above the other.}
  \label{f-secsec}
\end{figure}

We reformulate
\begin{eqnarray}
\diffloc{\{b, c\}}{F}(a) & = & \frac{2}{b} \cdot \frac{1}{c}\cdot \left(\underbrace{\frac{F(a+b+c)+F(a)}{2}}_{\defeq \mu_2} - \underbrace{\frac{F(a+b) + F(a + c )}{2}}_{\defeq \mu_1}\right).\label{eqDIFFAVG}
\end{eqnarray}
Both $\mu_1$ and $\mu_2$ are averages that can be computed from the midpoints of two secants (respectively):
\begin{eqnarray*}
(\Delta_1) & \defeq & [(a+c, F(a+c)),(a+b, F(a+b))],\\
(\Delta_2) & \defeq & [(a, F(a)),(a+b+c, F(a+b+c))].
\end{eqnarray*}
Also, the midpoints of both secants have the same abscissa (and the ordinates are $\mu_1$ and $\mu_2$), so to study the sign of $\diffloc{\{b, c\}}{F}(a)$, we can study the position of both secants with respect to each other. $F$ being convex, we show that the abscissae of one secant are included in the abscissae of the other, this being sufficient to give the position of both secants with respect to each other. We distinguish four cases.\\

\noindent \textbf{Case 1}: $c>0, b>0$. We have $a+b+c > \max\{a+b, a + c\}$ and $a < \min \{a+b, a + c\}$. $F$ being convex, $(\Delta_2)$ sits above $(\Delta_1)$. So, $\mu_2 \geq \mu_1$ and finally $\diffloc{\{b, c\}}{F}(a) \geq 0$.\\

\noindent \textbf{Case 2}: $c<0, b<0$. We now have $a+b + c< \min\{a+b, a + c\}$ while $a > \max\{a+b, a+c\}$, so $(\Delta_2)$ sits above $(\Delta_1)$. Again, $\mu_2 \geq \mu_1$ and finally $\diffloc{\{b, c\}}{F}(a) \geq 0$.\\

\noindent \textbf{Case 3}: $c>0, b<0$. We have $a+b < a$ and $a+b < a+b + c$. Also $a + c> \max\{a+b+ c, a \}$, so this time $(\Delta_2)$ sits below $(\Delta_1)$ but $c b < 0$, so $\diffloc{\{b, c\}}{F}(a) \geq 0$ again.\\

\noindent \textbf{Case 4}: $c<0, b>0$. So $a + c < a < a+b$ and $a + c < a+b + c$. So $a + c < \min\{a, a+b + c\}$ and $a+b > \max\{a, a + c\}$, so $(\Delta_2)$ sits below $(\Delta_1)$. Since $c b < 0$, so $\diffloc{\{b, c\}}{F}(a) \geq 0$ again.\\

\subsection{Proof of Theorem \ref{thBOOSTCH}}\label{proof_thBOOSTCH}

Let us remind key simplified notations about edges, $\forall t\geq 0$:
\begin{eqnarray}
\tilde{e}_{ti} & \defeq & y_i \cdot H_{t}(\ve{x}_i),\label{defTedge}\\
e_{ti} & \defeq & y_i \cdot 
                             \alpha_{t} h_{t}(\ve{x}_i) =
                  \tilde{e}_{ti} - \tilde{e}_{(t-1)i}.\label{defedge}
\end{eqnarray}
For short, we also let:
\begin{eqnarray}
Q^*_{ti} & \defeq & Q^*_F(\tilde{e}_{ti},\tilde{e}_{(t-1)i},v_{i(t-1)}),\\
\Delta_{ti} & \defeq & \diffloc{v_{i(t-1)}}{F}(\tilde{e}_{ti}) -
                      \diffloc{v_{i(t-1)}}{F}(\tilde{e}_{(t-1)i}), \label{defDELTA}
\end{eqnarray}
where $Q^*_{..}$ is defined in \eqref{defQSTAR}. We also split the computation of the leveraging coefficient $\alpha_t$ in \secantboost in two parts, the first computing a real $a_t$ as:
\begin{eqnarray}
a_t & \in & \frac{1}{2(1+\varepsilon_t)M_t^2 \overline{W}_{2,t}} \cdot \left[1-\pi_t,
          1+\pi_t\right], \label{pickat}
\end{eqnarray}
and then using $\alpha_t \leftarrow a_t \eta_t$. We now use
Lemma \ref{lemGENR} (main file) and get
\begin{eqnarray}
\expect_{i\sim [m]}\left[ \bregmansec{F}{v_{ti}}(
 \tilde{e}_{ti}\|\tilde{e}_{(t+1)i}) \right] & \geq & -
                                                               \expect_{i\sim
                                                               D}\left[Q^*_{(t+1)i}
                                                               \right],
                                                               \forall
                                                               t\geq 0.\label{eqBCDIFF2}
\end{eqnarray}
If we reorganise \eqref{eqBCDIFF2} using the definition of $\bregmansec{F}{.}(.\|.)$, we get:
\begin{eqnarray}
\lefteqn{\expect_{i\sim [m]}\left[
  F(\tilde{e}_{(t+1)i}) \right]}\nonumber\\
  & \leq & \expect_{i\sim [m]}\left[
  F(\tilde{e}_{ti}) \right] - \expect_{i\sim [m]}\left[ (\tilde{e}_{ti}- \tilde{e}_{(t+1)i}) \cdot \diffloc{v_{ti}}{F}(\tilde{e}_{(t+1)i})\right]
                                            + \expect_{i\sim [m]}\left[ Q^*_{(t+1)i}  \right]\nonumber\\
  & & =   \expect_{i\sim [m]}\left[
  F(\tilde{e}_{ti}) \right] - \expect_{i\sim [m]}\left[ -e_{(t+1)i} \cdot \diffloc{v_{ti}}{F}(\tilde{e}_{(t+1)i})\right]+ \expect_{i\sim [m]}\left[ Q^*_{(t+1)i}  \right]\label{useDEF1}\\
  & = &  \expect_{i\sim [m]}\left[
  F(\tilde{e}_{ti}) \right] + \alpha_{t+1}  \cdot \expect_{i\sim [m]}\left[ y_i
                                            h_{t+1} (\ve{x}_i) \cdot \diffloc{v_{ti}}{F}(\tilde{e}_{(t+1)i})\right]+ \expect_{i\sim [m]}\left[ Q^*_{(t+1)i}  \right]\label{useDEF2}\\
  & = &  \expect_{i\sim [m]}\left[
  F(\tilde{e}_{ti}) \right]  + a_{t+1}  \eta_{t+1}  \cdot \expect_{i\sim [m]}\left[ y_i
                                            h_{t+1} (\ve{x}_i) \cdot \diffloc{v_{ti}}{F}(\tilde{e}_{ti})\right]\nonumber\\
  & & + a_{t+1}  \eta_{t+1}  \cdot \expect_{i\sim [m]}\left[ y_i
                                            h_{t+1} (\ve{x}_i) \cdot
        \Delta_{(t+1)i} \right] +
      \expect_{i\sim [m]}\left[ Q^*_{(t+1)i}  \right]\label{useDEF3}\\
  & = &   \expect_{i\sim [m]}\left[
  F(\tilde{e}_{ti}) \right] - a_{t+1}  \eta_{t+1}  \cdot \underbrace{\expect_{i\sim [m]}\left[ w_{(t+1)i} y_i
                                            h_{t+1} (\ve{x}_i)\right]}_{=\eta_{t+1}}+ a_{t+1}  \eta_{t+1}  \cdot \expect_{i\sim [m]}\left[ y_i
                                            h_{t+1} (\ve{x}_i) \cdot
        \Delta_{(t+1)i} \right] \nonumber\\
  & & +
      \expect_{i\sim [m]}\left[ Q^*_{(t+1)i}  \right]\nonumber\\
  & = &   \expect_{i\sim [m]}\left[
  F(\tilde{e}_{ti}) \right] - a_{t+1} \eta^2_{t+1} + a_{t+1} \eta_{t+1} \cdot \expect_{i\sim [m]}\left[ y_i
                                            h_{t+1}(\ve{x}_i) \cdot
      \Delta_{(t+1)i} \right] +
      \expect_{i\sim [m]}\left[ Q^*_{(t+1)i}  \right]\label{eq111}.
\end{eqnarray}
\eqref{useDEF1} -- \eqref{useDEF3} make use of definitions \eqref{defedge} (twice) and \eqref{defDELTA} as well as the decomposition of the leveraging coefficient in \eqref{pickat}.

Looking at \eqref{eq111}, we see that we can have a boosting-compliant decrease of the loss if the two
quantities depending on $\Delta_{(t+1).}$ and $Q^*_{(t+1).}$ can be made small enough compared
to $a_{t+1} \eta^2_{t+1}$. This is what we investigate.\\

\noindent \textbf{Bounding the term depending on $\Delta_{(t+1).}$} -- We use
  Lemma \ref{lemComp1} with $z \defeq\tilde{e}_{ti},  z'  \defeq e_{(t+1)i}, v \defeq  v_t$, which yields (also using \eqref{defedge} and the assumption that $h_{t+1}(\ve{x}_i)\neq 0$):
  \begin{eqnarray}
    \Delta_{(t+1)i} & \defeq & \diffloc{v_{ti}}{F}(\tilde{e}_{(t+1)i}) - \diffloc{v_{ti}}{F}(\tilde{e}_{ti})\nonumber\\
                    & = & \diffloc{v_{ti}}{F}(\tilde{e}_{ti} + e_{(t+1)i}) - \diffloc{v_{ti}}{F}(\tilde{e}_{ti}) \nonumber\\
                    & = & e_{(t+1)i}\cdot \diffloc{\{e_{(t+1)i}, v_{ti}\}}{F}(\tilde{e}_{ti})\nonumber\\
    & = & y_i \cdot 
                             \alpha_{t+1} h_{t+1}(\ve{x}_i) \cdot \diffloc{\{e_{(t+1)i}, v_{ti}\}}{F}(\tilde{e}_{ti}),
    \end{eqnarray}
and so we get:
\begin{eqnarray}
\lefteqn{a_{t+1} \eta_{t+1} \cdot \expect_{i\sim [m]}\left[ y_i
                                            h_{t+1}(\ve{x}_i) \cdot
      \Delta_{(t+1)i} \right]}\nonumber\\
 & = & a_{t+1} \eta_{t+1} \cdot \expect_{i\sim [m]}\left[ \alpha_{t+1} (y_i
                                            h_{t+1}(\ve{x}_i))^2 \cdot
                             \diffloc{\{e_{(t+1)i},v_{ti}\}}{F}(\tilde{e}_{ti})\right]\nonumber\\
  & = & a_{t+1}^2 \eta_{t+1}^2 \cdot \expect_{i\sim [m]}\left[ (h_{t+1}(\ve{x}_i))^2 \cdot
                             \diffloc{\{e_{(t+1)i},v_{ti}\}}{F}(\tilde{e}_{ti})\right]\nonumber\\
  & \leq & a_{t+1}^2 \eta_{t+1}^2 M^2_{t+1} \cdot \overline{W}_{2,t+1}.\label{eq112}
\end{eqnarray}

  \noindent \textbf{Bounding the term depending on $Q^*_{.(t+1)} $} --
  We immediately get from the value picked in argument of
  $\mathbb{I}_{t+1}$ in step 2.5 of \secantboost, the definition of $\mathbb{I}_{ti}(.)$ in \eqref{defIT} and our decomposition $\alpha_t \leftarrow a_t \eta_t$ that $Q^*_{(t+1)i} \leq \varepsilon_{t+1} \cdot
                                               a_{t+1}^2  \eta_{t+1}^2
                                               M_{t+1}^2 \cdot
                                                 \overline{W}_{2,t+1}, \forall i\in [m]$, so that:
\begin{eqnarray}
  \expect_{i\sim [m]}\left[ Q^*_{(t+1)i}  \right] & \leq & \varepsilon_{t+1} \cdot
                                               a_{t+1}^2  \eta_{t+1}^2
                                               M_{t+1}^2 \cdot
                                                 \overline{W}_{2,t+1}. \label{eq113}
\end{eqnarray}
\noindent \textbf{Finishing up with the proof} -- Suppose
that we choose $\varepsilon_{t+1} > 0$, $\pi_{t+1} \in (0,1)$ and $a_{t+1}$ as in \eqref{pickat}. 
We then get from
\eqref{eq111}, \eqref{eq112}, \eqref{eq113} that for any choice of $v_{ti}$ in Step 2.5 of \secantboost,
\begin{eqnarray}
\lefteqn{\expect_{i\sim [m]}\left[
  F(\tilde{e}_{(t+1)i}) \right]}\nonumber\\
  & \leq & \expect_{i\sim [m]}\left[
  F(\tilde{e}_{ti}) \right] - a_{t+1} \eta^2_{t+1} + a_{t+1}^2 \eta_{t+1}^2 M^2_{t+1}
                                            \cdot \overline{W}_{2,t+1}
                                            + \varepsilon_{t+1} \cdot a_{t+1}^2  \eta_{t+1}^2 M_{t+1}^2 \cdot \overline{W}_{2,t+1}\nonumber\\
& & = \expect_{i\sim [m]}\left[
  F(\tilde{e}_{ti}) \right] - 
a_{t+1} \eta^2_{t+1} \cdot \left( 1
                                            - a_{t+1} \left(
          1 + \varepsilon_{t+1} \right) M_{t+1}^2 \cdot
    \overline{W}_{2,t+1} \right) \nonumber\\
  & \leq & \expect_{i\sim [m]}\left[
  F(\tilde{e}_{ti}) \right] - 
\frac{\eta^2_{t+1} (1-\pi_{t+1}^2)}{4 \left(
          1 + \varepsilon_{t+1} \right) M_{t+1}^2 \cdot
    \overline{W}_{2,t+1} } \label{beq134},
\end{eqnarray}
where the last inequality is a consequence of \eqref{pickat}.
Suppose we pick
$H_0 \defeq h_0 \in \mathbb{R}$ a constant and $v_0>0$ such that
\begin{eqnarray}
\diffloc{v_{0}}{F}(h_0) & \neq & 0.
  \end{eqnarray}
The final classifier $H_T$ of \secantboost~satisfies:
\begin{eqnarray}
\expect_{i\sim [m]}\left[
  F(y_i H_{T}(\ve{x}_i))\right] & \leq & F_0 -
                                           \frac{1}{4} \cdot
                                           \sum_{t=1}^T
                                         \frac{\eta^2_t (1-\pi_{t}^2)}{
                                           (1+\varepsilon_{t})M_t^2 \overline{W}_{2,t}}, \label{beq15}
\end{eqnarray}
with $F_0\defeq \expect_{i\sim [m]}\left[
  F(\tilde{e}_{i0}) \right] \defeq \expect_{i\sim [m]}\left[
  F(y_i H_0) \right] = \expect_{i\sim [m]}\left[
  F(y_i h_0)\right]$. If we want $\expect_{i\sim [m]}\left[
  F(y_i H_{T}(\ve{x}_i))\right] \leq F(z^*)$, assuming wlog $F(z^*)
\leq F_0$,
then it suffices to iterate until:
\begin{eqnarray}
\sum_{t=1}^T
                                         \frac{1-\pi_{t}^2}{
                                           \overline{W}_{2,t}(1+\varepsilon_{t})} \cdot \frac{\eta^2_t }{M_t^2 }& \geq & 4(F_0 - F(z^*)). \label{beq16}
\end{eqnarray}
Remind that the edge $\eta_t$ is not normalized. We have defined a normalized edge,
\begin{eqnarray}
[-1,1] \ni \tilde{\eta}_t & \defeq & \sum_i \frac{|w_{ti}|}{W_t} \cdot \tilde{y}_{ti}
                                     \cdot
                                     \frac{h_t(\ve{x}_i)}{M_t},
\end{eqnarray}
with $\tilde{y}_{ti} \defeq y_i \cdot \mathrm{sign}(w_{ti})$ and $W_t \defeq \sum_i |w_{ti}| = \sum_i |\diffloc{v_{(t-1)i}}{F}(\tilde{e}_{(t-1)i})|$. We have the simple relationship between $\eta_t$ and $\tilde{\eta}_t$:
\begin{eqnarray}
\tilde{\eta}_t & = & \sum_i \frac{|w_{ti}|}{W_t} \cdot
                                     (y_i \cdot \mathrm{sign}(w_{ti}))
                                     \cdot
                     \frac{h_t(\ve{x}_i)}{M_t}\nonumber\\
               & = & \frac{1}{W_t M_t} \cdot \sum_i w_{ti} y_i h_t(\ve{x}_i)\nonumber\\
  & = & \frac{m}{W_t M_t} \cdot \eta_t,
  \end{eqnarray}
resulting in ($\forall t\geq 1$),
\begin{eqnarray}
  \frac{\eta^2_t}{M_t^2} & = & \tilde{\eta}^2_t \cdot \left(\frac{W_t}{m}\right)^2\nonumber\\
                         & = & \tilde{\eta}^2_t \cdot \left(\expect_{i\sim [m]}\left[|\diffloc{v_{(t-1)i}}{F}(\tilde{e}_{(t-1)i})|\right]\right)^2\nonumber\\
  & \geq & \tilde{\eta}^2_t \cdot \left(\left|\expect_{i\sim [m]}\left[\diffloc{v_{(t-1)i}}{F}(\tilde{e}_{(t-1)i})\right]\right|\right)^2\nonumber\\
  & & =  \tilde{\eta}^2_t \cdot \overline{W}_{1,t}^2, \label{bETA}
\end{eqnarray}
recalling $\overline{W}_{1,t} \defeq \left|\expect_{i\sim
   D}\left[\diffloc{v_{(t-1)i}}{F}(\tilde{e}_{(t-1)i})\right]\right|$. It comes from \eqref{bETA} that a sufficient condition for \eqref{beq16} to hold is:
\begin{eqnarray}
\sum_{t=1}^T
                                         \frac{\overline{W}^2_{1,t} (1-\pi_{t}^2)}{
                                           \overline{W}_{2,t} (1+\varepsilon_{t}) }\cdot \tilde{\eta}^2_t& \geq & 4(F_0 - F(z^*)), \label{beq17}
\end{eqnarray}
which is the statement of Theorem \ref{thBOOSTCH}.

\subsection{Proof of Lemma \ref{lemSmooth}}\label{proof_lemSmooth}

We first observe that for any $a\in\mathbb{R}, b,c\in \mathbb{R}_*$,
\begin{eqnarray}
  |\diffloc{\{b, c\}}{F}(a)| & = & \frac{1}{|bc|} \cdot \left|
                                 \begin{array}{c}
                                   F(a+b+c) - F(a+c) - bF'(a+c)\\
                                   -(F(a+b) - F(a) - bF'(a))\\
                                   +b(F'(a+c) - F'(a))
                                 \end{array}\right|\nonumber\\
                             & \leq & \frac{1}{|bc|} \cdot \left(
                                      \begin{array}{c}
                                   |F(a+b+c) - F(a+c) - bF'(a+c)|\\
                                   +|(F(a+b) - F(a) - bF'(a))|\\
                                   +|b(F'(a+c) - F'(a))|
                                 \end{array}
  \right)\nonumber\\
  & \leq & \frac{1}{|bc|} \cdot \left( \frac{\beta}{2}\cdot b^2 + \frac{\beta}{2}\cdot b^2 + \beta |bc| \right) = \beta + \beta\cdot \frac{b^2}{|bc|},\label{decomp1}
\end{eqnarray}
where we used the $\beta$-smoothness of $F$ and twice \cite[Lemma 3.4]{bCOA}. We can also make a permutation in the expression of $\diffloc{\{b, c\}}{F}(a)$ and instead write
\begin{eqnarray}
  |\diffloc{\{b, c\}}{F}(a)| & = & \frac{1}{|bc|} \cdot \left|
                                 \begin{array}{c}
                                   F(a+b+c) - F(a+b) - cF'(a+b)\\
                                   -(F(a+c) - F(a) - cF'(a))\\
                                   +c(F'(a+b) - F'(a))
                                 \end{array}\right|\nonumber\\
                             & \leq & \frac{1}{|bc|} \cdot \left(
                                      \begin{array}{c}
                                   |F(a+b+c) - F(a+b) - cF'(a+b)|\\
                                   +|(F(a+c) - F(a) - cF'(a))|\\
                                   +|c(F'(a+b) - F'(a))|
                                 \end{array}
  \right)\nonumber\\
  & \leq & \frac{1}{|bc|} \cdot \left( \frac{\beta}{2}\cdot c^2 + \frac{\beta}{2}\cdot c^2 + \beta |bc| \right) = \beta + \beta\cdot \frac{c^2}{|bc|}.\label{decomp2}
\end{eqnarray}
We thus have
\begin{eqnarray}
  |\diffloc{\{b, c\}}{F}(a)| & \leq & \beta + \beta \cdot \left(\frac{\min\{|b|, |c|\}}{\sqrt{|bc|}}\right)^2 \label{bbound1}\\
  & \leq & 2\beta,\nonumber
\end{eqnarray}
by the power mean inequality \cite[Chapter III, Theorem 2]{bHO}. Since $|h_{t}(\ve{x}_i)| \leq M_t$ by definition, we thus have
\begin{eqnarray}
\left|  \expect_{i\sim [m]}\left[ \diffloc{\{e_{ti},v_{(t-1)i}\}}{F}(\tilde{e}_{(t-1)i}) \cdot \left(\frac{h_{t}(\ve{x}_i)}{M_{t}}\right)^2 \right]  \right| & \leq & 2\beta,
\end{eqnarray}
which allows us to fix $\overline{W}_{2,t} = 2\beta$ and completes the proof of Lemma \ref{lemSmooth}.

\begin{remark}
Our result is optimal in the sense that if we make one offset (say $b$) go to zero, then the ratio  in \eqref{bbound1} goes to zero and we recover the condition on the $v$-derivative of the derivative, $|\diffloc{c}{F'}(z)| \leq \beta$.
  \end{remark}

\subsection{Proof of Theorem \ref{thALPHAW2}}\label{proof_thALPHAW2}

We consider the upperbound::
\begin{eqnarray}
  \lefteqn{\overline{W}_{2,t}}\nonumber\\
  & \defeq & \left|\expect_{i\sim [m]}\left[\frac{h^2_{t}(\ve{x}_i)}{M^2_{t}} \cdot \diffloc{\{e_{ti},v_{(t-1)i}\}}{F}(\tilde{e}_{(t-1)i})\right]\right|\nonumber\\
  & = & \left|\expect_{i\sim [m]}\left[\frac{h^2_{t}(\ve{x}_i)}{M^2_{t}} \cdot \frac{1}{\tilde{e}_{ti}} \cdot \left(\frac{F(\tilde{e}_{ti}+v_{(t-1)i}) - F(\tilde{e}_{ti})}{v_{(t-1)i}} - \frac{F(\tilde{e}_{(t-1)i} + v_{(t-1)i} )-F(\tilde{e}_{(t-1)i})}{v_{(t-1)i}}\right)\right]\right|\nonumber\\
  & = & \left|\frac{1}{\alpha_{t}} \cdot \expect_{i\sim [m]}\left[\frac{h_{t}(\ve{x}_i)}{y_i M^2_{t}} \cdot \left(\frac{F(\tilde{e}_{ti}+v_{(t-1)i}) - F(\tilde{e}_{ti})}{v_{(t-1)i}} - \frac{F(\tilde{e}_{(t-1)i} + v_{(t-1)i} )-F(\tilde{e}_{(t-1)i})}{v_{(t-1)i}}\right)\right]\right|\nonumber\\
  & = & \left|\frac{1}{\alpha_{t}} \cdot \expect_{i\sim [m]}\left[\frac{y_i h_{t}(\ve{x}_i)}{M^2_{t}} \cdot \left(\diffloc{v_{(t-1)i}}{F}(\tilde{e}_{ti}) - \diffloc{v_{(t-1)i}}{F}(\tilde{e}_{(t-1)i})\right)\right]\right|\label{simpW2}
\end{eqnarray}
(The last identity uses the fact that $y_i \in \{-1,1\}$). Remark that we have extracted $\alpha_{t}$ from the denominator but it is still present in the arguments $\tilde{e}_{ti}$. For any classifier $h$, we introduce notation
\begin{eqnarray*}
\eta(\ve{w}, h) & \defeq & \expect_{i\sim [m]}\left[w_i y_i h(\ve{x}_i)\right],
\end{eqnarray*}
and so $\eta_{t}$ (Step 2.2 in \secantboost) is also $\eta(\ve{w}_{t}, h_{t})$, which is guaranteed to be non-zero by the Weak Learning Assumption \eqref{assum3wla}. We want, for \textit{some} $\epsilon_{t} > 0, \pi_{t} \in [0,1)$,
\begin{eqnarray}
\alpha_{t} & \in & \frac{\eta_t}{2(1+\varepsilon_{t})M_{t}^2 \overline{W}_{2,t}} \cdot \left[1-\pi_{t},
          1+\pi_{t}\right]. \label{pickaA}
\end{eqnarray}
This says that the sign of $\alpha_{t}$ is the same as the sign of $\eta(\ve{w}_{t}, h_{t}) = \eta_t$. Since we know its sign, let us look for its absolute value:
\begin{eqnarray}
|\alpha_{t}| & \in & \frac{|\eta_t|}{2(1+\varepsilon_{t})M_{t}^2 \overline{W}_{2,t}} \cdot \left[1-\pi_{t},
          1+\pi_{t}\right]. \label{pickaAbs}
\end{eqnarray}
From \eqref{genALPHA} (main file), we can in fact search $\alpha_t$ in the union of all such intervals for $\epsilon_{t} > 0, \pi_{t} \in [0,1)$, which amounts to find first:
\begin{eqnarray*}
|\alpha_{t}| & \in & \left(0, \frac{|\eta_t|}{M_{t}^2 \overline{W}_{2,t}}\right),
\end{eqnarray*}
and then find any $\epsilon_{t} > 0, \pi_{t} \in [0,1)$ such that \eqref{pickaAbs} holds. Using \eqref{simpW2} and simplifying the external dependency on $\alpha_{t}$, we then need
\begin{eqnarray}
1 & \in & \left(0, \underbrace{\frac{|\eta_t|}{|\expect_{i\sim [m]}\left[y_i h_{t}(\ve{x}_i) \cdot \left(\diffloc{v_{(t-1)i}}{F}(\alpha_t y_i h_t(\ve{x}_i) + \tilde{e}_{(t-1)i}) - \diffloc{v_{(t-1)i}}{F}(\tilde{e}_{(t-1)i})\right)\right]|}}_{\defeq B(\alpha_{t})}\right),\label{constINT}
\end{eqnarray}
under the constraint that the sign of $\alpha_t$ be the same as that of $\eta_t$. But, using notation \eqref{defpartialW} (main file), we have
\begin{eqnarray*}
B(\alpha_{t}) & = & |\eta(\ve{w}_{t}, h_{t}) - \eta(\tilde{\ve{w}}_{t}(\alpha_t), h_{t})|,
\end{eqnarray*}
and so to get \eqref{constINT} satisfied, it is sufficient that
\begin{eqnarray}
\frac{|\eta_t - \eta(\tilde{\ve{w}}_{t}(\alpha_t), h_{t})|}{|\eta_t|} & < & 1, \label{eqsolalpha}
\end{eqnarray}
which is Step 1 in \findalpha. The Weak Learning Assumption \eqref{assum3wla} guarantees that the denominator is $\neq 0$ so this can always be evaluated. The continuity of $F$ in all $\tilde{e}_{(t-1)i}$ guarantees $\lim_{\alpha_t \rightarrow 0} \eta(\tilde{\ve{w}}_{t}(\alpha_t), h_{t}) = \eta_t$, and thus guarantees the existence of solutions to \eqref{eqsolalpha} for some $|\alpha_t| > 0$.

To summarize, finding $\alpha_t$ can be done in two steps, (i) solve
  \begin{eqnarray*}
\frac{\left|\eta_t - \eta(\tilde{\ve{w}}_{t}(\mathrm{sign}(\eta_t) \cdot a), h_{t})\right|}{|\eta_t|} & < & 1
  \end{eqnarray*}
  for some $a > 0$ and (ii) let $\alpha_{t} \defeq \mathrm{sign}(\eta_t) \cdot a$. This is the output of \findalpha($S, \ve{w}_t, h_t$), which ends the proof of Theorem \ref{thALPHAW2}.


\subsection{Implementation of the offset oracle: particular cases}\label{sec-offset-app}

Consider the "spring loss" that we define, for $[.]$ denoting the nearest integer, as:
\begin{eqnarray}
F_{\sllabel}(z) \defeq  \log(1+\exp(-z)) + 1 - \sqrt{1-4\left(z-[z]\right)^2}\label{defBWL}.
  \end{eqnarray}
  Figure \ref{f-Spring} plots this loss, which composes the logistic loss with a "$\mathsf{U}$"-shaped term. This loss would escape all optimization algorithms of Table \ref{tab:features} (\supplement), yet there is a trivial implementation of our offset oracle, as explained in Figure \ref{f-Spring}:
  \begin{enumerate}
  \item if the interval $\mathbb{I}$ defined by $\tilde{e}_{(t-1)i}$ and $\tilde{e}_{ti}$ contains at least one peak, compute the tangence point ($z_t$) at the closest local "$\mathsf{U}$" that passes through $(\tilde{e}_{(t-1)i}, F(\tilde{e}_{(t-1)i}))$; then if $z_t \in \mathbb{I}$ then $v_{ti} \leftarrow z_t - \tilde{e}_{(t-1)i}$, else $v_{ti} \leftarrow \tilde{e}_{ti} - \tilde{e}_{(t-1)i}$;
    \item otherwise $F$ in $\mathbb{I}$ is strictly convex and differentiable: a simple dichotomic search can retrieve a feasible $v_{ti}$ (see convex losses below);
    \end{enumerate}
Notice that one can alleviate the repetitive dichotomic search by pre-tabulating a feasible $v$ for a set of differences $|a-b|$ ($a,b$ belonging to the abscissae of the same "$\mathsf{U}$") decreasing by a fixed factor, choosing $v_{ti} \leftarrow v$ of the largest tabulated $|a-b|$ no larger than $|\tilde{e}_{ti}-\tilde{e}_{(t-1)i}|$. \\
\noindent \textbf{Discontinuities} discontinuities do not represent issues if the argument $z$ of $\mathbb{I}_{ti}(z)$ is large enough, as shown from the following simple Lemma.
\begin{lemma}\label{lemSMALLDISC}
  Define the discontinuity of $F$ as:
\begin{eqnarray}
  \disc(F) \defeq \max \left\{
  \begin{array}{l}
    \sup_z |F(z) - \lim_{z^-}F(z)|, \\
    \sup_z |F(z) - \lim_{z^+}F(z)|
    \end{array}\right\}.
\end{eqnarray}
For any $z\geq 0$, if $\disc(F) \leq z$ then $\mathbb{I}_{ti}(z)\neq \emptyset, \forall t\geq 1, \forall i \in [m]$.
\end{lemma}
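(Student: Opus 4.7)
First, I would exploit the graphical interpretation from Figure \ref{f-Iti-const}: $v \in \mathbb{I}_{ti}(z)$ iff the secant line $\ell_v$ through $(\tilde{e}_{ti}, F(\tilde{e}_{ti}))$ and $(\tilde{e}_{ti}+v, F(\tilde{e}_{ti}+v))$ satisfies $\ell_v(x) \leq F(x) + z$ everywhere on $\mathbb{I}_{\tilde{e}_{ti}, \tilde{e}_{(t-1)i}}$. This reformulation follows directly by expanding the definition of $Q^*_F$ and rewriting $Q_F(a,a+v,c)$ as $\max_{x \in \mathbb{I}_{a,c}} \{\ell_v(x) - F(x)\}$. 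Setting $a \defeq \tilde{e}_{ti}$, $c \defeq \tilde{e}_{(t-1)i}$ (WLOG $a < c$, the other case being symmetric), my plan is to construct such a $v$ via a supporting-line argument.

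Next, I would define $s_\star \defeq \inf_{x \in (a, c]} (F(x) - F(a))/(x - a)$, the largest slope for which the affine function $\ell_s(x) \defeq F(a) + s(x-a)$ can stay below $F$ on $[a, c]$. In the ``regular'' case where this infimum is attained at some $b^* \in (a, c]$ — which always holds when $F$ is continuous on $[a, c]$, by compactness — the line $\ell_{s_\star}$ passes through both $(a, F(a))$ and $(b^*, F(b^*))$ and satisfies $\ell_{s_\star} \leq F \leq F + z$ pointwise on $[a, c]$. Setting $v \defeq b^* - a$ then yields $Q^*_F(a, c, v) = 0 \leq z$, so $v \in \mathbb{I}_{ti}(z)$.

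In the ``irregular'' case where the infimum is not attained, the obstruction must come from a downward jump of $F$ (either at $a^+$ or at an interior point of $(a, c]$), since continuity of $F$ on a compact interval would force attainment. I would then pick $b^\star \in (a, c]$ on the ``low'' side of this jump and close enough to it that the achievable secant slope $s^\star \defeq (F(b^\star)-F(a))/(b^\star - a)$ exceeds $s_\star$ by only a negligible amount. The line $\ell_{s^\star}$ then overshoots $F$ only in a neighborhood of the offending jump, and the overshoot is controlled by the jump height — which is at most $\disc(F) \leq z$ by hypothesis. Outside that neighborhood, $\ell_{s^\star}$ stays strictly below $F$. Consequently, $v \defeq b^\star - a \in \mathbb{I}_{ti}(z)$.

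The hard part will be the irregular case: rigorously showing that the overshoot of $\ell_{s^\star}$ above $F$ on $[a, c]$ is bounded by the jump height $\disc(F)$. This requires a careful localization argument — the line $\ell_{s^\star}$ is pinned to $F$ at both $(a, F(a))$ and $(b^\star, F(b^\star))$ and lies arbitrarily close to the nearly-attaining sequence of secants elsewhere — together with the observation that the excess of $\ell_{s^\star}$ over $F$ is precisely absorbed by the downward jumps, each of height at most $\disc(F) \leq z$. The hypothesis $\disc(F) \leq z$ is used solely here, to cap this absorbed excess by the allowable tolerance.
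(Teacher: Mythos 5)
The paper does not actually prove Lemma~\ref{lemSMALLDISC} --- it is announced as a ``simple Lemma'' and supported only graphically (Figures~\ref{f-Iti-const} and~\ref{f-BII-1}) --- so there is no paper proof to compare against. Your reformulation of $v\in\mathbb{I}_{ti}(z)$ as a secant-overshoot bound, and the idea of driving the secant slope toward $s_\star\defeq\inf_{x\in(a,c]}(F(x)-F(a))/(x-a)$ (with $a\defeq\tilde{e}_{ti}$, $c\defeq\tilde{e}_{(t-1)i}$, say $a<c$), are the right starting points; but two steps in your case split are wrong, and the ``irregular'' case is left hanging exactly where the work is.

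First, ``$s_\star$ is attained whenever $F$ is continuous on $[a,c]$, by compactness'' is false: the quotient lives on the non-compact set $(a,c]$, and its infimum can fail to be attained --- and can even be $-\infty$ --- for continuous $F$. With $F(x)=F(a)+(x-a)^2$ the quotient is $x-a$, whose infimum $0$ is a limit at $a^+$ and never achieved; with $F(x)=F(a)-\sqrt{x-a}$ it is $-1/\sqrt{x-a}\to-\infty$. Second, the claim that non-attainment forces a jump of $F$ and that the jump height caps the overshoot misplaces where $\disc(F)\leq z$ is used. Whenever $s_\star>-\infty$ (which includes continuous cusps at $a^+$ and interior jumps), picking $b^\star$ with slope $s^\star<s_\star+\epsilon$ already gives $\ell_{s^\star}(x)-F(x)=(x-a)\bigl(s^\star-(F(x)-F(a))/(x-a)\bigr)\leq(c-a)\epsilon$ on $(a,c]$, which $\to 0$; no bound on $\disc(F)$ is needed. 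The hypothesis is needed only when $s_\star=-\infty$, which forces $\lim_{x\to a^+}F(x)<F(a)$, i.e.\ a drop of $F$ across the \emph{current} edge $\tilde{e}_{ti}$ itself; taking $b_n\to a^+$ with $F(b_n)\to L\defeq\lim_{x\to a^+}F(x)$ drives the secant slope to $-\infty$, the overshoot away from $a$ to $-\infty$, and the overshoot near $a$ to $\leq F(a)-L\leq\disc(F)$ in the $\limsup$. So the controlling discontinuity is the one at the current edge, not interior ones.

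Even after these repairs the argument yields only $\limsup_n Q_F(a,b_n,c)\leq\disc(F)$, which closes the proof only under the \emph{strict} hypothesis $\disc(F)<z$. The non-strict version is false once $v=0$ is excluded (as it must be, since the $v$-derivative $\diffloc{v}{F}$ is undefined there): for $z=0$ and $F$ strictly convex and continuous ($\disc(F)=0$) one has $Q^*_F(\tilde{e}_{ti},\tilde{e}_{(t-1)i},v)>0$ for all $v\neq0$; and for $F(a)=1$, $F(x)=-\sqrt{x-a}$ when $x>a$ (so $\disc(F)=1$) a direct calculation gives $Q_F(a,a+w,c)=1+w/(4(1+\sqrt{w}))>1$ for every $w>0$. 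So prove the statement with $\disc(F)<z$, which is all that \secantboost~needs since the argument $z$ passed to \offsetoracle~in Step~2.5 scales with the freely chosen $\varepsilon_t$, and make the exclusion $v\neq0$ explicit.
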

Figure \ref{f-BII-1} (c) shows a case where the discontinuity is larger than $z$. In this case, an issue eventually happens for computing the next weight happens, only when the current edge is at the discontinuity. We note that as iterations increase and the weak learner finds it eventually more difficult to return weak hypotheses with $\eta_.$ large enough, the discontinuities may become an issue for \secantboost~to not stop at Step 2.5. Or one can always use a simple trick to avoid stopping and which relies on the leveraging coefficient $\alpha_t$: this is described in the \supplement, Section \ref{sec-handling}.\\
\noindent \textbf{The case of convex losses} If $F$ is convex (not necessarily differentiable nor strictly convex), there is a simple way to find a valid output for the offset oracle, which relies on the following Lemma.
\begin{lemma}\label{lemOBIconv}
  Suppose $F$ convex. Then for any $z, z' \in \mathbb{R}, v\neq 0$,
  \begin{eqnarray}
    \lefteqn{\{v > 0 :  Q^*_F(z,z',v) = r \}}\nonumber\\
    & = & \left\{v > 0 :  D_F\left(z\left\| \frac{F(z+v) - F(z)}{v}\right.\right) = r \right\} . \label{eqBREGSYS}
    \end{eqnarray}
\end{lemma}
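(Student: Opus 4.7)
My plan is to reduce the computation of $Q^*_F(z, z', v)$ in the convex case to the Bregman divergence $D_F(z \| s)$ evaluated at the secant slope $s = \diffloc{v}{F}(z) = (F(z+v)-F(z))/v$, thereby obtaining a \emph{pointwise} identity of the two expressions (in which $z'$ does not appear on either side); the claimed set equality then follows immediately by comparing defining conditions.

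First, I would unfold definitions. By \eqref{defQSTAR}, since $F$ is convex we have $Q^*_F(z,z',v) = Q_F(z, z+v, z+v)$, and by Definition \ref{defBI} the latter is $\max_{\alpha : (z,z+v)_\alpha \in \mathbb{I}_{z,z+v}} \{(F(z)F(z+v))_\alpha - F((z,z+v)_\alpha)\}$. For $v>0$, the constraint collapses to $\alpha \in [0,1]$. Re-parametrising by $u = \alpha z + (1-\alpha)(z+v) \in [z,z+v]$, the convex combination $(F(z)F(z+v))_\alpha$ becomes the affine interpolant $L(u) := F(z) + s(u-z)$ with $s := (F(z+v)-F(z))/v = \diffloc{v}{F}(z)$, so that
\begin{eqnarray*}
Q^*_F(z,z',v) & = & F(z) - sz + \max_{u \in [z,z+v]} \{su - F(u)\}.
\end{eqnarray*}

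The key step is to show that the restricted supremum above equals the unrestricted Fenchel conjugate $F^\star(s)$. Since $F$ is convex, $u \mapsto F(u) - su$ is convex, so its minimiser is any point where $s$ is a subgradient of $F$. By convexity, $\partial F$ is monotone and the secant slope $s$ always lies between the one-sided derivatives $F'_+(z)$ and $F'_-(z+v)$; hence there exists $u^\star \in [z,z+v]$ with $s \in \partial F(u^\star)$, so the unconstrained supremum of $su - F(u)$ over $\mathbb{R}$ is achieved inside $[z,z+v]$ and equals $F^\star(s)$. This gives the pointwise identity
\begin{eqnarray*}
Q^*_F(z,z',v) & = & F(z) + F^\star(s) - sz \;\; = \;\; D_F(z \| s) \;\; = \;\; D_F\!\left(z \, \left\| \, \frac{F(z+v) - F(z)}{v}\right.\right).
\end{eqnarray*}

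The main obstacle is the mean-value-slope argument for convex (not necessarily differentiable) $F$; I would handle it via the monotonicity of the subdifferential and the standard inequality $F'_+(z) \leq s \leq F'_-(z+v)$, together with closedness of the graph of $\partial F$ to produce $u^\star$. Once the pointwise identity is in hand, the set equality in \eqref{eqBREGSYS} is immediate: both sets are $\{v > 0 : \Phi(v) = r\}$ for the same function $\Phi(v) = D_F(z \| \diffloc{v}{F}(z)) = Q^*_F(z,z',v)$, so they coincide. (Note that $z'$ has dropped out, consistent with the convex branch of \eqref{defQSTAR} not depending on $z'$.)
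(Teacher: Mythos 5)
Your proof follows essentially the same route as the paper's: reparametrise the OBI maximisation by the abscissa $u$ along the secant, rewrite the expression so the Fenchel conjugate $F^\star$ appears, and read off the Bregman divergence $D_F\!\left(z \,\middle\|\, \diffloc{v}{F}(z)\right)$ pointwise, after which the set equality is immediate. The one place you diverge is that you explicitly justify why the \emph{restricted} supremum over $u \in [z,z+v]$ equals the \emph{unrestricted} one defining $F^\star$: because the secant slope $s$ lies between $F'_+(z)$ and $F'_-(z+v)$, convexity (monotonicity and graph-closedness of $\partial F$) guarantees a maximiser of $u\mapsto su - F(u)$ inside $[z,z+v]$. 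The paper passes directly from the restricted maximum in Definition~\ref{defBI} to $\sup_t\{ts-F(t)\}=F^\star(s)$, appealing to Figure~\ref{f-OBI-conv} rather than spelling this out; your subdifferential argument is exactly the missing justification and is correct. Your observation that $z'$ drops out on the convex branch of \eqref{defQSTAR} is also accurate and clarifies why the lemma's statement is really about a one-variable function of $v$.
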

(proof in \supplement, Section \ref{proof_lemOBIconv}) By definition, $\mathbb{I}_{ti}(z') \subseteq \mathbb{I}_{ti}(z)$ for any $z'\leq z$, so a simple way to implement the offset oracle's output $\offsetoracle(t, i, z)$ is, for some $0<r<z$, to solve the Bregman identity in the RHS of \eqref{eqBREGSYS} and then return any relevant $v$. If $F$ is strictly convex, there is just one choice.

If solving the Bregman identity is tedious but $F$ is strictly convex, there a simple dichotomic search that is guaranteed to find a feasible $v$. It exploits the fact that the abscissa maximizing the difference between any secant of $F$ and $F$ has a simple closed form (see \cite[Supplement, Figure 13]{cnBD}) and so the OBI in \eqref{eqBI} (Definition \ref{defBI}) has a closed form as well. In this case, it is enough, after taking a first non-zero guess for $v$ (either positive or negative), to divide it by a constant $>1$ until the corresponding OBI is no larger than the $z$ in the query $\offsetoracle(t, i, z)$.

\begin{figure}[t]
\begin{center}
\includegraphics[trim=90bp 170bp 300bp 200bp,clip,width=0.9\linewidth]{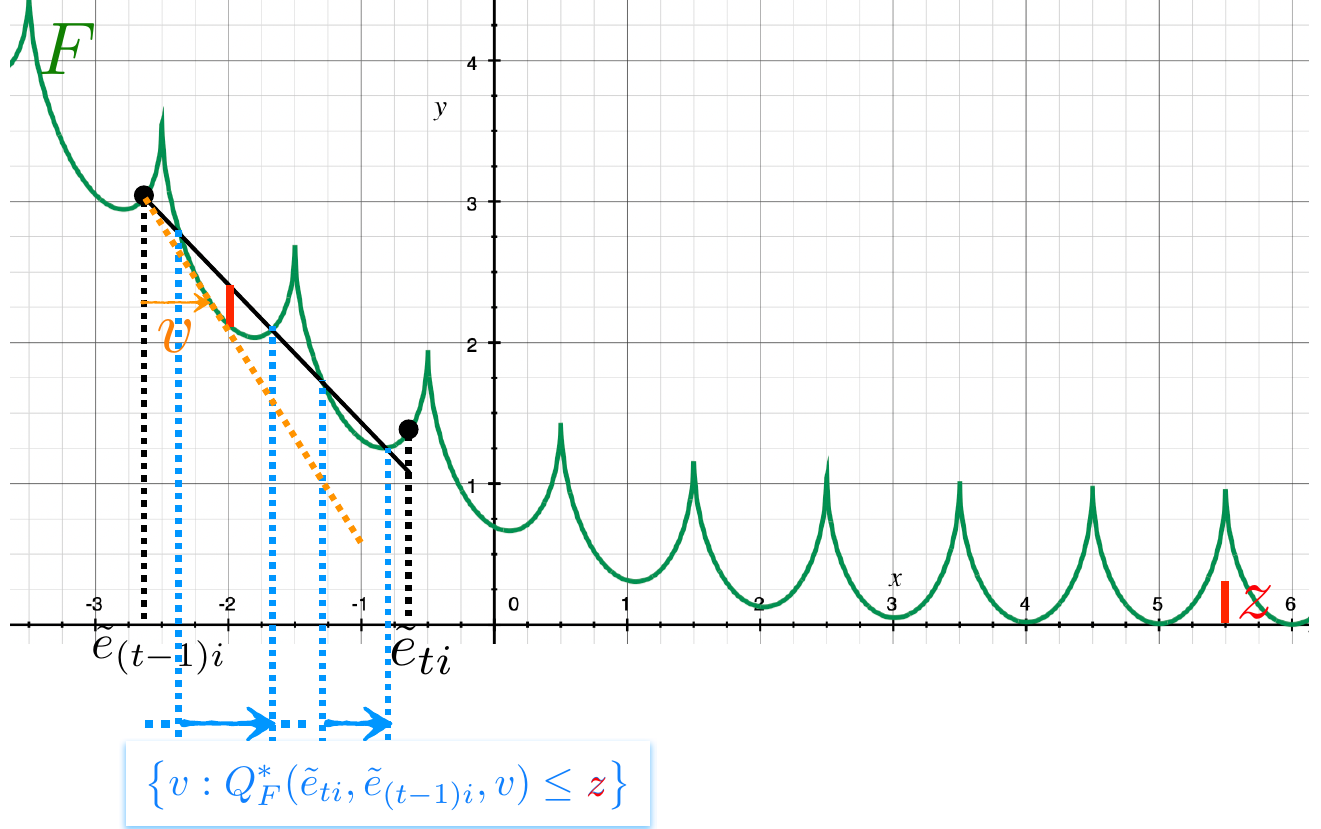}
\end{center}
\caption{The spring loss in \eqref{defBWL} is neither convex, nor Lipschitz or differentiable and has an infinite number of local minima. Yet, an implementation of the offset oracle is trivial as an output for \offsetoracle~can be obtained from the computation of a single tangent point (here, the orange $v$, see text; best viewed in color).}
  \label{f-Spring}
\end{figure}

\subsection{Proof of Lemma \ref{lemOBIconv}}\label{proof_lemOBIconv}

\begin{figure}[t]
  \begin{center}
      \includegraphics[trim=70bp 50bp 110bp 80bp,clip,width=0.8\linewidth]{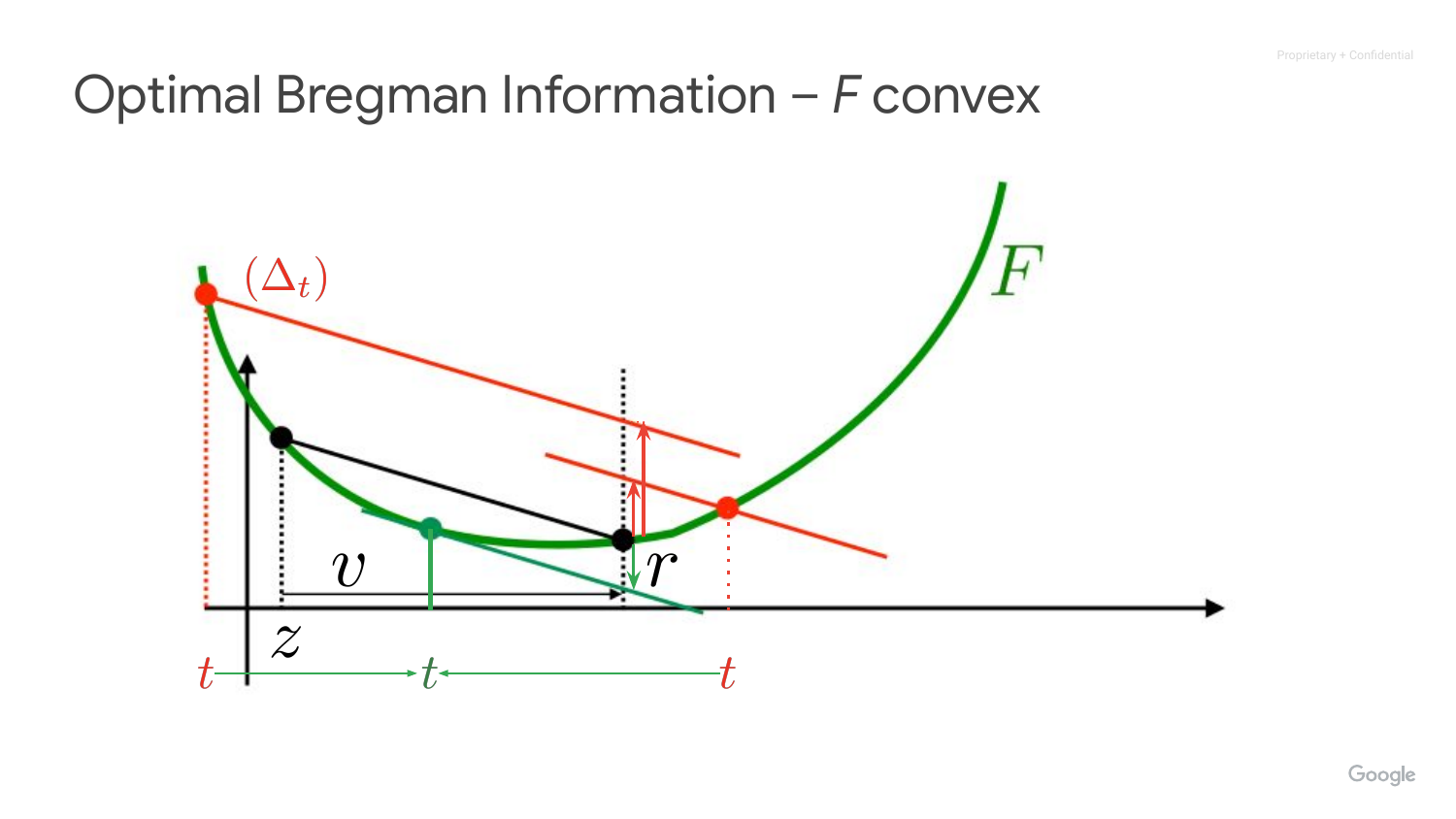}
  \end{center}
\caption{Computing the OBI $Q_F(z,z+v,z+v)$ for $F$ convex, $(z,v)$ being given and $v>0$. We compute the line $(\Delta_t)$ crossing $F$ at any point $t$, with slope equal to the secant $[(z,F(z)),(z+v,F(z+v))]$ and then the difference between $F$ at $z+v$ and this line at $z+v$. We move $t$ so as to maximize this difference. The optimal $t$ (in green) gives the corresponding OBI. In \eqref{defIzvr} and \ref{defIzvr2}, we are interested in finding $v$ given this difference, $r$. We also need to replicate this computation for $v<0$.}
  \label{f-OBI-conv}
\end{figure}

$F$ being convex, we first want to compute the set
\begin{eqnarray}
  \mathbb{I}_{z,r} \defeq \{v > 0 :  Q_F(z,z+v,z+v) = r \}, \label{defIzvr}
\end{eqnarray}
where $r$ is supposed small enough for $\mathbb{I}_{z,r}$ to be non-empty. There is a simple graphical solution to this which, as Figure \ref{f-OBI-conv} explains, consists in finding $v$ solution of
\begin{eqnarray}
\sup_t F(z+v) - \left(F(t) + \left(\frac{F(z+v) - F(z)}{v}\right)\cdot(z+v-t)\right) & = & r.
\end{eqnarray}
The LHS simplifies:
\begin{eqnarray*}
  \lefteqn{\sup_t F(z+v) - \left(F(t) + \left(\frac{F(z+v) - F(z)}{v}\right)\cdot(z+v-t)\right)}\\
  & = & \frac{(z+v)F(z) -z F(z+v)}{v} + \sup_t \left\{ t\cdot \frac{F(z+v) - F(z)}{v} - F(t) \right\}\\
  & = & \frac{(z+v)F(z) -z F(z+v)}{v} + F^\star\left(\frac{F(z+v) - F(z)}{v}\right)\\
  & = & F(z) + F^\star\left(\frac{F(z+v) - F(z)}{v}\right) - z \cdot \frac{F(z+v) - F(z)}{v}\\
  & = & D_F\left(z\left\| \frac{F(z+v) - F(z)}{v}\right.\right),
\end{eqnarray*}
so we end up with an equivalent but more readable definition for $\mathbb{I}_{z,r}$:
\begin{eqnarray}
  \mathbb{I}_{z,r} & = & \left\{v > 0 :  D_F\left(z\left\| \frac{F(z+v) - F(z)}{v}\right.\right) = r \right\}, \label{defIzvr2}
\end{eqnarray}
which yields the statement of the Lemma.

\subsection{Handling discontinuities in the offset oracle to prevent stopping in Step 2.5}\label{sec-handling}

Theorem \ref{thBOOSTCH} and Lemma \ref{corBOOSTRATE} require to run \secantboost~for as many iterations are required. This implies not early stopping in Step 2.5. Lemma \ref{lemSMALLDISC} shows that early stopping can only be triggered by too large local discontinuities at the edges. This is a weak requirement on running \secantboost, but there exists a weak assumption on the discontinuities of the loss itself that simply prevent any early stopping and does not degrade the boosting rates. The result exploits the freedom in choosing $\alpha_t$ in Step 2.3.

\begin{lemma}\label{lemDISC}
  Suppose $F$ is any function defined over $\mathbb{R}$ discontinuities of zero Lebesgue measure. Then Corollary \ref{corBOOSTRATE} holds for boosting $F$ with its inequality strict while never triggering early stopping in Step 2.5 of \secantboost.
\end{lemma}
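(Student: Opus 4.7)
The plan is to leverage the freedom in choosing $\alpha_t$ at Step~2.3 (or the broader range~\eqref{genALPHA}) to keep every edge $\tilde{e}_{ti}$ outside the discontinuity set $D\subset\mathbb{R}$ of $F$, which by hypothesis has Lebesgue measure zero. First I would handle the base case by shifting $h_0$ infinitesimally if necessary — the perturbation trick of Section~\ref{remove-nonzero} absorbs this into the rate — so that both $\tilde{e}_{0i}=y_ih_0\notin D$ for every $i\in[m]$. For the inductive step, assume every $\tilde{e}_{(t-1)i}\notin D$; then Theorem~\ref{thALPHAW2} applies and yields a non-degenerate admissible interval $\mathcal{A}_t$ for $\alpha_t$. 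For each $i$ the affine map $\alpha\mapsto \tilde{e}_{(t-1)i}+\alpha\,y_ih_t(\ve{x}_i)$ has non-zero slope by Assumption~\ref{assum1finite}, so its preimage of $D$ is Lebesgue-null; taking the finite union over $i\in[m]$ keeps measure zero, and the complement in $\mathcal{A}_t$ has full measure. Picking $\alpha_t$ from this full-measure subset places all new edges in $\mathbb{R}\setminus D$, preserves the induction, and maintains the continuity precondition needed to re-invoke Theorem~\ref{thALPHAW2} at the next round.

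Next I would verify that, with both endpoint edges at continuity points of $F$, the set $\mathbb{I}_{ti}(z_t)$ queried in Step~2.5 is non-empty, where $z_t\defeq\varepsilon_t\alpha_t^2M_t^2\overline{W}_{2,t}>0$. The construction of Figure~\ref{f-Iti-const} is the template: rotate a secant anchored at $(\tilde{e}_{ti},F(\tilde{e}_{ti}))$ until its parallel translate at vertical distance $z_t$ sits under $F$ on $\mathbb{I}_{\tilde{e}_{ti},\tilde{e}_{(t-1)i}}$. Because no jump obstructs either endpoint, the convex branch of~\eqref{defQSTAR} is handled by Lemma~\ref{lemOBIconv} together with $Q_F(z,z+v,z+v)\to 0$ as $v\to 0$; for the non-convex branch I would argue that the OBI can be driven below the strictly positive threshold $z_t$ by jointly (i) shrinking $\alpha_t$ inside $\mathcal{A}_t$ so that $\mathbb{I}_{\tilde{e}_{ti},\tilde{e}_{(t-1)i}}$ is short, and (ii) using continuity at $\tilde{e}_{(t-1)i}$ to bound the local oscillation of $F$, which in turn bounds the sup-deviation of the chosen secant from $F$. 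Consequently Step~2.5 never early-stops.

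Finally, substituting the chosen $\alpha_t$ and offsets into the per-iteration inequality~\eqref{beq134} from the proof of Theorem~\ref{thBOOSTCH} yields a \emph{strictly} positive decrease each round, since its right-hand side $\tfrac14\eta_t^2(1-\pi_t^2)/[(1+\varepsilon_t)M_t^2\overline{W}_{2,t}]$ is strictly positive under the WLA ($\eta_t\neq 0$) and the finite choices $\varepsilon_t>0, \pi_t\in(0,1)$. Telescoping and combining with the $\rho$-WRA as in Corollary~\ref{corBOOSTRATE} then gives $F(S,H_T)<F(z)$ with strict inequality. The hard part is the second paragraph: pointwise continuity at the two endpoints does not automatically control the OBI on the whole interval $\mathbb{I}_{\tilde{e}_{ti},\tilde{e}_{(t-1)i}}$, and the argument must thread the needle between shrinking $\alpha_t$ (which both tightens the interval and reduces $z_t$) and leveraging only local behaviour of $F$ at the continuity edges — a property that remains available under the zero-measure hypothesis on $D$ and is precisely what the freedom in $\alpha_t$ at Step~2.3 was designed to exploit.
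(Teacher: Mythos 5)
Your measure-theoretic argument for keeping the edges away from the discontinuity set $D$ is clean and in some respects tighter than the paper's own: you correctly use the fact that the preimage of a Lebesgue-null set under a non-degenerate affine map is null and take a finite union over $i\in[m]$, whereas the paper's proof slips into the stronger statement ``there are only finitely many discontinuities on $F$'' --- which the Lebesgue-measure-zero hypothesis does not imply. Both proofs share the same blueprint (exploit the interval of admissible $\alpha_t$ to dodge $D$; argue that with no discontinuity at the endpoints Step~2.5 never empties out), so the main line of attack is the same, and your version of the first part is arguably a fix.

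There are two genuine issues, though. First, you misread what ``its inequality strict'' refers to. In the paper it is the hypothesis inequality \eqref{corCONV} on the number of iterations $T$ that becomes strict, not the conclusion $F(S,H_T)\leq F(z)$. The paper needs this because, in the branch where $\alpha_t$ is set via Step~2.3 with a fixed $\overline{W}_{2,t}$, the way it dodges discontinuities is to nudge $\epsilon_t$ up to some $\epsilon'_t$, which increases the right-hand side of \eqref{corCONV}; the strict inequality absorbs that slack as $\epsilon'_t-\epsilon_t\to 0$. Your telescoping argument does not yield $F(S,H_T)<F(z)$: the per-iteration decreases are strictly positive, but the sufficient condition in \eqref{beq16} is a $\geq$, and the telescoped sum hitting the threshold exactly still only gives $\leq$. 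Interestingly, your measure-theoretic selection of $\alpha_t$ works inside a fixed admissible interval without touching $\epsilon_t$ or $\pi_t$, so under your argument the strict inequality may not even be needed in that branch --- but that is a point worth making explicitly rather than asserting a strict conclusion that does not follow.

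Second, the gap you honestly flag --- that continuity of $F$ at the two endpoint edges does not by itself control the supremum defining $Q^*_F$ on the whole interval $\mathbb{I}_{\tilde{e}_{ti},\tilde{e}_{(t-1)i}}$ --- is real, and the paper's proof also does not close it: it simply declares that ensuring continuity in neighborhoods of the edges ``is sufficient'' to avoid stopping in Step~2.5. The tension you identify (shrinking $\alpha_t$ also shrinks the threshold $z_t=\varepsilon_t\alpha_t^2M_t^2\overline{W}_{2,t}$ quadratically) is exactly the delicate point; neither proof gives a quantitative argument resolving it. So the core gap is shared, and your write-up is more candid about it.
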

\begin{proof}
To show that we never trigger stopping in Step 2.5, it is sufficient to show that we can run \secantboost while ensuring $F$ is continuous in an open neighborhood around all edges $y_iH_t(\ve{x}_i), \forall i \in [m], \forall t\geq 0$ (by letting $H_0 \defeq h_0$). Remind that $\tilde{e}_{ti} \defeq \tilde{e}_{(t-1)i} + \alpha_t \cdot y_t h_t(\ve{x}_i)$, so changing $\alpha_t$ changes all edges. We just have to show that either computing $\alpha_t$ ensures such a continuity, or $\alpha_t$ can be slightly modified to do so. We have two ways to compute $\alpha_t$:
\begin{enumerate}
\item [1.] using a value for $\overline{W}_{2,t}$ that represents an "absolute" upperbound in the sense of \eqref{boundW2} (\textit{e.g.} Lemma \ref{lemSmooth}) and then compute $\alpha_t$ as in Step 2.3 of \secantboost;
  \item [2.] using algorithm \findalpha.
  \end{enumerate}
Because of the assumption on $F$, we can always ensure that $F$ is continuous in an open neighborhood of all edges (the basis of the induction amounts to a straightforward choice for $h_0$). This proves the Lemma for [2.].
  
If we rely on [1.] and the $\alpha_t$ computed leads to some discontinuities, then we have complete control to change $\alpha_t$: any continuous change of $\epsilon_t$ induces a continuous change in $\alpha_t$ and thus a continuous change of all edges as well. So, starting from the initial $\epsilon_t$ chosen in Step 2.3, we increase it to a value $\epsilon_t^* > \epsilon_t$, which we want to keep as small as possible. We can define for each $i \in [m]$ an open set $(a_i, b_i)$ which is the interval spanned by the new $\tilde{e}_{ti}(\epsilon'_t)$ using $\epsilon'_t \in (\epsilon_t, \epsilon_t^*)$. Since there are only finitely many discontinuities on $F$, there exists a small $\epsilon_t^* > \epsilon_t$ such that
\begin{eqnarray*}
\forall i \in [m], \forall z \in (a_i, b_i), F \mbox{ is continuous on } z.
\end{eqnarray*}
This means that $\forall \epsilon'_t \in (\epsilon_t, \epsilon_t^*)$, we end up with a loss without any discontinuities on the new edges. Now comes the reason why we want $\epsilon_t^* - \epsilon_t$ small: we can check that there always exist a small enough $\epsilon_t^*>\epsilon_t$ such that for any $\epsilon'_t$ we choose, the boosting rate in Corollary \ref{corBOOSTRATE} is affected by at most 1 additional iteration. Indeed, while we slightly change parameter $\epsilon_t$ to land all new edges outside of discontinuities of $F$, we \textit{also} increase the contribution of the boosting iteration in the RHS of \eqref{corCONV} by a quantity $\delta > 0$ which can be made as small as required --- hence we can just replace the inequality in \eqref{corCONV} by a strict inequality. This proves the statement of the Lemma if we rely on [1.] above.

This completes the proof of Lemma \ref{lemDISC}.
\end{proof}

\subsection{A boosting pattern that can "survive" above differentiability}\label{sec-survive}

\begin{figure}[t]
\begin{center}
\begin{tabular}{c} 
\includegraphics[trim=140bp 50bp 130bp 70bp,clip,width=0.90\linewidth]{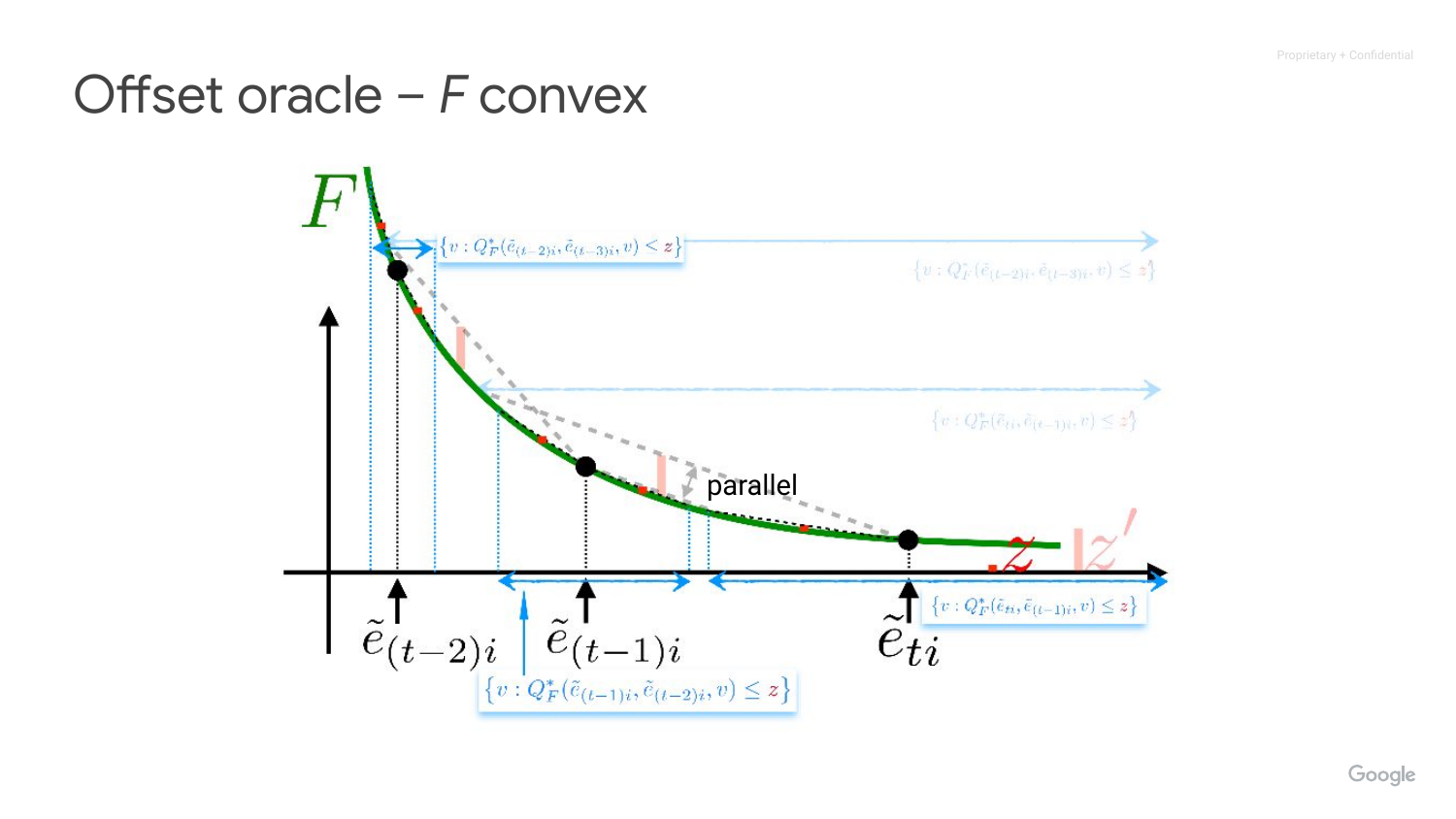} 
\end{tabular}
\end{center}
\caption{Case $F$ strictly convex, with two cases of limit OBI $z$ and $z'$ in $\mathbb{I}_{.i}(.)$. Example $i$ has $e_{ti}>0$ and $e_{(t-1)i}>0$ \eqref{defEDGE1} large enough (hence, edges with respect to weak classifiers $h_t$ and $h_{t-1}$ large enough) so that $\mathbb{I}_{ti}(z) \cap \mathbb{I}_{(t-1)i}(z) = \mathbb{I}_{(t-1)i}(z) \cap \mathbb{I}_{(t-2)i}(z) = \mathbb{I}_{ti}(z) \cap \mathbb{I}_{(t-2)i}(z) = \emptyset $. In this case, regardless of the offsets chosen by \offsetoracle, we are guaranteed that its weights satisfy $w_{(t+1)i} < w_{ti} < w_{(t-1)i}$, which follows the boosting pattern that examples receiving the right classification by weak classifiers have their weights decreasing. If however the limit OBI changes from $z$ to a larger $z'$, this is not guaranteed anymore: in this case, it may be the case that $w_{(t+1)i} > w_{ti}$.}
  \label{f-BII-2}
\end{figure}

Suppose $F$ is strictly convex and strictly decreasing as for classical convex surrogates (\textit{e.g.} logistic loss). Assuming wlog all $\alpha_. > 0$ and example $i$ has both $y_i h_{t}(\ve{x}_i) > 0$ and $y_i h_{t-1}(\ve{x}_i) > 0$, as long as $z$ is small enough, we are guaranteed that any choice $v_{t-1} \in \mathbb{I}_{(t-1)i}(z)$ and $v_t \in \mathbb{I}_{ti}(z)$ results in $0 < w_{(t+1)i} < w_{ti}$, which follows the classical boosting pattern that examples receiving the right class by weak hypotheses have their weight decreased (See Figure \ref{f-BII-2}). If $z = z'$ is large enough, then this does not hold anymore as seen from Figure \ref{f-BII-2}.

\subsection{The case of piecewise constant losses for \findalpha}\label{sec-piecewise}

\begin{figure}[H]
  \centering
  \includegraphics[trim=0bp 0bp 0bp 0bp,clip,width=1.0\columnwidth]{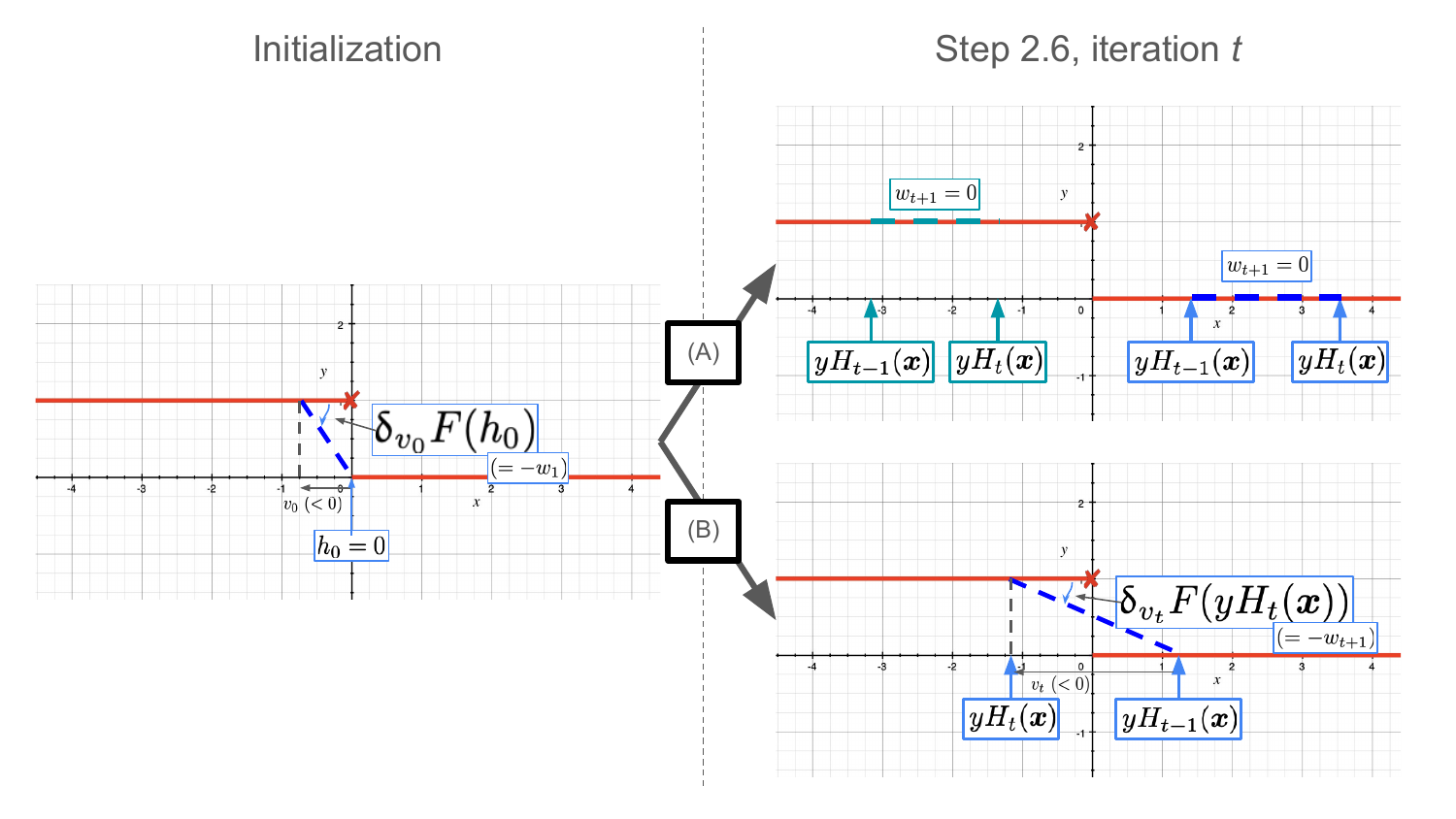}
  \vspace{-1cm}
    \caption{How our algorithm works with the 0/1 loss (in red): at the initialization stage, assuming we pick $h_0=0$ for simplicity and some $v_0 < 0$, all training examples get the same weight, given by negative the slope of the thick blue dashed line. All weights are thus $>0$. At iteration $t$ when we update the weights (Step 2.6), one of two cases can happen on some training example $(\ve{x},y)$. In \textbf{(A)}, the edge of the strong model remains the same: either both are positive (blue) or both negative (olive green) (the ordering of edges is not important). In this case, regardless of the offset, the new weight will be 0. In \textbf{(B)}, both edges have different sign (again, the ordering of edges is not important). In this case, the examples will keep non-zero weight over the next iteration. See text below for details.}
    \label{fig:codensLap}
  \end{figure}

  Figure \ref{fig:codensLap} schematizes a run of our algorithm when training loss = 0/1 loss. At the initialization, it is easy to get all examples to have non-zero weight. The weight update for example $(\ve{x},y)$ of our algorithm in Step 2.3 is (negative) the slope of a secant that crosses the loss in two points, both being in between $y H_{t-1}(\ve{x})$ and $y H_{t}(\ve{x})$. Hence, if the predicted label does not change ($\mathrm{sign}(H_{t}(\ve{x})) = \mathrm{sign}(H_{t-1}(\ve{x}))$), then the next weight ($w_{t+1}$) of the example \textit{will be zero} (Figure \ref{fig:codensLap}, case (A)). However, if the predicted label does change ($\mathrm{sign}(H_{t}(\ve{x})) \neq \mathrm{sign}(H_{t-1}(\ve{x}))$) then the example may get a non-zero weight depending on the offset chosen.

  Hence, our generic implementation of Algorithms 3 and 4 may completely fail at providing non-zero weights for the next iteration, which makes the algorithm stop in step 2.7. And even when not all weights are zero, there may be just a too small subset of those, that would break the Weak Learning Assumption for boosting compliance of the next iteration (Assumption 5.5).

\section{Supplementary material on algorithms, implementation tricks and a toy experiment}\label{sec-sup-exp}

\subsection{Algorithm and implementation of \findalpha~and how to find parameters from Theorem \ref{thALPHAW2}}\label{sec-solvealpha}

As Theorem \ref{thALPHAW2} explains, \findalpha~can easily get to not just the leveraging coefficient $\alpha_t$, but also other parameters that are necessary to implement \secantboost: $\overline{W}_{2,t}$ and $\epsilon_t$ (both used in Step 2.5). We now provide a simple pseudo code on how to implement \findalpha~amnd get, on top of it, the two other parameters. We do not seek $\pi_t$ since it is useful only in the convergence analysis. Also, our proposal implementation is optimized for complexity (because of the geometric updating of $\delta, W$ in their respective loops) but much less so for for accuracy. Algorithm \findalphaimpl~explains the overall procedure.

\begin{algorithm}[t]
\caption{\findalphaimpl($S, \ve{w}, h, M$)}\label{findalphaimpl}
\begin{algorithmic}
  \STATE  \textbf{Input} sample ${S} = \{(\bm{x}_i, y_i), i
  = 1, 2, ..., m\}$, $\ve{w} \in \mathbb{R}^m$, $h: \mathcal{X} \rightarrow \mathbb{R}$, $M \neq 0$.
  \STATE  \hfill // in our case, $\ve{w} \leftarrow \ve{w}_t; h \leftarrow h_t; M \leftarrow M_t$ (current weights, weak hypothesis and max confidence, see Step 2.3 in \secantboost~and Assumption \ref{assum1finite})
  \STATE  Step 1 : \hfill // all initializations
  \begin{eqnarray}
    \eta_{\mbox{\tiny init}} & \leftarrow & \eta(\ve{w}, h);\\
    \delta & \leftarrow & 1.0; \label{pickdelta}\\
     W_{\mbox{\tiny init}} & \leftarrow & 1.0;
  \end{eqnarray}
\STATE  Step 2 : \textbf{do} \hfill // Step 2 computes the leveraging coefficient $\alpha_t$
\STATE \hspace{1.5cm} $\alpha \leftarrow \delta \cdot \mathrm{sign}(\eta_{\mbox{\tiny init}})$;
\STATE \hspace{1.5cm} $\eta_{\mbox{\tiny new}} \leftarrow \eta(\tilde{\ve{w}}(\alpha), h)$;
\STATE \hspace{1.5cm} \textbf{if} $\left|\eta_{\mbox{\tiny new}} - \eta_{\mbox{\tiny init}}\right| < |\eta_{\mbox{\tiny init}}|$ \textbf{then} $\texttt{found$\_$alpha} \leftarrow \texttt{true}$ \textbf{else} $\delta \leftarrow \delta / 2$;
\STATE  \hspace{1.1cm} \textbf{while} $\texttt{found$\_$alpha} = \texttt{false}$;
\STATE  Step 3 : $W \leftarrow $ Left Hand Side of \eqref{boundW2} (main file) \hfill // Step 3 computes $\overline{W}_{2,t}$
  \STATE  \hfill // we can use \eqref{boundW2} (main file) because we know $\alpha$
  \STATE  \hspace{1.1cm} \textbf{if} $W =_{\mbox{\tiny machine}} 0$ \textbf{then}
  \STATE  \hfill // the LHS of \eqref{boundW2} is (machine) 0: just need to find $W$ such that \eqref{genALPHA} holds !
  \STATE \hspace{1.5cm} $W \leftarrow W_{\mbox{\tiny init}}$;
  \STATE \hspace{1.5cm} \textbf{while} $|\alpha| > |\eta_{\mbox{\tiny init}}| / (W \cdot M^2)$ \textbf{do} $W \leftarrow W/2$;
  \STATE  \hspace{1.1cm} \textbf{endif} 
  \STATE  Step 4 : $b_{\mbox{\tiny sup}} \leftarrow |\eta_{\mbox{\tiny init}}| / (W \cdot M^2)$;\hfill // Step 4 computes $\epsilon_t$
  \STATE \hspace{1.1cm} $\epsilon \leftarrow (b_{\mbox{\tiny sup}} / \alpha)-1$;
\STATE \textbf{Return} $(\alpha, W, \epsilon)$;
\end{algorithmic}
\end{algorithm}

\subsection{Algorithm and implementation of the offset oracle}\label{sec-offsetoracle}

There exists a very simple trick to get some adequate offset $v$ to satisfy \eqref{constoo} (main file), explained in Figure \ref{f-good-v}. In short, we seek the optimally bended secant and check that the OBI is no more than a required $z$. This can be done via parsing the interval $[\tilde{e}_{ti},\tilde{e}_{(t-1)i}]$ using regularly spaced values. If the OBI is too large, we can start again with a smaller step size. Algorithm \offsetoracleimpl~details the key part of the search.

\begin{figure*}[t]
\begin{center}
  \includegraphics[trim=10bp 430bp 490bp 30bp,clip,width=0.8\linewidth]{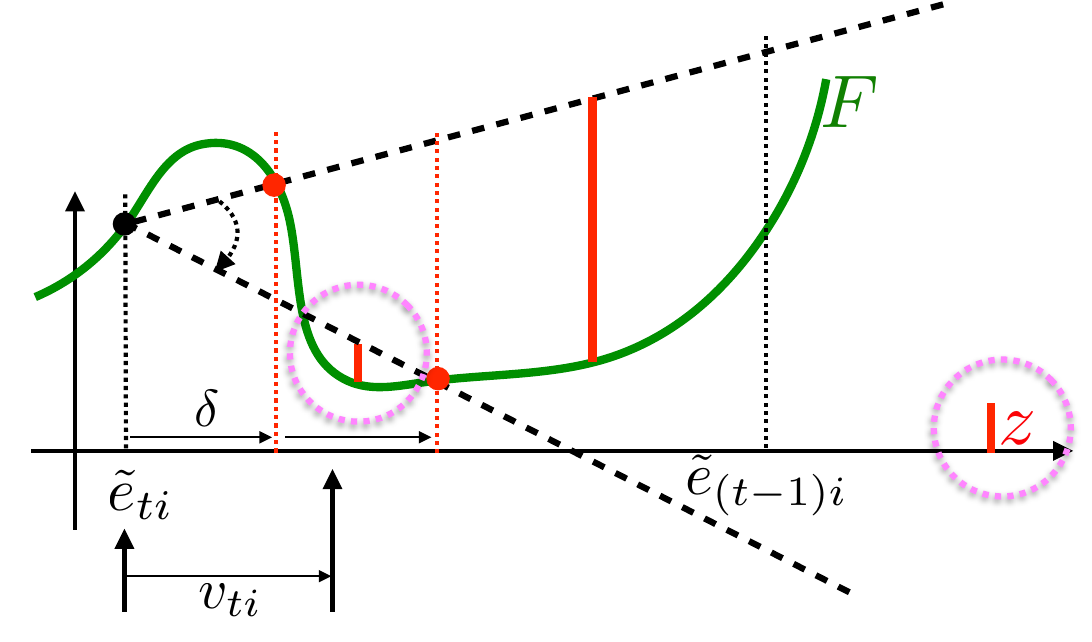}
\end{center}
\caption{How to find some $v \in \mathbb{I}_{ti}(z)$: parse the interval $[\tilde{e}_{ti},\tilde{e}_{(t-1)i}]$ with a regular step $\delta$, seek the secant with minimal slope (because $\tilde{e}_{ti}<\tilde{e}_{(t-1)i}$; otherwise, we would seek the secant with maximal slope). It is necessarily the one minimizing the OBI among all regularly spaced choices. If the OBI is still too large, decrease the step $\delta$ and start the search again.}
  \label{f-good-v}
\end{figure*}

\begin{algorithm}[t]
\caption{\offsetoracleimpl($F, \tilde{e}_t, \tilde{e}_{t-1}, z, Z$)}\label{oimpl}
\begin{algorithmic}
  \STATE  \textbf{Input} loss $F$, two last edges $\tilde{e}_t, \tilde{e}_{t-1}$, maximal OBI $z$, precision $Z$.
  \STATE  \hfill // in our case, $\tilde{e}_t \leftarrow \tilde{e}_{ti}; \tilde{e}_{t-1} \leftarrow \tilde{e}_{(t-1)i};$ (for training example index $i \in [m]$) 
  \STATE  Step 1 : \hfill // all initializations
  \begin{eqnarray}
    \delta & \leftarrow & \frac{\tilde{e}_{t-1}-\tilde{e}_{t}}{Z};\\
    z_c & \leftarrow & \tilde{e}_t + \delta;\\
    i & \leftarrow & 0;
  \end{eqnarray}
\STATE  Step 2 : \textbf{do} 
\STATE \hspace{1.5cm} $s_c \leftarrow \textsc{slope}(F,\tilde{e}_t,z_c)$;
\STATE  \hfill // returns the slope of the secant passing through $(\tilde{e}_t,F(\tilde{e}_t))$ and $(z_c,F(z_c))$
\STATE \hspace{1.5cm} \textbf{if} $(i=0)\vee( (\delta > 0) \wedge(s_c < s_*) ) \vee( (\delta < 0) \wedge(s_c > s_*) ))$ \textbf{then} $s_* \leftarrow s_c$; $z_* \leftarrow z_c$
\STATE \hspace{1.5cm} \textbf{endif}
\STATE \hspace{1.5cm} $z_c \leftarrow z_c + \delta$;
\STATE \hspace{1.5cm} $i \leftarrow i+1$;
\STATE  \hspace{1.1cm} \textbf{while} $(z_c - \tilde{e}_t)\cdot(z_c - \tilde{e}_{t-1}) < 0$; \hfill // checks that $z_c$ is still in the interval
\STATE \textbf{Return} $z_* - \tilde{e}_t$;\hfill // this is the offset $v$
\end{algorithmic}
\end{algorithm}

\subsection{A toy experiments}\label{sec-toyexp}

We provide here a few toy experiments using \secantboost. These are just meant to display that a simple implementation of the algorithm, following the blueprints given above, can indeed manage to optimize various losses. These are not meant to explain how to pick the best hyperparameters (\textit{e.g.} \eqref{pickdelta}) nor how to choose the best loss given a domain, a problem that is far beyond the scope of our paper.

In this implementation, the weak learner learns decision trees and we minimize Matushita's loss at the leaves of decision trees to learn fixed size trees, see \cite{kmOTj} for the criterion and induction scheme, which is standard for decision trees. \secantboost~is implemented as is given in the paper, and so are the implementation of \findalpha~and the offset oracle provided above. We have made no optimization whatsoever, with one exception: when numerical approximation errors lead to an offset that is machine 0, we replace it by a small random value to prevent the use of derivatives in \secantboost.

  \setlength\tabcolsep{5pt}
\begin{figure}[t]
\begin{center}
\begin{tabular}{c?c} \Xhline{2pt}
\includegraphics[trim=0bp 0bp 0bp 0bp,clip,width=0.33\linewidth]{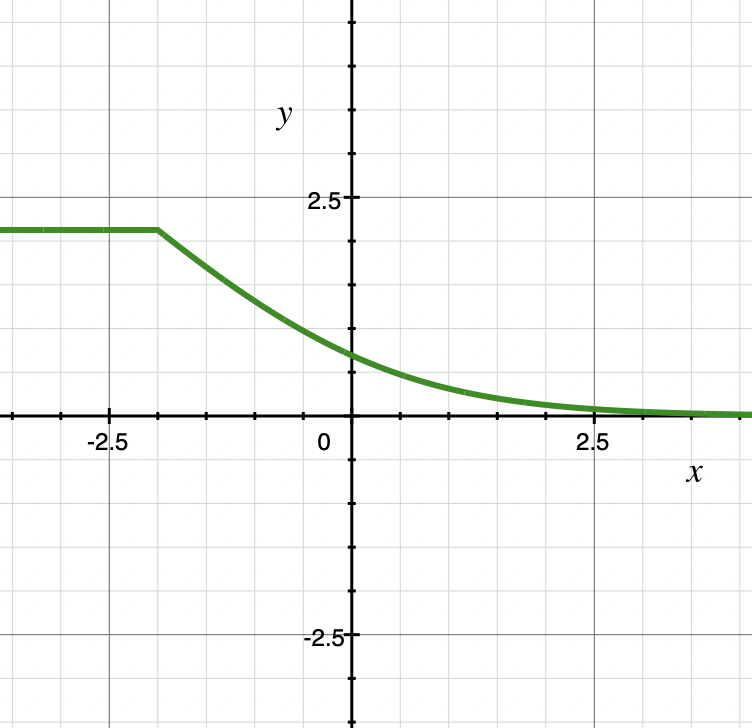} & \includegraphics[trim=0bp 0bp 0bp 0bp,clip,width=0.33\linewidth]{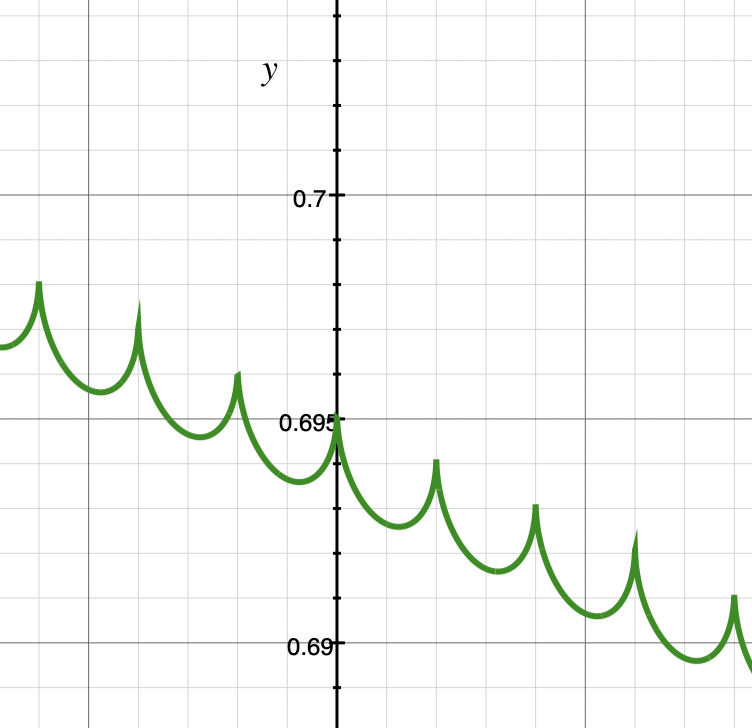} \\
  Clipped logistic loss, $q=-2$ & Spring loss, $Q=500$ \\\Xhline{2pt}
\end{tabular}
\end{center}
\caption{Crops of the two losses whose optimization has been experimentally tested with \secantboost, in addition to the logistic loss. See text for details.}
  \label{f-losses-impl}
\end{figure}

We have investigated three losses. The first is the well known logistic loss:
\begin{eqnarray}
F_{\loglabel}(z) & \defeq & \log(1+\exp(-z)).
\end{eqnarray}
The other two are tweaks of the logistic loss. We have investigated a clipped version of the logistic loss,
\begin{eqnarray}
F_{\cllabel,q}(z) & \defeq & \min\{\log(1+\exp(-z)), \log(1+\exp(-q))\}\label{defCLL},
\end{eqnarray}
with $q \in \mathbb{R}$, which clips the logistic loss above a certain value. This loss is non-convex and non-differentiable, but it is Lipschitz. We have also investigated a generalization of the spring loss (main file):
\begin{eqnarray}
F_{\sllabel,Q}(z) & \defeq & \log(1+\exp(-z)) + \frac{1 - \sqrt{1-4\left(z_Q-[z_Q]\right)^2}}{Q}\label{defBWL2},
\end{eqnarray}
with $z_Q \defeq Qz - 1/2$ ($[.]$ is the closest integer), which adds to the logistic loss regularly spaced peaks of variable width. This loss is non-convex, non-differentiable, non-Lipschitz. Figure \ref{f-losses-impl} provides a crop of the clipped logistic loss and spring loss we have used in our test. Notice the ``hardness'' that the spring loss intuitively represents for ML.

We provide an experiment on public domain UCI \domainname{tictactoe} \cite{dgUM} (using a 10-fold stratified cross-validation to estimate test errors). In addition to the three losses, we have crossed them with several other variables: the size of the trees (either they have a single internal node = stumps or at most 20 nodes) and, to give one example of how changing a (key) hyperparameter can change the result, we have tested for a scale of changes on the initial value of $\delta$ in \eqref{pickdelta}. Finally, we have crossed all these variables with the existence of symmetric label noise in the training data, following the setup of \cite{lsRC,mnwRC}. We flip each label in the training sample with probability $\eta$. Table \ref{f-toy-exp} summarizes the results obtained. One can see that \secantboost~manages to optimize all losses in pretty much all settings, with an eventual early stopping required for the spring loss if $\delta$ is too large. Note that the best initial value for $\delta$ depends on the loss optimized in these experiments: for $\delta=0.1$, test error from the spring loss decreases much faster than for the other losses, yet we remind that the spring loss is just the logistic loss plus regularly spaced peaks. This could signal interesting avenues for the best possible implementation of \secantboost, or a further understanding of the best formal ways to fix those paramaters, all of which are out of the scope of this paper.

  \setlength\tabcolsep{0pt}
\begin{figure}[t]
\begin{center}
  \resizebox{\linewidth}{!}{\begin{tabular}{c?c|c?c|c} \Xhline{2pt}
    & \multicolumn{2}{c?}{$\delta = 0.1$}& \multicolumn{2}{c}{$\delta = 1.0$}\\
 $\eta$ &  Stumps & Max size = 20&  Stumps & Max size = 20\\ \hline
 $0\%$ & \includegraphics[trim=0bp 0bp 0bp 0bp,clip,width=0.33\linewidth]{{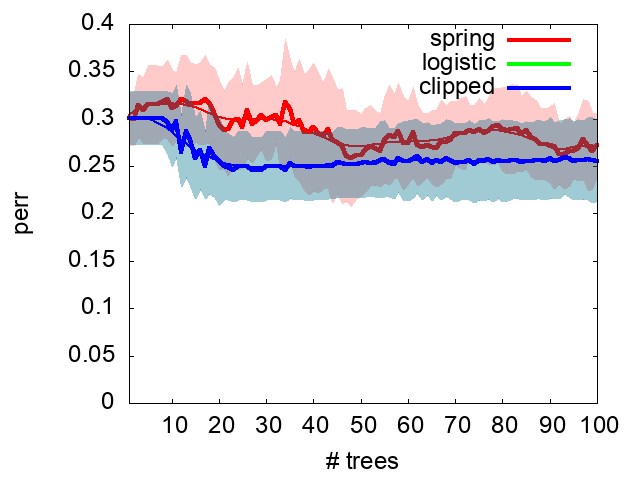}} & \includegraphics[trim=0bp 0bp 0bp 0bp,clip,width=0.33\linewidth]{{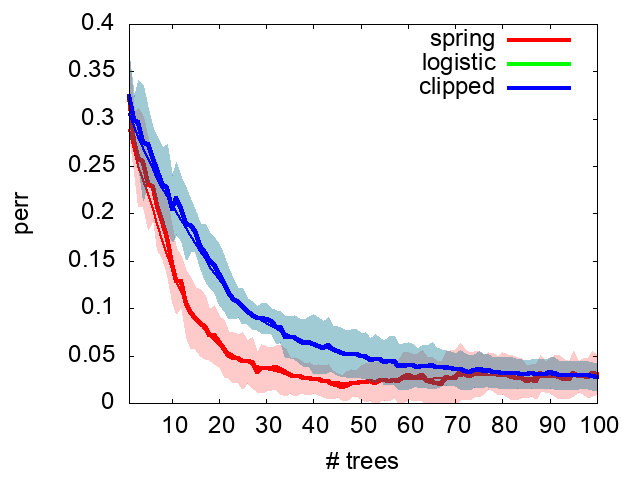}}  & \includegraphics[trim=0bp 0bp 0bp 0bp,clip,width=0.33\linewidth]{{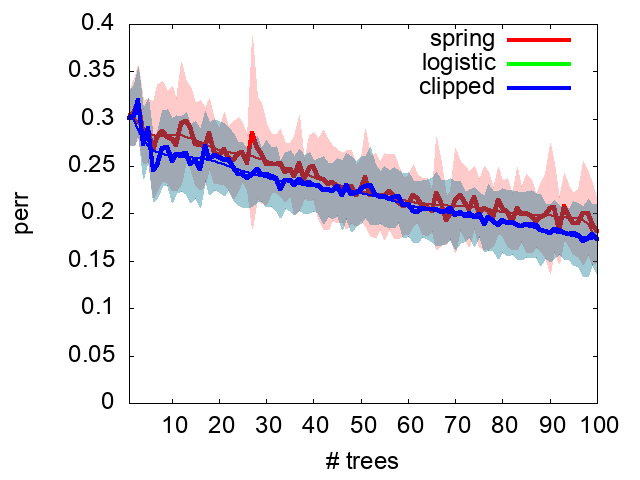}} & \includegraphics[trim=0bp 0bp 0bp 0bp,clip,width=0.33\linewidth]{{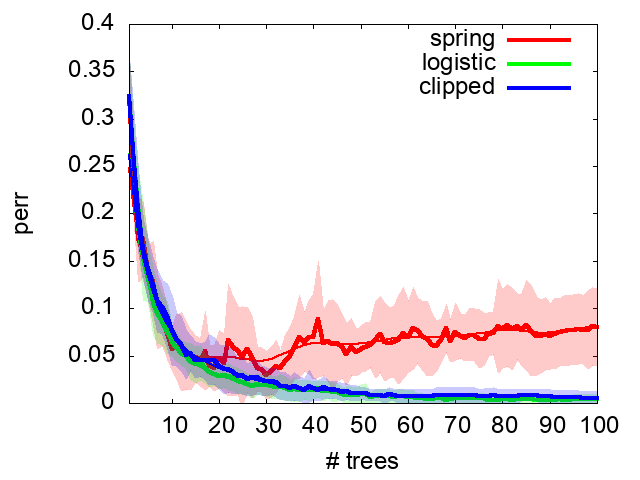}}  \\ \Xhline{2pt}
 $5\%$ & \includegraphics[trim=0bp 0bp 0bp 0bp,clip,width=0.33\linewidth]{{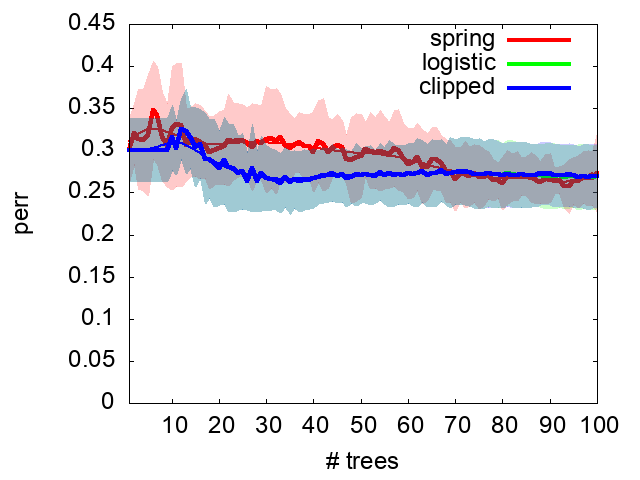}} & \includegraphics[trim=0bp 0bp 0bp 0bp,clip,width=0.33\linewidth]{{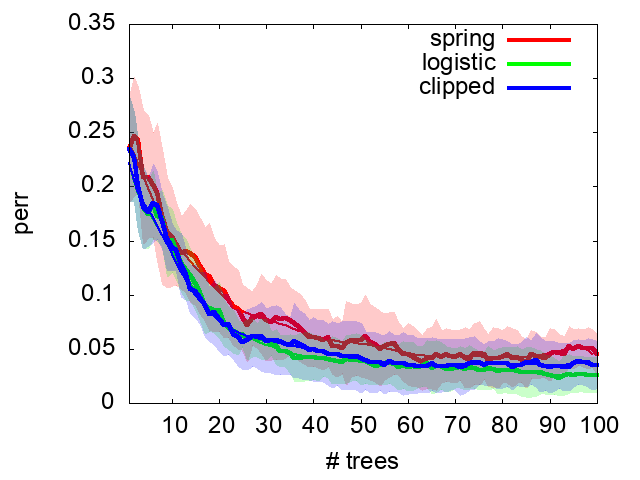}} & \includegraphics[trim=0bp 0bp 0bp 0bp,clip,width=0.33\linewidth]{{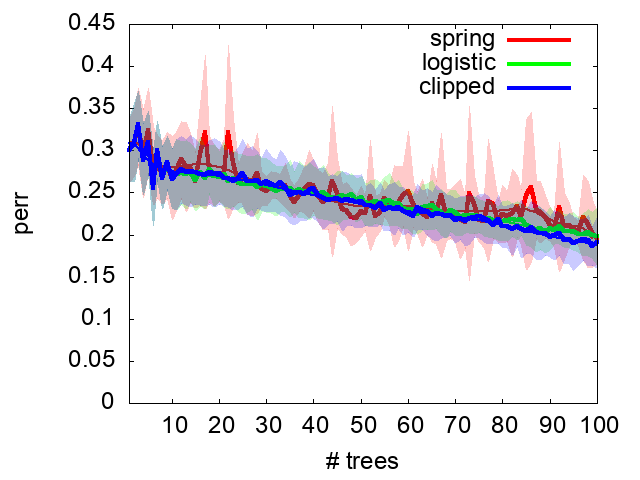}} & \includegraphics[trim=0bp 0bp 0bp 0bp,clip,width=0.33\linewidth]{{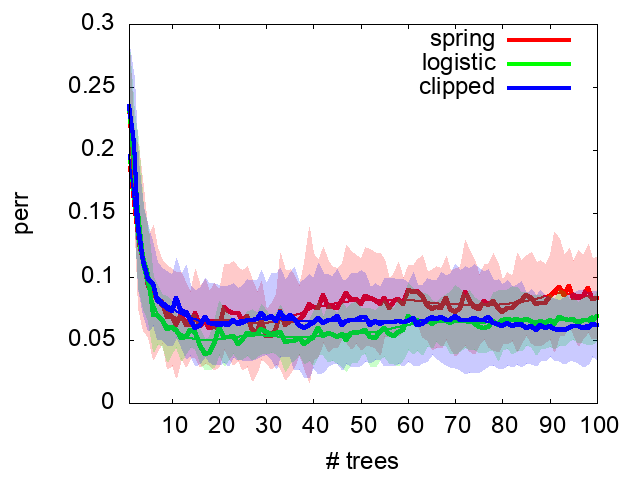}}  \\ \Xhline{2pt}
 $10\%$ & \includegraphics[trim=0bp 0bp 0bp 0bp,clip,width=0.33\linewidth]{{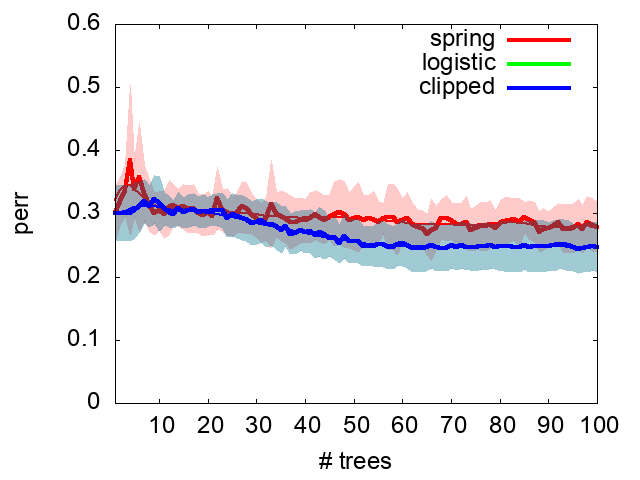}} & \includegraphics[trim=0bp 0bp 0bp 0bp,clip,width=0.33\linewidth]{{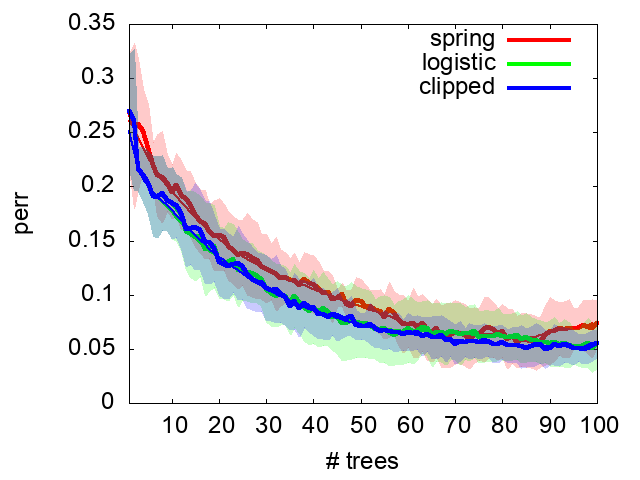}} & \includegraphics[trim=0bp 0bp 0bp 0bp,clip,width=0.33\linewidth]{{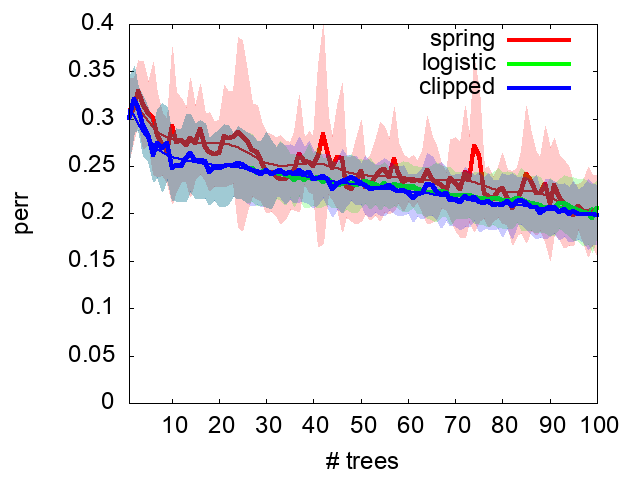}} & \includegraphics[trim=0bp 0bp 0bp 0bp,clip,width=0.33\linewidth]{{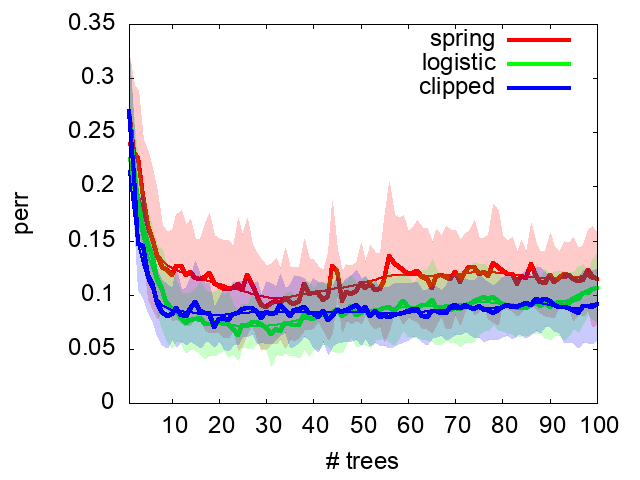}}  \\ \Xhline{2pt}
 $20\%$ & \includegraphics[trim=0bp 0bp 0bp 0bp,clip,width=0.33\linewidth]{{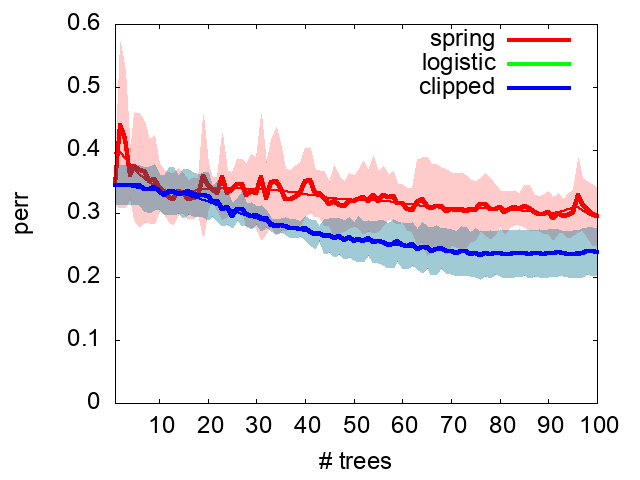}} & \includegraphics[trim=0bp 0bp 0bp 0bp,clip,width=0.33\linewidth]{{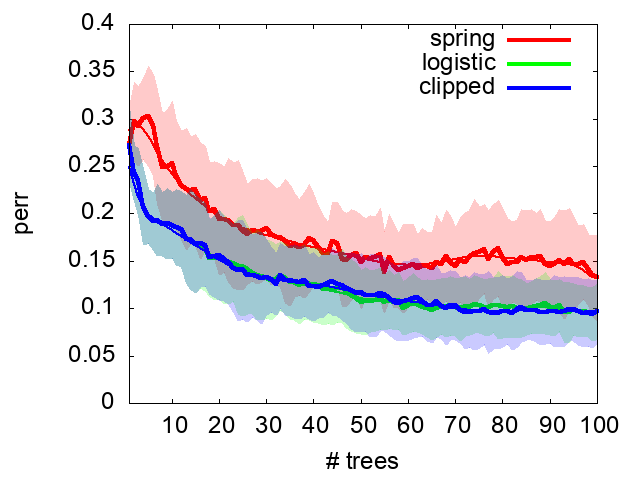}} & \includegraphics[trim=0bp 0bp 0bp 0bp,clip,width=0.33\linewidth]{{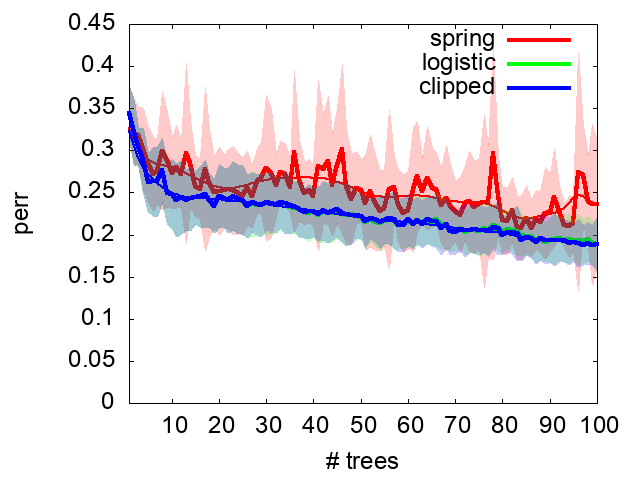}} & \includegraphics[trim=0bp 0bp 0bp 0bp,clip,width=0.33\linewidth]{{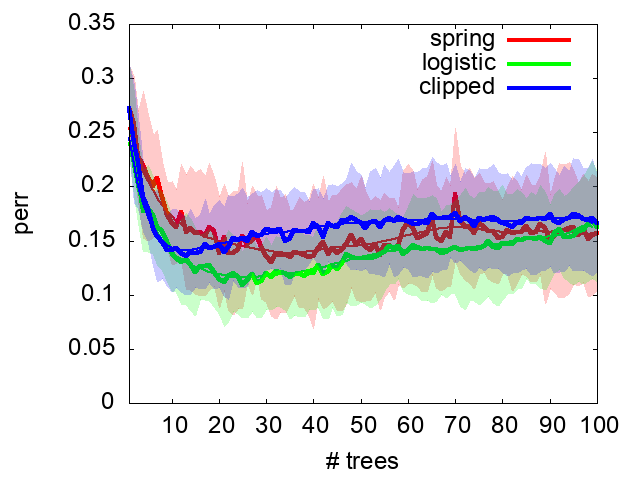}}  \\ \Xhline{2pt}
\end{tabular}}
\end{center}
\caption{Experiments on UCI \domainname{tictactoe} showing estimated test errors after minimizing each of the three losses we consider, with varying training noise level $\eta$, max tree size and initial hyperparameter $\delta$ value in \eqref{pickdelta}. See text.}
  \label{f-toy-exp}
\end{figure}


\clearpage
\pagebreak

\section*{NeurIPS Paper Checklist}

\begin{enumerate}

\item {\bf Claims}
    \item[] Question: Do the main claims made in the abstract and introduction accurately reflect the paper's contributions and scope?
    \item[] Answer: \answerYes{}
    \item[] Justification: Our paper is a theory paper: all claims are properly formalized and used.
    \item[] Guidelines:
    \begin{itemize}
        \item The answer NA means that the abstract and introduction do not include the claims made in the paper.
        \item The abstract and/or introduction should clearly state the claims made, including the contributions made in the paper and important assumptions and limitations. A No or NA answer to this question will not be perceived well by the reviewers. 
        \item The claims made should match theoretical and experimental results, and reflect how much the results can be expected to generalize to other settings. 
        \item It is fine to include aspirational goals as motivation as long as it is clear that these goals are not attained by the paper. 
    \end{itemize}

\item {\bf Limitations}
    \item[] Question: Does the paper discuss the limitations of the work performed by the authors?
    \item[] Answer: \answerYes{}
    \item[] Justification: The discussion section is devoted to limitations and improvement of our results
    \item[] Guidelines:
    \begin{itemize}
        \item The answer NA means that the paper has no limitation while the answer No means that the paper has limitations, but those are not discussed in the paper. 
        \item The authors are encouraged to create a separate "Limitations" section in their paper.
        \item The paper should point out any strong assumptions and how robust the results are to violations of these assumptions (e.g., independence assumptions, noiseless settings, model well-specification, asymptotic approximations only holding locally). The authors should reflect on how these assumptions might be violated in practice and what the implications would be.
        \item The authors should reflect on the scope of the claims made, e.g., if the approach was only tested on a few datasets or with a few runs. In general, empirical results often depend on implicit assumptions, which should be articulated.
        \item The authors should reflect on the factors that influence the performance of the approach. For example, a facial recognition algorithm may perform poorly when image resolution is low or images are taken in low lighting. Or a speech-to-text system might not be used reliably to provide closed captions for online lectures because it fails to handle technical jargon.
        \item The authors should discuss the computational efficiency of the proposed algorithms and how they scale with dataset size.
        \item If applicable, the authors should discuss possible limitations of their approach to address problems of privacy and fairness.
        \item While the authors might fear that complete honesty about limitations might be used by reviewers as grounds for rejection, a worse outcome might be that reviewers discover limitations that aren't acknowledged in the paper. The authors should use their best judgment and recognize that individual actions in favor of transparency play an important role in developing norms that preserve the integrity of the community. Reviewers will be specifically instructed to not penalize honesty concerning limitations.
    \end{itemize}

\item {\bf Theory Assumptions and Proofs}
    \item[] Question: For each theoretical result, does the paper provide the full set of assumptions and a complete (and correct) proof?
    \item[] Answer: \answerYes{}
    \item[] Justification: Our paper is a theory paper: all assumptions, statements and proofs provided.
    \item[] Guidelines:
    \begin{itemize}
        \item The answer NA means that the paper does not include theoretical results. 
        \item All the theorems, formulas, and proofs in the paper should be numbered and cross-referenced.
        \item All assumptions should be clearly stated or referenced in the statement of any theorems.
        \item The proofs can either appear in the main paper or the supplemental material, but if they appear in the supplemental material, the authors are encouraged to provide a short proof sketch to provide intuition. 
        \item Inversely, any informal proof provided in the core of the paper should be complemented by formal proofs provided in appendix or supplemental material.
        \item Theorems and Lemmas that the proof relies upon should be properly referenced. 
    \end{itemize}

    \item {\bf Experimental Result Reproducibility}
    \item[] Question: Does the paper fully disclose all the information needed to reproduce the main experimental results of the paper to the extent that it affects the main claims and/or conclusions of the paper (regardless of whether the code and data are provided or not)?
    \item[] Answer: \answerYes{}
    \item[] Justification: Though our paper is a theory paper, we have included in the supplement a detailed statement of all related algorithms and a toy experiment of a simple implementation of these algorithms showcasing a simple run on a public UCI domain.
    \item[] Guidelines:
    \begin{itemize}
        \item The answer NA means that the paper does not include experiments.
        \item If the paper includes experiments, a No answer to this question will not be perceived well by the reviewers: Making the paper reproducible is important, regardless of whether the code and data are provided or not.
        \item If the contribution is a dataset and/or model, the authors should describe the steps taken to make their results reproducible or verifiable. 
        \item Depending on the contribution, reproducibility can be accomplished in various ways. For example, if the contribution is a novel architecture, describing the architecture fully might suffice, or if the contribution is a specific model and empirical evaluation, it may be necessary to either make it possible for others to replicate the model with the same dataset, or provide access to the model. In general. releasing code and data is often one good way to accomplish this, but reproducibility can also be provided via detailed instructions for how to replicate the results, access to a hosted model (e.g., in the case of a large language model), releasing of a model checkpoint, or other means that are appropriate to the research performed.
        \item While NeurIPS does not require releasing code, the conference does require all submissions to provide some reasonable avenue for reproducibility, which may depend on the nature of the contribution. For example
        \begin{enumerate}
            \item If the contribution is primarily a new algorithm, the paper should make it clear how to reproduce that algorithm.
            \item If the contribution is primarily a new model architecture, the paper should describe the architecture clearly and fully.
            \item If the contribution is a new model (e.g., a large language model), then there should either be a way to access this model for reproducing the results or a way to reproduce the model (e.g., with an open-source dataset or instructions for how to construct the dataset).
            \item We recognize that reproducibility may be tricky in some cases, in which case authors are welcome to describe the particular way they provide for reproducibility. In the case of closed-source models, it may be that access to the model is limited in some way (e.g., to registered users), but it should be possible for other researchers to have some path to reproducing or verifying the results.
        \end{enumerate}
    \end{itemize}

\item {\bf Open access to data and code}
    \item[] Question: Does the paper provide open access to the data and code, with sufficient instructions to faithfully reproduce the main experimental results, as described in supplemental material?
    \item[] Answer: \answerNo{}
    \item[] Justification: Our paper is a theory paper. All algorithms we introduce are either in the main file or the appendix.
    \item[] Guidelines:
    \begin{itemize}
        \item The answer NA means that paper does not include experiments requiring code.
        \item Please see the NeurIPS code and data submission guidelines (\url{https://nips.cc/public/guides/CodeSubmissionPolicy}) for more details.
        \item While we encourage the release of code and data, we understand that this might not be possible, so “No” is an acceptable answer. Papers cannot be rejected simply for not including code, unless this is central to the contribution (e.g., for a new open-source benchmark).
        \item The instructions should contain the exact command and environment needed to run to reproduce the results. See the NeurIPS code and data submission guidelines (\url{https://nips.cc/public/guides/CodeSubmissionPolicy}) for more details.
        \item The authors should provide instructions on data access and preparation, including how to access the raw data, preprocessed data, intermediate data, and generated data, etc.
        \item The authors should provide scripts to reproduce all experimental results for the new proposed method and baselines. If only a subset of experiments are reproducible, they should state which ones are omitted from the script and why.
        \item At submission time, to preserve anonymity, the authors should release anonymized versions (if applicable).
        \item Providing as much information as possible in supplemental material (appended to the paper) is recommended, but including URLs to data and code is permitted.
    \end{itemize}

\item {\bf Experimental Setting/Details}
    \item[] Question: Does the paper specify all the training and test details (e.g., data splits, hyperparameters, how they were chosen, type of optimizer, etc.) necessary to understand the results?
    \item[] Answer: \answerNA{}
    \item[] Justification: Our paper is a theory paper.
    \item[] Guidelines:
    \begin{itemize}
        \item The answer NA means that the paper does not include experiments.
        \item The experimental setting should be presented in the core of the paper to a level of detail that is necessary to appreciate the results and make sense of them.
        \item The full details can be provided either with the code, in appendix, or as supplemental material.
    \end{itemize}

\item {\bf Experiment Statistical Significance}
    \item[] Question: Does the paper report error bars suitably and correctly defined or other appropriate information about the statistical significance of the experiments?
    \item[] Answer: \answerNA{}
    \item[] Justification: Our paper is a theory paper.
    \item[] Guidelines:
    \begin{itemize}
        \item The answer NA means that the paper does not include experiments.
        \item The authors should answer "Yes" if the results are accompanied by error bars, confidence intervals, or statistical significance tests, at least for the experiments that support the main claims of the paper.
        \item The factors of variability that the error bars are capturing should be clearly stated (for example, train/test split, initialization, random drawing of some parameter, or overall run with given experimental conditions).
        \item The method for calculating the error bars should be explained (closed form formula, call to a library function, bootstrap, etc.)
        \item The assumptions made should be given (e.g., Normally distributed errors).
        \item It should be clear whether the error bar is the standard deviation or the standard error of the mean.
        \item It is OK to report 1-sigma error bars, but one should state it. The authors should preferably report a 2-sigma error bar than state that they have a 96\% CI, if the hypothesis of Normality of errors is not verified.
        \item For asymmetric distributions, the authors should be careful not to show in tables or figures symmetric error bars that would yield results that are out of range (e.g. negative error rates).
        \item If error bars are reported in tables or plots, The authors should explain in the text how they were calculated and reference the corresponding figures or tables in the text.
    \end{itemize}

\item {\bf Experiments Compute Resources}
    \item[] Question: For each experiment, does the paper provide sufficient information on the computer resources (type of compute workers, memory, time of execution) needed to reproduce the experiments?
    \item[] Answer: \answerNA{}.
    \item[] Justification: Our paper is a theory paper.
    \item[] Guidelines:
    \begin{itemize}
        \item The answer NA means that the paper does not include experiments.
        \item The paper should indicate the type of compute workers CPU or GPU, internal cluster, or cloud provider, including relevant memory and storage.
        \item The paper should provide the amount of compute required for each of the individual experimental runs as well as estimate the total compute. 
        \item The paper should disclose whether the full research project required more compute than the experiments reported in the paper (e.g., preliminary or failed experiments that didn't make it into the paper). 
    \end{itemize}
    
\item {\bf Code Of Ethics}
    \item[] Question: Does the research conducted in the paper conform, in every respect, with the NeurIPS Code of Ethics \url{https://neurips.cc/public/EthicsGuidelines}?
    \item[] Answer: \answerYes{}
    \item[] Justification: The research of the paper follows the code of ethics.
    \item[] Guidelines:
    \begin{itemize}
        \item The answer NA means that the authors have not reviewed the NeurIPS Code of Ethics.
        \item If the authors answer No, they should explain the special circumstances that require a deviation from the Code of Ethics.
        \item The authors should make sure to preserve anonymity (e.g., if there is a special consideration due to laws or regulations in their jurisdiction).
    \end{itemize}

\item {\bf Broader Impacts}
    \item[] Question: Does the paper discuss both potential positive societal impacts and negative societal impacts of the work performed?
    \item[] Answer: \answerNA{}
    \item[] Justification: Our paper is a theory paper.
    \item[] Guidelines:
    \begin{itemize}
        \item The answer NA means that there is no societal impact of the work performed.
        \item If the authors answer NA or No, they should explain why their work has no societal impact or why the paper does not address societal impact.
        \item Examples of negative societal impacts include potential malicious or unintended uses (e.g., disinformation, generating fake profiles, surveillance), fairness considerations (e.g., deployment of technologies that could make decisions that unfairly impact specific groups), privacy considerations, and security considerations.
        \item The conference expects that many papers will be foundational research and not tied to particular applications, let alone deployments. However, if there is a direct path to any negative applications, the authors should point it out. For example, it is legitimate to point out that an improvement in the quality of generative models could be used to generate deepfakes for disinformation. On the other hand, it is not needed to point out that a generic algorithm for optimizing neural networks could enable people to train models that generate Deepfakes faster.
        \item The authors should consider possible harms that could arise when the technology is being used as intended and functioning correctly, harms that could arise when the technology is being used as intended but gives incorrect results, and harms following from (intentional or unintentional) misuse of the technology.
        \item If there are negative societal impacts, the authors could also discuss possible mitigation strategies (e.g., gated release of models, providing defenses in addition to attacks, mechanisms for monitoring misuse, mechanisms to monitor how a system learns from feedback over time, improving the efficiency and accessibility of ML).
    \end{itemize}
    
\item {\bf Safeguards}
    \item[] Question: Does the paper describe safeguards that have been put in place for responsible release of data or models that have a high risk for misuse (e.g., pretrained language models, image generators, or scraped datasets)?
    \item[] Answer: \answerNA{}
    \item[] Justification: No release of data or models.
    \item[] Guidelines:
    \begin{itemize}
        \item The answer NA means that the paper poses no such risks.
        \item Released models that have a high risk for misuse or dual-use should be released with necessary safeguards to allow for controlled use of the model, for example by requiring that users adhere to usage guidelines or restrictions to access the model or implementing safety filters. 
        \item Datasets that have been scraped from the Internet could pose safety risks. The authors should describe how they avoided releasing unsafe images.
        \item We recognize that providing effective safeguards is challenging, and many papers do not require this, but we encourage authors to take this into account and make a best faith effort.
    \end{itemize}

\item {\bf Licenses for existing assets}
    \item[] Question: Are the creators or original owners of assets (e.g., code, data, models), used in the paper, properly credited and are the license and terms of use explicitly mentioned and properly respected?
    \item[] Answer: \answerNA{}
    \item[] Justification: no outside code, data or models used requiring licensing.
    \item[] Guidelines:
    \begin{itemize}
        \item The answer NA means that the paper does not use existing assets.
        \item The authors should cite the original paper that produced the code package or dataset.
        \item The authors should state which version of the asset is used and, if possible, include a URL.
        \item The name of the license (e.g., CC-BY 4.0) should be included for each asset.
        \item For scraped data from a particular source (e.g., website), the copyright and terms of service of that source should be provided.
        \item If assets are released, the license, copyright information, and terms of use in the package should be provided. For popular datasets, \url{paperswithcode.com/datasets} has curated licenses for some datasets. Their licensing guide can help determine the license of a dataset.
        \item For existing datasets that are re-packaged, both the original license and the license of the derived asset (if it has changed) should be provided.
        \item If this information is not available online, the authors are encouraged to reach out to the asset's creators.
    \end{itemize}

\item {\bf New Assets}
    \item[] Question: Are new assets introduced in the paper well documented and is the documentation provided alongside the assets?
    \item[] Answer: \answerNA{}
    \item[] Justification: No new assets provided.
    \item[] Guidelines:
    \begin{itemize}
        \item The answer NA means that the paper does not release new assets.
        \item Researchers should communicate the details of the dataset/code/model as part of their submissions via structured templates. This includes details about training, license, limitations, etc. 
        \item The paper should discuss whether and how consent was obtained from people whose asset is used.
        \item At submission time, remember to anonymize your assets (if applicable). You can either create an anonymized URL or include an anonymized zip file.
    \end{itemize}

\item {\bf Crowdsourcing and Research with Human Subjects}
    \item[] Question: For crowdsourcing experiments and research with human subjects, does the paper include the full text of instructions given to participants and screenshots, if applicable, as well as details about compensation (if any)? 
    \item[] Answer: \answerNA{}
    \item[] Justification: No crowdsourcing or research with human subjects.
    \item[] Guidelines:
    \begin{itemize}
        \item The answer NA means that the paper does not involve crowdsourcing nor research with human subjects.
        \item Including this information in the supplemental material is fine, but if the main contribution of the paper involves human subjects, then as much detail as possible should be included in the main paper. 
        \item According to the NeurIPS Code of Ethics, workers involved in data collection, curation, or other labor should be paid at least the minimum wage in the country of the data collector. 
    \end{itemize}

\item {\bf Institutional Review Board (IRB) Approvals or Equivalent for Research with Human Subjects}
    \item[] Question: Does the paper describe potential risks incurred by study participants, whether such risks were disclosed to the subjects, and whether Institutional Review Board (IRB) approvals (or an equivalent approval/review based on the requirements of your country or institution) were obtained?
    \item[] Answer: \answerNA{}
    \item[] Justification: No research with human subjects.
    \item[] Guidelines:
    \begin{itemize}
        \item The answer NA means that the paper does not involve crowdsourcing nor research with human subjects.
        \item Depending on the country in which research is conducted, IRB approval (or equivalent) may be required for any human subjects research. If you obtained IRB approval, you should clearly state this in the paper. 
        \item We recognize that the procedures for this may vary significantly between institutions and locations, and we expect authors to adhere to the NeurIPS Code of Ethics and the guidelines for their institution. 
        \item For initial submissions, do not include any information that would break anonymity (if applicable), such as the institution conducting the review.
    \end{itemize}

\end{enumerate}

\end{document}